\documentclass{article}  
\usepackage{amsmath}
\usepackage{amssymb}
\usepackage{multirow}
\usepackage{soul}
\usepackage{tcolorbox}
\usepackage{wrapfig}
\tcbuselibrary{listings,breakable}
\usepackage[english]{babel}
\usepackage{natbib}
\usepackage{tikz}
\usetikzlibrary{bayesnet}
\usetikzlibrary{arrows}
\usepackage{color}

\usepackage[letterpaper,top=2cm,bottom=2cm,left=3.75cm,right=3.75cm,marginparwidth=1.75cm]{geometry}

\usepackage{amsmath}
\usepackage{graphicx}
\usepackage[colorlinks=true, allcolors=blue]{hyperref}

\usepackage{amsmath,amsfonts,bm}

\usepackage{graphicx}
\usepackage{caption}
\usepackage{subcaption}
\usepackage{booktabs} 
\usepackage{array}
\usepackage{makecell}

\DeclareMathOperator{\KL}{KL}

\usepackage{microtype}
\usepackage[export]{adjustbox}
\usepackage{multirow}
\usepackage{graphicx}
\usepackage{xcolor}
\usepackage{booktabs} 
\usepackage{url}            

\usepackage{amsfonts}       
\usepackage{nicefrac}       
\usepackage{microtype}      
\usepackage{bm}
\usepackage{diagbox}
\usepackage{color} 
\usepackage{amssymb}
\usepackage{amsmath}
\usepackage{amsfonts}
\usepackage{mathtools}
\usepackage{lineno}
\usepackage[toc,page,header]{appendix}
\usepackage{minitoc}

\newenvironment{sproof}{%
  \proof}{\endproof}

\newcommand{\Userpostexp}{R_{\mathrm{usr}}(\tilde Q)}
\newcommand{\Userpostemp}{\widehat R_{\mathrm{usr}}^{(M,N)}(\tilde Q)}

\newcommand{\sysexppre}{L^{\rm sys}_{P_{\mathrm{task}}(\cdot\mid\theta)}(G)}

\usepackage{mathrsfs}
\usepackage{amsthm}
\usepackage{amsmath}
\usepackage{amssymb, bm}
\usepackage{comment}
\usepackage{multirow}
\usepackage{tabularx}
\usepackage{multicol}
\usepackage[normalem]{ulem}
\usepackage{multicol, multirow}
\usepackage[ruled,vlined,algo2e]{algorithm2e}

\newcommand{\mc}{\mathcal}

\usepackage{algorithmic}
\usepackage{algorithm}

\newtheorem{theorem}{Theorem}

\newtheorem{assumption}{Assumption}
\newtheorem{condition}{Condition}
\newtheorem{proposition}{Proposition}
\newtheorem{corollary}{Corollary}
\newtheorem{lemma}{Lemma}

\newcommand{\Var}{\text{Var}}

\newcommand{\E}{\mathbb{E}}

\newcommand{\cX}{{\mathcal X}}

\SetKwInput{KwInput}{Input}                
\SetKwInput{KwOutput}{Output}

\providecommand{\customgenericname}{}
\newcommand{\newcustomtheorem}[2]{%
  \newenvironment{#1}[1]
  {%
  \renewcommand\customgenericname{#2}%
  \renewcommand\theinnercustomgeneric{##1}%
  \innercustomgeneric
  }
  {\endinnercustomgeneric}
}
\newcustomtheorem{customtheorem}{Theorem}

\usepackage{cleveref}
\usepackage{textcomp,    
            xspace}

\newcommand\blfootnote[1]{%
\begingroup
\renewcommand\thefootnote{}\footnote{#1}%
\addtocounter{footnote}{-1}%
\endgroup
}

\title{On the Limits of Prompt-Conditioned Language Models as General-Purpose Solvers}
\author{David Mguni$^{1}$, Julian Ma$^{2}$, Jun Wang$^{*2}$
\\ $^{1}$Queen Mary University London $^{2}$University College London}
\date{}
\begin{document}
\maketitle

\begin{abstract}
\blfootnote{$^*$Corresponding author  \textlangle junwang@cs.ucl.ac.uk\textrangle. }
Large Language Models (LLMs) are frequently portrayed as general-purpose solvers capable of solving arbitrary tasks. We argue that this view overlooks a fundamental constraint: language is a compressed and capacity-limited interface for conveying task information. Modelling User--System interaction as a bilevel \emph{cheap-talk} game, we analyse how latent tasks are encoded into prompts and reinterpreted under alignment and safety constraints. We introduce a conceptual decomposition separating task inference from execution and derive PAC-Bayes bounds that distinguish finite-sample estimation error from irreducible structural limitations. Our first main result establishes an \emph{expressivity floor}: language acts as a capacity-limited communication channel, and whenever the informational complexity of a task family exceeds the capacity of that channel, distinct tasks become unavoidably indistinguishable to the Solver, inducing a strictly positive error floor that cannot be eliminated by additional data, optimisation, or model scaling alone. We then establish an \emph{objective-misalignment floor}: when alignment constraints restrict the admissible output set, the User-ideal distribution may lie outside the feasible class, inducing an irreducible distortion. Together, these results yield a formal negative conclusion: prompt-conditioned LLMs are not universal problem solvers through prompting alone, as there exist task families for which correct behaviour is provably unattainable even in the infinite-data regime. More broadly, our analysis shows the limits of prompt-based generalisation arise from information-constrained communication and alignment-constrained objectives. This suggests that interfaces beyond natural language, including multimodal observations and, external memory, may reduce the inherent LLM limitations by increasing the task-relevant information available to the System.
\end{abstract}

\section{Introduction}


Large language models (LLMs) have rapidly become a general interface for computation, reasoning, and decision support, and are increasingly embedded across many industries~\cite{kaddour2023challenges,thirunavukarasu2023large, yuan2022wordcraft}. Despite the rapid pace of integration within various areas of technology, many of the core aspects that govern their behaviours and wider usages have not yet been understood \cite{frieder2023mathematical,tamkin2021understanding}. This leaves a deep gap in our understanding of LLMs and their capacity to tackle applications beyond traditional settings, for example, as general purpose learners. 

In standard training paradigms, LLMs are trained on large, fixed corpora and evaluated under supervised or self-supervised objectives. In contrast, real-world usage is prompt-driven: Users issue ad-hoc, task-specific requests expressed through free-form natural language. To succeed as general-purpose solvers, LLM-based systems must therefore infer a User’s latent task from an indirect and potentially ambiguous prompt, and then execute that task using pre-trained problem-solving capabilities.

This motivates an analytical decomposition of prompt-conditioned LLM behaviour into two conceptual components: a \emph{Prompt Interpreter}, which maps User prompts into an internal representation of intent under communication and normative constraints, and a \emph{Base Solver}, which executes the task conditional on this representation. This division need not be architectural and can be purely conceptual. Its utility lies in allowing us to isolate the informational and objective bottlenecks that arise in prompt-based interaction, without making claims about internal and physical modularity within the model (see Figure~\ref{fig:concept_summary}). Similar conceptual decompositions of LLM behaviour into separable functional roles without claiming architectural modularity appear in prior analyses of “language vs. thought” and agentic role-play views of language models (e.g., \citealp{mahowald2024dissociating, andreas2022language, shanahan2023role})
\begin{figure}
    \centering
    \includegraphics[width=.7\linewidth]{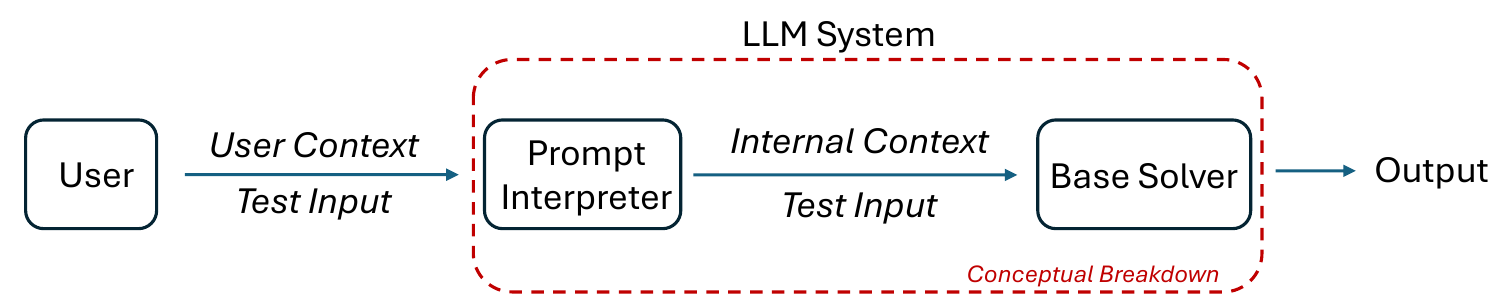}
    \caption{Conceptual Diagram}
    \label{fig:concept_summary}
\end{figure}

Concretely, we conceptually view prompting as a two-step channel. The User emits a natural-language context $c \sim \pi_U(\cdot \mid \theta)$, which is then rewritten or filtered into an internal context $\hat{c} \sim \pi_R^{\varphi}(\cdot \mid c, S)$ in the LLM. The Base Solver then produces $y \sim P_G(\cdot \mid x, \hat{c})$. Crucially, the Base Solver never observes $\theta$ directly but only via $\hat{c}$. Operationally, the effective pathway is the decomposition $\pi_R^{\varphi} \circ \pi_U$: task information must first be expressed in text and then survive System-side rewriting. When this channel is constrained, safety-driven, or normatively biased, the Base Solver will operate on a distorted representation of true intent, leading to a quantifiable gap between User intent and System output, as formalised by the misalignment terms introduced in our analysis. In Example 1, we illustrate the impact of finite communication channel capacity. Example 2 demonstrates the impact of objective misalignment.


\begin{tcolorbox}[
  breakable,
  colback=gray!5,
  colframe=gray!40!black,
  title=\textbf{Example 1: Clinical decision under compressed subjective reporting.}
]
\footnotesize
Consider a clinical setting in which a patient experiences a continuous pain severity 
\(
\Theta \in [0,1],
\)
where $\Theta=0$ denotes no pain and $\Theta=1$ denotes maximal severity. Relying on only text-based prompts from the patient, an LLM’s task is to infer the underlying physiological condition level $\theta$ in order to prescribe an appropriate intervention (e.g.\ dosage, imaging decision, escalation to surgery). 
Formally, we model the medically appropriate predictive distribution as
\[
P_\theta(y\mid x)
=
\mathcal{N}(y;\theta,\sigma^2),
\]
meaning that the optimal prediction of the clinically relevant outcome $y$ (such as required intervention intensity) is centred at the true condition $\theta$, with variance $\sigma^2$ representing irreducible biological uncertainty. 
Under log-loss, performance is measured by the KL divergence between the LLM’s predictive distribution and this ideal distribution. For two Gaussian distributions with equal variance,
\[
D_{\rm KL}\!\big(
\mathcal{N}(\theta,\sigma^2)
\;\|\;
\mathcal{N}(\mu,\sigma^2)
\big)
=
\frac{(\theta-\mu)^2}{2\sigma^2}.
\]
Therefore, the KL divergence reduces to a squared error in estimating the underlying condition $\theta$. 
A positive KL value represents systematic miscalibration in clinical judgement: the predicted intervention intensity is centred at the wrong physiological state. In practice, the LLM does not observe $\theta$ directly. 
The patient reports pain on a standard 10-point scale:
\[
C \in \{1,2,\dots,10\},
\qquad
C=b(\Theta):=\min\{10,\,1+\lfloor 10\Theta\rfloor\}.
\]
The reporting rule $b(\cdot)$ partitions $[0,1]$ into ten equal bins and returns the bin index. 
The floor function $\lfloor 10\Theta \rfloor$ extracts the integer part of $10\Theta$, so all severities within an interval of width $\Delta=0.1$ produce the same report. 
This is not dishonesty; it is structural compression of a continuous physiological variable into a coarse linguistic category.

Each distinct severity level $\theta$ defines a distinct clinical task, since it induces a distinct ideal predictive distribution $P_\theta$. 
However, once severity is mapped to $C$, all values of $\theta$ within the same bin become observationally indistinguishable. 
Two distinct condition–response pairs 
\[
(\theta_1, P_{\theta_1})
\quad\text{and}\quad
(\theta_2, P_{\theta_2})
\]
with $\theta_1\neq\theta_2$ but $b(\theta_1)=b(\theta_2)$ are therefore \emph{aliased}: the clinician (or LLM system) receives identical linguistic input despite the medically appropriate intervention differing. Suppose the LLM outputs a Gaussian prediction
\(
Q_c(y\mid x)=\mathcal{N}(y;\mu(c),\sigma^2).
\)
Conditional on observing $C=c$, the true severity $\Theta$ is uniformly distributed over an interval of width $\Delta=0.1$. 
The optimal prediction $\mu(c)$ is the midpoint of that interval. 
The residual conditional variance is
\[
\Var(\Theta\mid C=c)=\frac{\Delta^2}{12}.
\]
Substituting into the KL expression yields
\[
\inf_{\{Q_c\}}
\mathbb{E}\!\left[
D_{\rm KL}\!\big(
P_\Theta\|\;Q_{b(\Theta)}
\big)
\right]
=
\frac{\Delta^2}{24\sigma^2}
=
\frac{1}{2400\,\sigma^2}
>0.
\]

This shows that even with infinite patient data, perfect optimisation, and a highly capable model, the expected KL loss cannot vanish. 
Distinct underlying conditions that require meaningfully different interventions have been collapsed into the same linguistic category. 
The clinician’s predictive distribution must therefore compromise between aliased severities, inducing systematic miscalibration. 
The resulting error floor is neither statistical nor architectural; it arises from compressing a continuous state into a discrete language interface.

Now suppose that the patient provides
\emph{rich multimodal evidence}
\(
W = (C_{\mathrm{text}}, C_{\mathrm{img}}),
\)
consisting of:
(i) free-form textual symptom descriptions,
(ii) medical images (e.g.\ photographs of inflammation or rash),
and possibly additional structured signals.
The raw multimodal object $W$ may contain highly informative,
fine-grained information about $\Theta$. However, modern multimodal LLM systems do not operate directly on $W$.
Instead, they construct an \emph{internal representation}
\[
Z = \pi_R(W),
\]
where $\pi_R$ denotes the model’s preprocessing and rewriting mechanism.
In practice, this transformation includes vision encoders mapping images into fixed-dimensional embeddings, projection of embeddings into token sequences, context-window truncation, summarisation to satisfy length constraints, safety or policy-driven rewriting and formatting into structured textual prompts. Hence, all downstream prediction depends only on $Z$:
\[
Y \sim Q(\cdot \mid x, Z).
\]
The full information pathway is therefore
\(
\Theta \longrightarrow W \longrightarrow Z \longrightarrow Y,
\)
which forms a Markov chain.
In particular, conditional on $Z$, the output $Y$ is independent of the raw multimodal evidence $W$ and of the latent task $\Theta$. By the Data Processing Inequality,
\(
I(\Theta;Z)
\;\le\;
I(\Theta;W)
\). Moreover, because $Z$ is a finite-dimensional internal representation
(e.g.\ fixed-length embeddings or bounded token sequences),
its mutual information with $\Theta$ is bounded by the representational capacity
of the architecture:
\[
I(\Theta;Z) \;\le\; B_{\mathrm{mdl}},
\]
for some finite $B_{\mathrm{mdl}}$ determined by embedding dimension,
context window length, tokenisation constraints,
and architectural bottlenecks. Suppose now that the task family contains $K$ distinct severity levels
$\theta_1,\dots,\theta_K$,
each inducing a distinct predictive distribution
$P_{\theta_j}$,
and that the tasks are predictively separated:
\[
\min_{i\neq j}
\E_x\!\left[
D_{\rm KL}\!\big(
P_{\theta_i}(\cdot\mid x)
\;\|\;
P_{\theta_j}(\cdot\mid x)
\big)
\right]
\;\ge\; \delta > 0.
\]

If the representational capacity of the architecture is sufficiently constrained, then the task cannot be identified from $Z$ with probability one.
Consequently, distinct physiological states must be mapped to the same
internal representation with positive probability.
This induces \emph{task aliasing}:
\[
\theta_1 \neq \theta_2
\quad\text{but}\quad
\pi_R(W_1) = \pi_R(W_2) = Z.
\]

Conditional on observing $Z=z$,
the latent severity $\Theta$ remains random.
The optimal prediction is therefore
\[
\mu(z) = \E[\Theta \mid Z=z],
\]
and the residual conditional variance
\[
\Var(\Theta \mid Z=z)
\]
is strictly positive whenever aliasing occurs.
Substituting into the Gaussian KL expression yields
\[
\inf_Q
\E\!\left[
D_{\rm KL}\!\big(
P_\Theta \,\|\, Q(\cdot\mid Z)
\big)
\right]
=
\frac{1}{2\sigma^2}
\E\!\left[
\Var(\Theta \mid Z)
\right]
\;>\; 0.
\]

Hence, this example shows that even though the patient supplies rich multimodal evidence,
the model must compress this evidence into a bounded internal state
before acting.
If the architectural capacity of this representation is insufficient
to uniquely encode the underlying task,
distinct clinical conditions become aliased.
The resulting error floor is therefore not statistical and not due to
imperfect optimisation.
It arises from representational compression inside the model itself. This illustrates that even multimodal LLM systems,
when constrained to finite internal representations,
can exhibit irreducible task confusion induced by architectural
information bottlenecks.
\end{tcolorbox}

This abstraction also aligns with empirical LLM observations, where general problem-solving ability is primarily acquired during large-scale pre-training, while behavioural shaping are strongly influenced by subsequent fine-tuning and alignment stages. Relatively small parameter updates during SFT or RLHF can substantially alter model behaviour \citep{zhang2024llama, taori2023stanford, hu2022lora, ouyang2022training}. Such effects can be understood as consequences of communication and objective constraints rather than changes raw model capability.

Building on this construct, we further model User--System interaction using an information-theoretic analogue of Bayesian cheap talk. The User privately observes the task, the System does not, and their objectives need not coincide. This perspective allows us to study alignment, information loss, and generalisation in a unified way, without appealing to equilibrium assumptions.

Our analysis covers both regimes in which the Prompt Interpreter can reliably encode the latent task and where communication or objective constraints prevent faithful task transmission. We characterise when prompt-conditioned learning succeeds, and when fundamental limits persist independently of data availability. Our analysis yields two population-level impossibility results and a finite-sample generalisation guarantee. First, we prove an expressivity/identifiability floor: if the effective task information available to the Base Solver is bounded,
\(I(\Theta;Z\mid \Theta\in\Theta_0)\le B\), relative to the entropy of a task family, then task aliasing is unavoidable and the User risk admits a strictly positive asymptotic lower bound (Theorem 1). Second, we prove an objective/admissibility floor: if alignment or safety constraints restrict the System to an admissible set $P_{safe}(x,c)$, then whenever the User-ideal distribution lies outside this set on a non-negligible region, a positive KL gap persists even with perfect task inference (Theorem 2). Finally, we derive PAC–Bayes meta-learning bounds that separate vanishing estimation terms from non-vanishing structural floors when estimating population risk from sample risk (Theorems 3–5). In summary, we derive (i) a population floor—limits on achievable performance even with perfect optimisation—and (ii) analyse estimation limits—limits on how confidently we can infer population safety/performance from finite audits.

Our results highlight the relevant bottleneck to be the \textit{effective task information} that survives the full interface $\pi_R^{\varphi} \circ \pi_U$ and conditions the Base Solver. An implicit implication is that communication channels with higher capacity, such as those of multi-modal inputs, may alleviate the limitation attributable to language-based prompt compression.

\begin{tcolorbox}[
  breakable,
  colback=gray!5,
  colframe=gray!40!black,
  title=\textbf{Example 2 (objective misalignment): a benign clinical task with a positive KL floor.}
]
\footnotesize 
Consider a clinical risk-stratification task where the output is a calibrated probability of whether urgent escalation is needed. Let the output space be $\mathcal Y=\{0,1\}$, where $Y=1$ denotes \emph{urgent escalation} and $Y=0$ denotes \emph{non-urgent}. For a (benign) task $\theta$ and input $x\in\mathcal X$, the User-ideal predictive distribution is
\[
P^{\mathrm{usr}}_\theta(\cdot\mid x)=\mathrm{Bernoulli}\big(p_\theta(x)\big),
\qquad
p_\theta(x):=P^{\mathrm{usr}}_\theta(Y=1\mid x).
\]
Therefore, the User intends the System to output a clinically meaningful probability $p_\theta(x)$ from which a clinician can select an appropriate intervention policy.

Now suppose the System enforces an admissibility (safety/alignment) constraint that \emph{caps} how strongly it may predict urgent escalation under certain contexts $c$ (e.g.\ conservative policy mode, missing structured evidence, or a generic disclaimer regime). Let us model this as an admissible set
\[
\mathcal P_{\mathrm{safe}}(x,c)
=
\Big\{\mathrm{Bernoulli}(q):\ q\le q_{\max}(c)\Big\},
\qquad q_{\max}(c)\in(0,1).
\]
The objective distortion at $(\theta,x,c)$ is the distance from the User-ideal distribution to the admissible set:
\[
\Delta_{\mathrm{obj}}(\theta,x,c)
:=
\inf_{Q\in \mathcal P_{\mathrm{safe}}(x,c)}
D_{\rm KL}\!\Big(P^{\mathrm{usr}}_\theta(\cdot\mid x)\,\Big\|\,Q\Big)
=
\inf_{q\le q_{\max}(c)}
D_{\rm KL}\!\Big(\mathrm{Bern}(p_\theta(x))\,\Big\|\,\mathrm{Bern}(q)\Big).
\]
Since $D_{\rm KL}(\mathrm{Bern}(p)\|\mathrm{Bern}(q))$ is minimised at $q=p$ when feasible, the minimiser is the \emph{closest admissible} probability
\[
q^\star(x,c)=\min\{p_\theta(x),\,q_{\max}(c)\}.
\]
Therefore, whenever $p_\theta(x)>q_{\max}(c)$,
\[
\Delta_{\mathrm{obj}}(\theta,x,c)
=
D_{\rm KL}\!\Big(\mathrm{Bern}(p_\theta(x))\,\Big\|\,\mathrm{Bern}(q_{\max}(c))\Big)
\;>\;0,
\]
i.e.\ even a perfect optimiser \emph{within} the admissible class cannot match the User-ideal predictive distribution. To see an explicit positive population floor, assume there exist constants $p_0>q_0$ and $\beta\in(0,1]$ such that with probability at least $\beta$ over $(\Theta,X,C)$,
\[
p_\Theta(X)\ \ge\ p_0
\quad\text{and}\quad
q_{\max}(C)\ \le\ q_0.
\]
For example: a non-negligible subset of benign but truly high-risk cases, combined with a non-negligible probability that the System enters a conservative policy regime.
Then on this event,
\[
\Delta_{\mathrm{obj}}(\Theta,X,C)
\ \ge\
D_{\rm KL}\!\Big(\mathrm{Bern}(p_0)\,\Big\|\,\mathrm{Bern}(q_0)\Big)
\;>\;0,
\]
so that given $\varepsilon:=D_{\rm KL}\!\Big(\mathrm{Bern}(p_0)\,\Big\|\,\mathrm{Bern}(q_0)\Big)>0$, we have that $\mathbb P(\Delta_{\mathrm{obj}}(\Theta,X,C)\ge \varepsilon)\ge \beta$. 

\noindent\textbf{Interpretation.} The task itself is benign (clinical calibration), but admissibility constraints can still force the System’s predictive distribution to be \emph{too conservative} on a non-negligible set of inputs/contexts. The resulting \emph{distance-to-admissible-set} $\Delta_{\mathrm{obj}}$ is a quantitative measure of objective misalignment: it is the irreducible KL gap between what the clinician (User) needs the model to express and what the aligned System is permitted to output.
\end{tcolorbox}

\section{Main Results}

Despite extensive empirical study, the sample efficiency and generalisation limits of prompt-conditioned language models remain poorly characterised. While prior work has examined computational expressivity and complexity-theoretic limitations of autoregressive models \cite{lin2020limitations, floridi2020gpt}, comparatively little is known about how reliably such models can infer and execute latent tasks from indirect communication.

This motivates the central question of the paper: \emph{to what extent can prompt-conditioned language models reliably infer and execute a User’s latent task?} From a learning-theoretic perspective, successful generalisation requires not only sufficient representational and computational capacity, but also accurate task inference from constrained, free-form prompts. This inference problem is intrinsic to prompt-based interaction and cannot be reduced to standard supervised learning.

The paper makes two main claims. First, even granting a highly capable Base Solver, we show there can remain an irreducible population-level performance floor induced by upstream objective mismatch and communication limits. Second, we quantify how reliably empirical measurements predict population performance, via PAC–Bayes generalisation bounds that control the empirical–population gap.

Our negative result is also consistent with the No Free Lunch perspective: without a sufficiently informative task signal (in our case, prompt communication through a constrained communication channel) to invoke the correct inductive bias for the relevant task family, no single solver can achieve uniformly low error across all tasks. In our setting, this manifests as an irreducible User-risk floor driven by limited task transmission and admissibility constraints, even when the Base Solver is assumed powerful.


We establish two complementary classes of results that together characterise the limits of prompt-conditioned general-purpose learning.

\textbf{Population-level misalignment limits.}
We first derive lower bounds on the User’s population risk that hold independently of sample size and model optimisation. We show that when either (i) the System’s admissible outputs are misaligned with the User’s intended task or (ii) the prompt and interpretation mechanism has bounded information capacity, the User risk admits a strictly positive asymptotic floor (Theorems~\ref{thm:blended_floor_usr} and~\ref{thm:misalignment_flr_meta}). These bounds formalise misalignment as a structural property of communication and admissibility constraints, rather than as an estimation or optimisation artifact.

\textbf{PAC-Bayes generalisation and estimation error.}
We then derive PAC--Bayes generalisation bounds that quantify how empirical User risk concentrates around its population counterpart as the number of observed tasks and within-task samples increases (Theorems~\ref{lem:task_pb} and~\ref{thm:user_meta_pb}). These bounds cleanly separate vanishing estimation error from non-vanishing structural error: complexity terms decay with data, while misalignment-induced floors persist. In the fully revealing case, task aliasing disappears and the communication-induced obstacle to zero-shot and few-shot adaptation is removed. Any remaining error must therefore arise from limitations of
the solver itself rather than from ambiguity in task identification.

Taken together, these results show that PAC--Bayes analysis does not eliminate misalignment, but instead exposes how finite-sample learning behaviour converges to fundamental communication- and objective-induced limits. In particular, we prove that any prompt-conditioned system with finite interpretative capacity admits task families for which correct output generation is provably unattainable, regardless of the amount of data or number of in-context examples. Beyond exposing the limitation of these systems, our results show that expanding capability in practice may require expanding and stabilising the task interface and not merely scaling the model or dataset size.

We interpret objective misalignment through a structural analogy to cheap-talk models, where senders and receivers optimise different objectives \citep{crawford1982strategic, farrell1996cheap}. Representational misalignment corresponds to information bottlenecks in the communication channel. Each induces a distinct and additive contribution to the generalisation limits of prompt-based learning. 

Our analysis is grounded in PAC--Bayes theory \citep{alquier2021User} and meta-learning \citep{rezazadeh2022unified}, but departs from existing work in three key respects: we allow latent task misspecification, permit genuine objective divergence between User and System, and work with unbounded log-loss objectives. Together, these features yield a principled characterisation of when prompt-conditioned language models can generalise efficiently and when fundamental limits persist.
\subsection*{Preliminaries}

We use the following notation which is common in the PAC-Bayes literature  \cite{rezazadeh2022unified,masegosa2020learning} (see Table \ref{tab:concept_map_1} and \ref{tab:concept_map}). 

We first take a brief excursion to review some key concepts in machine learning \cite{greene2008unsupervised}. These elements will serve as an important basis for our subsequent analysis. A latent task (or intent) $\theta \sim P_{\mathrm{meta}}$ induces a task-conditional distribution over interaction records
\(s=(x,y)\in\mathcal X\times\mathcal Y\), where $x$ denotes the model input (user query or prompt plus any provided context) and $y$ denotes the target response or completion.
For a fixed $\theta$, we observe a finite dataset $S_M=\{(x_i,y_i)\}_{i=1}^M$ drawn i.i.d.\ from $P_{\mathrm{task}}(\cdot\mid\theta)$; depending on the setting,
$S_M$ can represent a training corpus, a benchmark or evaluation suite, or an audit trace.

A system induces a conditional distribution $P_h(\cdot\mid x)$ over outputs given inputs; we call any such conditional distribution a \emph{predictor} $h$,
and we write $\mathcal{H}$ for an admissible family of predictors (playing the role of a hypothesis class).
Using negative log-likelihood (cross-entropy) loss, the population risk under task $\theta$ is
\[
\mathrm{er}(h;\theta)
:=\mathbb{E}_{(x,y)\sim P_{\mathrm{task}}(\cdot\mid\theta)}\big[-\log P_h(y\mid x)\big],
\]
with empirical counterpart $\widehat{\mathrm{er}}(h;S_M)$ defined by sample averaging.
This notation is standard in PAC--Bayes; below we refine $P_h$ to be the particular system components and constraints relevant to LLM deployment, and we will distinguish
between user-facing and system-facing performance criteria.

\begin{table}[t]
\centering
\small
\caption{Key concepts in latent task and data generation}
\label{tab:concept_map_1}
\renewcommand\arraystretch{1.2}
\setlength{\tabcolsep}{6pt}
\begin{tabularx}{\linewidth}{>{\raggedright\arraybackslash}p{3cm} >{\raggedright\arraybackslash}p{3cm} X}
\toprule
Concept & Symbol & Role / LLM interpretation \\
\midrule

\multicolumn{3}{l}{\textbf{Latent task and data-generating process}}\\
Task (latent) & $\theta$ & Latent task or intent sampled from \(P_{\rm meta}\). \\
Meta-task distribution & $P_{\mathrm{meta}}(\theta)$ & Distribution over tasks governing meta-training and evaluation across tasks.
\\
Support information & $S$ & Available support information (e.g.\ prior interactions, 
system memory, or contextual metadata).\\
Task data and behavioural prompt
&
\(S_\theta^{(N)}\sim P_{\mathrm{task}}(\cdot\mid\theta)^N,\;
c\sim\pi_U(\cdot\mid\theta)\)
&
For each task, data are sampled from the task-conditional law, while the User emits a prompt according to a task-dependent behavioural communication policy. \\
Joint observable distribution &
\(\bar P(dx,dy,dc)
:=\int_{\Theta}
P_{\mathrm{task}}(dx,dy\mid\theta)\,
\pi_U(dc\mid\theta)\,
P_{\mathrm{meta}}(d\theta)\) &
Joint distribution over inputs, outputs, and contexts after marginalising over latent tasks. \\

\bottomrule
\end{tabularx}
\end{table}

\begin{table}[t]
\centering
\small
\caption{Mapping between key concepts in our framework and common LLM usage}
\label{tab:concept_map}
\renewcommand\arraystretch{1.2}
\setlength{\tabcolsep}{6pt}
\begin{tabularx}{\linewidth}{>{\raggedright\arraybackslash}p{3cm} >{\raggedright\arraybackslash}p{3cm} X}
\toprule
Concept & Symbol & Role / LLM interpretation \\
\midrule

\multicolumn{3}{l}{\textbf{User communication}}\\
User prior & \(P_U(c)\) & Reference distribution over contexts used in the User PAC--Bayes change-of-measure arguments. \\
Behavioural prompting kernel & \(\Pi(c\mid\theta)\) or \(\pi_U(c\mid\theta)\) & Task-conditional communication policy used to analyse task information and revelation. \\
PAC--Bayes User posterior & \(\tilde Q(c)\) & Optimisation-induced posterior over contexts used in the User PAC--Bayes bounds. \\
User context & \(c\in\mathcal C\) & External prompt or communication signal. \\

\midrule
\multicolumn{3}{l}{\textbf{System components and induced prediction}}\\
Prompt Interpreter & $R$ & Given User-provided prompt $c$ generates $\hat c$ used by Base Solver; encodes how User prompts (context) are interpreted by core model. \\
Internal Context & $\hat c \sim \pi_R(\cdot \mid c,S)$ & Context generated by $R$ after interpreting $c$. \\
Base Solver &
$P_G(y\mid x,\hat c)$ & Conditional solver producing output $y$ given input $x$ and Internal Context $\hat c$; parameterised by $G \in \mathcal{G}$ and trained during unsupervised pre-training. \\
System-induced predictive distribution & $P_{\varphi}(y \mid x,c,S) = \mathbb{E}_{\hat c \sim \pi^{\varphi}_R}[P_G(y \mid x,\hat c)]$ & Effective conditional distribution induced by the Prompt Interpreter and Base Solver given User context $c$ and support set $S$. \\
System prior & $P_0(G)$ & Prior over model parameters $G$. \\
System posterior & $Q(G \mid S)$ & Data-dependent System posterior over $G$. \\

\midrule
\multicolumn{3}{l}{\textbf{Objectives and misalignment}}\\
User Loss & $L_{\mathrm{usr}}$ & User objective, defined through $P_\psi(y \mid x,c)$. \\
System Loss & $L_{\mathrm{sys}}$ & Objective the core model is optimised for (may differ from $L_{\mathrm{usr}}$, e.g.\ when User task deemed unsafe). \\
Alignment Loss (gap) & $L_{\mathrm{align}}$ & Performance penalty under misspecification / misalignment between User and Prompt Interpreter. \\

\bottomrule
\end{tabularx}
\end{table}

\section{Using LLMs for General-Purpose Learning}

Pre-training the System. We consider a LLM where $\theta \in \Theta$ acts as a hidden
task or topic \cite{xie2021explanation}. The task $\theta$ is sampled from the meta-distribution \(P_{\rm meta}\). Let us begin with pre-training a language model $P_G$ parameterised by
$G \in \mathcal{G}$. For a given input $x \in \mathcal{X}$, the model generates
an output $\hat y$ according to $
\hat y \sim P_G(y \mid x)$ where the marginal predictive distribution given $x$ can be written as $
P_G(y \mid x)
= \int_{\Theta} P_G(y \mid x,\theta)\,P_{\rm meta}(d\theta)$
and the task-specific data distribution is $
s = (x,y) \sim P_{\mathrm{task}}(\cdot \mid \theta), s \in \mathcal{X}\times\mathcal{Y}$. 
  The objective is to minimise some loss function given some choice of inference model parameter. For now, assume the model space $\mathcal{G}$ is correctly specified that is, we assume there exists a $ G' \in \mathcal{G}$ which specifies the true model. The System (i.e. the language model and its procedure for performing its updates) receives a prefixed feature $x$ and generates $\hat{y}$ according to distribution $
\hat{y} \sim  \operatorname{softmax} P_{G}(y \mid x)$. 

In general, the goal of training is to infer a model parameter $G'\in \mathcal{G}$ that minimises the error in predicting the correct output, i.e. that which enables $\hat{y}$ to be accurately predicted from any given $x$.
This reproduces the setup in standard supervised learning where given some loss function or `risk' we seek to find a hypothesis among a fixed class of functions or hypothesis space that minimises the loss function. In our case, the objective of the System is to find a $G'$ such that $
G'\in\underset{G\in\mathcal{G}}{\operatorname{argmin}}\; \sysexppre$,
where the objective $\sysexppre$ is described by the expected cross-entropy loss function:

\begin{equation}
{\mathrm CE}(G)  :=-\mathbb{E}_{s\sim P_{\mathrm{task}}(\cdot\mid\theta)}\left[\log P_{G}(y \mid x) \right], \qquad \forall G
\in\mathcal{G}. 
\label{llm_1}    
\end{equation}
Since $P_{\mathrm{task}}(\cdot\mid\theta)$ is a priori unknown, given the observations $\boldsymbol{S}^M_\theta:=\{(x_i,y_i)\}_{i=0}^{M<\infty}$, we consider the empirical loss i.e we take an average of the cross-entropy loss: $
{\widehat{\mathrm CE}^{(N)}}  
 :=-\frac{1}{N} \sum_{i=1}^N \log P_{G}(y_i \mid x_i) ,\;s_i\equiv (x_i,y_i)\sim P_{\mathrm{task}}(\cdot\mid\theta)$ for any $G
\in\mathcal{G}$ and any $\theta\in\Theta$, where the sum over $i$ iterates through all sequences in the dataset $\boldsymbol{S}^M_\theta$. A crucial point to note is that the System loss function is unbounded.
\subsection{General Purpose Learners with Context Communication}


For each latent task $\theta \in \Theta$, let $(x,y)\sim P_{\mathrm{task}}(\cdot\mid\theta)$ denote data drawn from the task-level data-generating distribution.
We distinguish this from a meta-distribution $P_{\mathrm{meta}}$ over tasks, which governs how tasks are sampled during training and evaluation of the User and Prompt Interpreter.

We now consider observations of the data set $\boldsymbol{Z}^n_\theta=(z^1_\theta,\ldots,z^N_\theta)\in \boldsymbol{\mathcal{Z}}^n$ where $z^i_\theta = (x_i,c)\in  X\times \mathcal{C}\equiv\boldsymbol{\mathcal{Z}}$ 
and $N$ is the integer number of sequences in the dataset where now the input space which previously consisted of just $x\in X$ has been augmented to include a `context' (User Context) $c\in\mathcal{C}$ which is the output of a map $\mathfrak{c}:\Theta\to\mathcal{C}$ where $\mathcal{C}$ is a set of \textit{contexts}. The System receives the context \(c\) (along with a prefixed feature \(x\)) and generates 
\(\hat y\) according to the conditional distribution $
\hat y \sim P_G(y \mid x,\hat c)$, 
where \(\hat c\) is produced by a \emph{Prompt Interpreter} from the User's context \(c\). 
 
Formally, the System’s predictive mechanism is the conditional distribution
$P_{G}(y \mid x, \hat c)$ obtained after the Prompt Interpreter produces an Internal Context
$\hat c \sim \pi_{R}(\cdot \mid c, S)$. The introduction of Prompt Interpreter requires no explicit architectural separation; 
this factorization is conceptual: it is an analytical device for separating intent-to-objective translation and communication constraints from task execution, not a claim that any model internally implements clean modular subcomponents. Utilizing this conceptual decomposition isolates the interpretive role from the general problem solving role and is essential to our analysis.

For each task $\theta$, only a finite dataset is available
\[
S_\theta^{(N)}=\{(x_i,y_i)\}_{i=1}^N \sim P_{\mathrm{task}}(\cdot\mid\theta)^N 
\]
The User observes $\theta$ and produces a communication context $c\in\mathcal C$ according to a (possibly stochastic) policy $\pi_U(c\mid\theta)$.
This context is processed by the Prompt Interpreter, modelled as a stochastic rewriting mechanism
\[
\hat c \sim \pi_R^\varphi(\cdot\mid c,S_\theta^{(N)}),
\]
parameterised by $\varphi$, yielding an internal context $\hat c\in\mathcal C_{\mathrm{int}}$.
The Base Solver, parameterised by $G$, then produces predictions according to the conditional distribution
\[
y \sim P_G(\cdot\mid x,\hat c).
\]

For a given internal context $\hat c \in \mathcal C_{\mathrm{int}}$ and input $x \in \mathcal X$, the Base Solver is represented by the conditional predictive distribution $
P^G(y \mid x, \hat c)$. For a given external context $c \in \mathcal C$ and task-specific dataset $S_\theta^{(N)}$, define the effective conditional as
\begin{align}
P_{G,\varphi}(y \mid x,c,S_\theta^{(N)})
\;:=\;
\mathbb E_{\hat c \sim \pi_R^{\varphi}(\cdot\mid c,S_\theta^{(N)})}
\big[
P_G(y \mid x,\hat c)
\big]    
\end{align}
which marginalises over the Prompt Interpreter's rewriting mechanism. 

Let
\(
S_\theta^{(N)}
=
\{(x_i,y_i)\}_{i=1}^N
\sim
P_{\mathrm{task}}(\cdot\mid\theta)^N
\)
denote the support dataset. For notational convenience, once the support-set size
\(N\) is fixed we write \(S_\theta\) in place of
\(S_\theta^{(N)}\). Throughout the remainder of the paper,
\(S_\theta\) therefore denotes a realised support dataset
drawn from \(P_{\mathrm{task}}(\cdot\mid\theta)^N\).

\paragraph{Loss Formulation.} For a task $\theta\in\Theta$ and context $c\in\mathcal C$, we define the User and System population losses as 
\begin{align}
\ell_{\mathrm{sys}}(x,y;\theta,c,G,\varphi)\label{eq:effective_system_loss}
\;&:=\;
-\log P_{G,\varphi}(y \mid x,c,S_\theta^{(N)}), 
\\
\ell_{\mathrm{usr}}(x;\theta,c,S_\theta)
&:=
D_{\rm KL}
\!\left(
P_\theta(\cdot\mid x)
\,\middle\|\,
P_{G,\varphi}(\cdot\mid x,c,S_\theta)
\right). \label{eq:user_emp_loss_KL_ratio}
\end{align}
where $P_\theta(\cdot\mid x)$ denotes the ideal task-conditioned predictive distribution associated with task $\theta$, representing the User’s intended behaviour in the absence of communication or alignment constraints. 
Note, we distinguish between the task data distribution $P_{\mathrm{task}}(x,y\mid\theta)$, which governs how inputs and outputs are sampled, and the ideal task-conditioned predictive distribution $P_\theta(y\mid x)$, which specifies the User’s intended behaviour for task $\theta$. 
The ideal task-conditioned distribution $P_\theta(\cdot\mid x)$ encodes the User’s intended behaviour for task $\theta$, which may differ from the data-generating distribution $P_{\mathrm{task}}(\cdot\mid\theta)$ due to safety constraints, filtering, or other forms of misalignment.

The expected task-conditional losses are
\begin{align}
\mathcal L_{\mathrm{sys}}(\theta;c)
&=
\mathbb E_{(x,y)\sim P_{\mathrm{task}}(\cdot\mid\theta)}
\big[
\ell_{\mathrm{sys}}(x,y;\theta,c,G ,\varphi)
\big],
\label{eq:sys_expected_loss}
\\
\mathcal L_{\mathrm{usr}}(\theta;c\mid S_\theta)
=
\mathbb E_{x\sim\mu_\theta}
\big[
\ell_{\mathrm{usr}}(x;\theta,c,S_\theta)
\big].
\label{eq:user_pop_loss_KL}
\end{align}

Differences between
\(\mathcal L_{\mathrm{usr}}\)
and
\(\mathcal L_{\mathrm{sys}}\)
quantify objective misalignment, while limitations of the admissible context space \(\mathcal C\) and the rewriting mechanism \(\pi_R^\varphi\) give rise to expressivity misalignment.
Given a finite dataset $S_\theta^{(N)}$, the empirical System loss is
\begin{equation}
\widehat{\mathcal L}_{\mathrm{sys}}^{(N)}(\theta;c,G,\varphi)
:=
-\frac{1}{N}
\sum_{(x_i,y_i)\in S_\theta^{(N)}}
\log
P_{G,\varphi}(y_i\mid x_i,c,S_\theta^{(N)}).
\label{eq:sys_empirical_loss}
\end{equation}
This empirical loss is the negative log-likelihood of the effective predictive
distribution \(P_{G,\varphi}\), after marginalising over the Prompt Interpreter's
rewriting randomness. The population (task) loss is
\begin{equation}
\mathcal L_{\mathrm{sys}}(G , \varphi;\theta,c)
:=
\mathbb E_{(x,y)\sim P_{\mathrm{task}}(\cdot\mid\theta)}
\big[
\ell_{\mathrm{sys}}(G , \varphi;c,S_\theta^{(N)},x,y)
\big].
\label{eq:pop_task_sys_loss_skeleton}
\end{equation}

For the User-risk estimator, let \(S_\theta\) denote the support information
available to the Prompt Interpreter, and let
\[
T_\theta^{(N)}=\{(x_i,y_i)\}_{i=1}^N
\]
be an independent evaluation sample drawn from the User-ideal task distribution:
\[
x_i\sim \mu_\theta,
\qquad
y_i\sim P_\theta(\cdot\mid x_i).
\]
The empirical User loss is
\begin{align}
\widehat{\mathcal L}_{\mathrm{usr}}^{(N)}
(\theta;c\mid T_\theta^{(N)},S_\theta)
=
\frac1N
\sum_{i=1}^{N}
\log
\frac{
P_\theta(y_i\mid x_i)
}{
P_{G,\varphi}(y_i\mid x_i,c,S_\theta)
}.\label{eq:user_emp_KL_ratio}
\end{align}
Its population counterpart, conditional on the support information \(S_\theta\), is
\[
\mathcal L_{\mathrm{usr}}(\theta;c\mid S_\theta)
=
\mathbb E_{x\sim\mu_\theta}
\left[
D_{\rm KL}
\!\left(
P_\theta(\cdot\mid x)
\,\middle\|\,
P_{G,\varphi}(\cdot\mid x,c,S_\theta)
\right)
\right].
\]
Then
\[
\mathbb E_{T_\theta^{(N)}}
\left[
\widehat{\mathcal L}_{\mathrm{usr}}^{(N)}
(\theta;c\mid T_\theta^{(N)},S_\theta)
\right]
=
\mathcal L_{\mathrm{usr}}(\theta;c\mid S_\theta).
\]









For readability, we suppress the conditioning on \(S_\theta\) and write
\(\mathcal L_{\mathrm{usr}}(\theta;c)\) when the support information is fixed.

Note that the dependence of $\widehat{\mathcal L}_{\mathrm{usr}}^{(N)}(\theta;c\mid T_\theta^{(N)},S_\theta)$ on the context $c$ is entirely through the System likelihood term.
In particular, for a given task $\theta$ and dataset $S_\theta^{(N)}$ and for any $c\in\mathcal C$ we have that
\begin{equation}
\widehat{\mathcal L}_{\mathrm{usr}}^{(N)}(\theta;c\mid T_\theta^{(N)},S_\theta)
=
\frac{1}{N}\sum_{i=1}^N \log P_\theta(y_i\mid x_i)
-
\frac{1}{N}\sum_{i=1}^N
\log P_{G,\varphi}(y_i\mid x_i,c,S_\theta).
\label{eq:usr_KL_decomp}
\end{equation}
The first term depends only on \((\theta,T_\theta^{(N)})\) and not on \(c\). Consequently, any minimiser of the empirical User loss over contexts also maximises the empirical System likelihood:
\begin{equation}
\arg\min_{c\in\mathcal C}\;
\widehat{\mathcal L}_{\mathrm{usr}}^{(N)}(\theta;c\mid T_\theta^{(N)},S_\theta)
\equiv
\arg\max_{c\in\mathcal C}\;
\frac{1}{N}\sum_{i=1}^N
\log P_{G, \varphi}(y_i\mid x_i,c,S_\theta^{(N)}),
\label{eq:usr_context_argmin}
\end{equation}
where the decomposition \eqref{eq:usr_KL_decomp} follows by expanding the log-ratio in \eqref{eq:user_emp_KL_ratio}.
The first term depends only on $(\theta,S_\theta^{(N)})$ and not on $c$.
Therefore minimising \eqref{eq:user_emp_KL_ratio} over $c$ is equivalent to maximising the second term, yielding \eqref{eq:usr_context_argmin}. This shows that although $\mathcal L_{\mathrm{usr}}(\theta;c)$ is defined as a KL divergence to the task distribution $P_\theta(\cdot\mid x)$, the induced empirical optimisation over contexts depends only on the System likelihood term.
This permits direct PAC--Bayes generalisation bounds for the User objective while retaining a KL-based population criterion.

The KL formulation in
~\eqref{eq:user_emp_loss_KL_ratio} serves as a population-level criterion that quantifies
misalignment between the System’s induced behaviour and the User’s ideal task distribution. The loss in \eqref{eq:effective_system_loss} is defined as the negative log-likelihood of the \emph{effective} predictive distribution obtained after marginalising over the Prompt Interpreter.
This corresponds to treating the internal context $\hat c$ as a latent variable rather than as an observed decision.
By Jensen’s inequality, this choice yields a tighter and operationally more meaningful objective than the expected negative log-likelihood of individual rewrites, and reflects the behaviour of the System as a mixture model from the perspective of the User. In practice, both losses are unbounded, so the subsequent generalisation analysis (Section \ref{sec:generalisation}) relies on sub-Gaussian concentration assumptions rather than bounded-loss inequalities. 

\paragraph{The role of \(N\) in the User and System losses.}
The role of N in the User and System losses is to index a within-task sample size used to approximate task-conditioned expectations by empirical averages. Formally, N controls how precisely quantities such as cross-entropies or KL-style risks are estimated from samples drawn under the task distribution. In particular, the empirical User risk is introduced as a statistical estimator of the population User risk; it is not meant to imply that a human user literally observes or optimizes over N samples during interaction. We only require that user-facing tasks and system-facing tasks are i.i.d. draws from the same meta-distribution, not that the same task must be replayed counter-factually for the user.

\textbf{Notation convention (training vs. evaluation).} We use $N_{train},M_{train}$, for sample sizes that index optimisation or training objectives, and $N_{eval},M_{eval}$ for sample sizes that index empirical-to-population estimation in generalisation bounds. For readability we set $N_{train} = N_{eval} = N$ and $M_{train} = M_{eval} = M$

We now lift the task-level losses to a meta-learning setting, where both the System and the User optimise distributions over hypotheses and contexts across tasks.

\paragraph{Three distributions.}
Throughout the paper we distinguish three conceptually different
probability laws:

\begin{itemize}
\item $P_\theta(\cdot\mid x)$:
the User-ideal predictive law associated with task $\theta$
(the behaviour the user actually wants);

\item $P_{\mathrm{task}}(\cdot\mid\theta)$:
the task/environment data-generating law from which training or
evaluation datasets may be sampled;

\item $P_{G,\varphi}(\cdot\mid x,c,S_\theta^{(N)})$:
the predictive law induced by the User--System pair
after prompting, rewriting, and inference.
\end{itemize}

The User-level risks compare $P_{G,\varphi}$ against
$P_\theta$, whereas empirical estimators may be constructed from
samples generated according to either $P_\theta$ or
$P_{\mathrm{task}}$, depending on the evaluation protocol.

\paragraph{Meta-Loss Formulation.} We assume a given fixed hypothesis class $\mathcal{H}$ and a given context set $\mathcal{C}$. 
We consider a three-level generative hierarchy:
\[
G \sim P_0(G), 
\quad \theta \sim P_{\rm meta}(\theta),
\quad s = [(x,y) \mid \theta] \sim P_{\mathrm{task}}(s \mid \theta),
\]
where $P_0$ is the meta-prior over model parameters, $P_{\rm meta}$ is the meta-distribution over tasks, and $P_{\mathrm{task}}$ is the task-specific data distribution. During meta-training, the \emph{System} updates its prior $P_0(G)$ to a data-dependent posterior $
Q(G \mid S) \in \Delta_{\mathcal{G}}$, while the User updates its prior $P_{U}(c \mid \Theta)$ to a posterior over communication contexts $
\tilde{Q}(c \mid \Theta) \in \Delta_{\mathcal{C}}$.   

The meta-distribution $P_{\mathrm{meta}}$ and task-level distributions $P_{\mathrm{task}}(\cdot\mid\theta)$ are modelling assumptions and are not assumed to be known to the User.
The distribution $\tilde Q$ denotes an optimisation-induced (PAC--Bayes) posterior over contexts that characterises the solution of a meta-risk minimisation problem under $P_{\mathrm{meta}}$, rather than a belief update based on explicit knowledge of these distributions. 
We do not assume that the User observes the same tasks used to train the System.
Rather, both the User’s observed tasks and the System’s training tasks are assumed to be drawn independently from the same meta-distribution $P_{\mathrm{meta}}$.


Let $\theta \sim P_{\mathrm{meta}}$ denote a task drawn from the meta-distribution, and for each task let
$S_\theta^{(N)}
=
\{(x_i,y_i)\}_{i=1}^N
\sim P_{\mathrm{task}}(\cdot\mid\theta)^N$
denote a finite dataset drawn i.i.d.\ from the task-level data distribution and let $\theta_1,\ldots,\theta_M\sim P_{\rm meta}$ be $M$ meta-training tasks observed by the User.
Given a User context $c \in \mathcal C$, the Prompt Interpreter produces an internal context
$\hat c \sim \pi_R^\varphi(\cdot \mid c, S_\theta^{(N)})$ parameterised by $\varphi$,
and the Base Solver with parameters $G $ induces a conditional distribution $P_G (y\mid x,\hat c)$.

In our formulation, the inductive bias is not encoded in the learned context $\hat{c}$ alone, but in the combined restriction imposed by the context space $\mathcal{C}$, the Prompt Interpreter $\pi_R$, and the System hypothesis class $\mathcal{H}$. The context $\hat{c}$ acts as a control signal that selects among these biases, rather than defining them.
We consider a Bayesian approach where the System updates its beliefs using the posterior distribution $Q:X\times \mathcal{C}\to \Delta( \mathcal{G})$, contained in the set $\boldsymbol{\mathcal{Q}}$. 

There are some special cases to be considered.

The alignment loss between the User and the System is defined as
\begin{equation}
\mathcal L_{\mathrm{align}}
:=
\mathbb E_{\theta\sim P_{\mathrm{meta}},\,c\sim\pi_U(\cdot\mid\theta)}
\big[
\lvert
\mathcal L_{\mathrm{sys}}(\theta;c)-\mathcal L_{\mathrm{usr}}(\theta;c)
\rvert
\big],
\label{eq:alignment_loss}
\end{equation}
which measures the expected deviation between the System’s realised behaviour and the User’s intended objective. The Prompt Interpreter does not have direct access to $\theta$ at test time and cannot select internal contexts by oracle minimisation.
Instead, it is trained by minimising the System’s empirical loss averaged over tasks, contexts, and finite datasets:
\begin{equation}
\varphi^\star
\in
\arg\min_{\varphi}\;
\mathbb E_{\theta\sim P_{\mathrm{meta}}}\,
\mathbb E_{c\sim\pi_U(\cdot\mid\theta)}\,
\mathbb E_{S_\theta^{(N)}\sim P_{\mathrm{task}}(\cdot\mid\theta)^N}
\big[
\widehat{\mathcal L}_{\mathrm{sys}}^{(N)}(\theta;c,G ,\varphi)
\big].
\label{eq:router_meta_optimisation}
\end{equation}

This optimisation induces a rewriting mechanism that approximates, in expectation, the best task-relevant internal context that is compatible with the System’s objective and training constraints.

Since, the User is learning or selecting a communication policy across tasks, the User posterior $\tilde{Q}$ is an optimisation-induced distribution that reflects uncertainty over which policy induces minimal expected loss when processed through the LLM System’s interpretive mechanism. The User prior $P_U$ is a fixed reference distribution over prompting strategies that encodes architectural or linguistic inductive biases rather than epistemic beliefs. Therefore, \(P_{U}\) and \(\tilde{Q}\) play the roles of prior and PAC–Bayes posterior over communication strategies in the User (meta-)space,
while \(P_0\) and \(Q\) are the System prior and posterior over Base Solver parameters \(G\).  The User does not update beliefs about $\theta$, which is observed directly.
Instead, the User learns or selects a distribution over communication contexts that performs well across tasks.
We model this via an optimisation-induced posterior $\tilde Q\in\Delta(\mathcal C)$ obtained by minimising a meta-level risk regularised by deviation from a reference measure $P_U$:
\begin{equation}
\tilde Q^\star
\in
\arg\min_{\tilde Q\in\Delta(\mathcal C)}
\left[
\widehat{\mathcal R}_{\mathrm{usr}}^{(M,N)}(\tilde Q)
+
\frac{1}{\lambda M}
D_{\rm KL}(\tilde Q\Vert P_U)
\right].
\label{eq:user_meta_optimisation}
\end{equation}
The empirical losses represent aggregated experience over interactions, not explicit access to labelled task datasets.
Therefore, the distribution $\tilde Q$ should be interpreted as a Gibbs or PAC–Bayes posterior over communication strategies rather than as an epistemic belief over tasks. The regularisation term $D_{\rm KL}(\tilde Q\Vert P_U)$ models inertia or bias toward familiar contexts induced by repeated interaction, and is not intended as a literal cognitive computation performed by the User.

The User’s empirical meta-risk is the average, over observed tasks, of the empirical User loss induced by contexts sampled from $ \tilde Q$ is:
\begin{equation}
\widehat{\mathcal R}_{\mathrm{usr}}^{(M,N)}(\tilde Q)
\;:=\;
\frac{1}{M}
\sum_{i=1}^M
\mathbb E_{c\sim \tilde Q}
\Big[
\widehat{\mathcal L}_{\mathrm{usr}}^{(N)}(\theta_i;c \mid S_i)
\Big],
\label{eq:emp_user_meta_risk}
\end{equation}

Define the meta-risk under posteriors $(Q,\tilde Q)$ as
\begin{equation}
\mathcal R_{\mathrm{sys}}(Q,\tilde Q)
:=
\mathbb E_{\theta\sim P_{\mathrm{meta}}}
\mathbb E_{c\sim \tilde Q}
\mathbb E_{G\sim Q}
\big[
\mathcal L_{\mathrm{sys}}(G , \varphi;\theta,c)
\big],
\label{eq:meta_risk_skeleton}
\end{equation}
where the empirical User meta-risk $\widehat{\mathcal R}_{\mathrm{usr}}^{(M,N)}(\tilde Q)$ is defined as the average, over observed tasks, of the empirical User loss induced by contexts sampled from $\tilde Q$, as given in~\eqref{eq:emp_user_meta_risk}. The empirical counterpart of~\eqref{eq:meta_risk_skeleton} over $M$ meta-training tasks is
\begin{equation}
\widehat{\mathcal R}_{\mathrm{sys}}^{(M,N)}(Q,\tilde Q)
:=
\frac{1}{M}\sum_{i=1}^M
\mathbb E_{c\sim \tilde Q}
\mathbb E_{G\sim Q}
\big[
\widehat{\mathcal L}^{(N)}_{\mathrm{sys}}(G , \varphi;\theta_i,c)
\big].
\label{eq:emp_meta_risk_skeleton}
\end{equation}
The User population risk captures the best achievable task performance from the User’s perspective, independent of the internal objectives optimised by the System.

For each task $\theta$ and context distribution $\tilde Q\in\Delta(\mathcal C)$ define the population and empirical \emph{task-level} User risks:
\begin{align}
\mathcal R_{\mathrm{usr}}(\theta;\tilde Q)
:=
\mathbb E_{c\sim\tilde Q}\big[\mathcal L_{\mathrm{usr}}(\theta;c)\big],\quad
\widehat{\mathcal R}_{\mathrm{usr}}^{(N)}(\theta;\tilde Q\mid S_\theta^{(N)})
:=
\mathbb E_{c\sim\tilde Q}\big[\widehat{\mathcal L}_{\mathrm{usr}}^{(N)}(\theta;c\mid T_\theta^{(N)},S_\theta)\big].
\end{align}
\textcolor{black}{
The empirical User risk $\widehat{\mathcal R}_{\mathrm{usr}}^{(N)}$ is introduced solely as a statistical estimator of the population User risk $R_{\mathrm{usr}}$, and does not represent a quantity observed or optimised by the User during interaction. It measures how closely the System’s prompt-conditioned predictive distribution approximates the User-ideal task behaviour when evaluated on $N$ independent samples from the task distribution.} Define the User meta-risk under $\tilde Q\in\Delta(\mathcal C)$ as
\begin{equation}
R_{\mathrm{usr}}(\tilde Q)
:=
\mathbb E_{\theta\sim P_{\mathrm{meta}}}\;
\mathbb E_{c\sim \tilde Q}\Big[\mathcal L_{\mathrm{usr}}(\theta;c)\Big],
\label{eq:user_meta_risk}
\end{equation}
where $\mathcal L_{\mathrm{usr}}(\theta;c)$ denotes the User’s ideal task loss under context $c$.

We can then consider the regularised objectives
\begin{equation}
J_{\mathrm{sys}}(Q;\tilde Q)
:=
\widehat{\mathcal R}_{\mathrm{sys}}^{(M,N)}(Q,\tilde Q)
+\frac{1}{\beta M}D_{\rm KL}(Q\Vert P_0),
\;
J_{\mathrm{usr}}(\tilde Q)
:=
\widehat{\mathcal R}_{\mathrm{usr}}^{(M,N)}(\tilde Q)
+\frac{1}{\lambda M}D_{\rm KL}(\tilde Q\Vert P_U),
\label{eq:regularised_objectives_skeleton}
\end{equation}
where $\widehat{\mathcal R}_{\mathrm{usr}}^{(M,N)}$ is the User's empirical meta-objective used in our analysis (e.g., PAC–Bayes/regularized optimisation) as an empirically measurable proxy for the population User risk
(defined analogously to \eqref{eq:emp_meta_risk_skeleton} using the User loss),
and $\beta,\lambda>0$ are complexity parameters. The setup can be formalised as a \textit{bilevel optimisation program} \cite{colson2007overview}:
\begin{align}
\label{eq:bilevel_program}
\inf_{Q,\tilde Q}J_{\mathrm{sys}}(Q;\tilde Q),\;\;
\text{s.t.}\;
\tilde{Q}^\ast \in \arg\!\min_{\tilde{Q} \in \Delta(\mathcal C)}J_{\mathrm{usr}}(\tilde Q)&,
\end{align}
so that $\tilde{Q}^\ast$ is the minimiser of a PAC–Bayes objective i.e., the risk plus a Kullback-Leibler divergence regulariser. 

\section{Cheap-Talk Analogy and Alignment Loss}
\label{sec:cheap_talk_analogy}

The interaction between the User and the Prompt Interpreter can be interpreted through the lens of \emph{cheap-talk} models \citep{crawford1982strategic, farrell1996cheap}, which we use as an analytical analogy rather than as a fully specified equilibrium game.
In this perspective, the User observes a latent task $\theta \in \Theta$ and communicates via a costless, non-binding message $c \in \mathcal{C}$, while the Prompt Interpreter maps this message to an internal context $\hat{c} \sim \pi_{R}(\cdot \mid c,S)$ that conditions the downstream Base Solver. Because the message $c$ is non-binding, the Prompt Interpreter may reinterpret or transform it in a way that does not faithfully preserve the User’s intent. Objective misalignment captures the case in
which the User’s ideal predictive distribution differs from the System’s, reflecting an analogue of payoff divergence in classical cheap-talk settings, without requiring explicit equilibrium analysis.
Expressivity misalignment, by contrast, is structural: it arises when the communication space $\mathcal{C}$ or the Prompt Interpreter’s mapping lacks sufficient expressivity to encode all
task-relevant distinctions. Our analysis shows that when either form of misalignment is present, a strictly positive KL
separation
\[
D_{\rm KL}\!\left(P_G^{c} \,\|\, P_G^{\theta}\right) > 0
\]
is unavoidable. This separation appears as an irreducible asymptotic error term in the PAC--Bayes generalisation bounds derived in Section~\ref{sec:generalisation}.
If the Prompt Interpreter perfectly preserves task-relevant information so that
$P_{G}(y \mid x, \hat{c}) = P_{G^*}(y \mid x, \theta)$, the alignment loss vanishes.
Otherwise, a positive alignment loss persists, with magnitude determined by the combined effects of
objective mismatch and communication constraints.

While cheap-talk equilibria provide useful intuition for these phenomena, our results do not depend on explicit best-response or equilibrium assumptions.
Instead, the analogy serves to clarify how non-binding communication and misaligned objectives naturally give rise to partial revelation and persistent error in prompt-conditioned learning systems. From an information-theoretic standpoint, the alignment loss can be viewed as increasing with the  information gap between the latent task $\theta$ and the effective context $\hat{c}$: reducing how much the context reveals about the task necessarily increases the expected misalignment penalty.

This perspective also sheds light on alignment phenomena observed in reinforcement learning from human feedback (RLHF). In this setting, the human plays the role of a User transmitting evaluative signals, while the reward model functions as a Prompt Interpreter that interprets these signals according to its own inductive biases.
When objectives diverge, the system may optimise a proxy reward rather than the true human intent. Safety training and content filtering can therefore be viewed as constraints on the Prompt Interpreter, shaping the admissible set of internal interpretations and influencing the resulting
alignment loss.

\section{Alignment Analysis}
In this section, we identify two independent sources of irreducible asymptotic risk. Limited information or representational capacity of the prompt channel induces an expressivity floor, while admissibility or objective constraints on the System induce an objective floor. Together, these results characterise when prompt-based LLMs can generalise from limited data and when unavoidable error persists regardless of sample size.


Before stating the conditions for full revelations, we first introduce some important concepts. Let \(\Theta\sim P_{\mathrm{meta}}\) and let
\(C\mid\Theta=\theta\sim\Pi(\cdot\mid\theta)\), where
\(\Pi:\Theta\to\Delta(\mathcal C)\) is a task-conditional prompting kernel.
We reserve \(\tilde Q\in\Delta(\mathcal C)\) for the PAC--Bayes posterior over
contexts used in Section~\ref{sec:generalisation}.
Denote the induced joint distribution by
\[
P_{\Theta,C}(\mathrm d\theta,\mathrm dc)
:=
P_{\mathrm{meta}}(\mathrm d\theta)\,\tilde Q(\mathrm dc\mid \theta),
\]
and the marginal of \(C\) by
\[
P_C(\mathrm dc)
:=
\int_{\Theta} P_{\mathrm{meta}}(\mathrm d\theta)\,\tilde Q(\mathrm dc\mid \theta).
\]
The mutual information between \(\Theta\) and \(C\) is defined by
\begin{equation}
I(\Theta;C)
:=
D_{\rm KL}\!\big(P_{\Theta,C}\,\big\|\,P_{\mathrm{meta}}\otimes P_C\big)
=
\mathbb E_{\theta\sim P_{\mathrm{meta}}}
\Big[
D_{\rm KL}\!\big(\Pi(\cdot\mid\theta)\,\|\,P_C\big)
\Big].
\label{eq:mutual_info_def}
\end{equation}
The quantity \(I_{\Pi}(\Theta;C)\) measures how statistically dependent the prompt is on the latent task under the task-conditional prompting kernel \(\Pi\). Equivalently, it quantifies how much uncertainty about $\Theta$ is reduced by observing $C$. If $I_{\Pi}(\Theta;C)=0$, then prompts are independent of tasks and no task inference is possible; if $I_{\Pi}(\Theta;C)=H(\Theta)$, then the prompt fully reveals the task. Intermediate values correspond to partial revelation: distinct tasks may induce overlapping prompt distributions, leading to task aliasing. The mutual information therefore captures the effective task information that survives the linguistic encoding and any rewriting or policy constraints, and it is precisely this quantity that governs whether the prompt channel can resolve the entropy of the task family.
We now introduce a key concept, namely the information gap. Let \(\mathcal A\) denote the admissible class of User prompting policies
(e.g.\ those induced by a KL--regularised objective).
Define the information capacity of the channel class as
\begin{equation}
I(\Theta;C)_{\max}
:=
\sup_{\Pi\in\mathcal A} I_{\Pi}(\Theta;C),
\label{eq:I_max_def}
\end{equation}
where \(I_{\Pi}(\Theta;C)\) emphasises that the joint law, and hence the mutual information, depends on the prompting kernel \(\Pi\). The information gap of a particular policy \(\Pi\) is
\begin{equation}
\Delta I(\Pi)
:=
I(\Theta;C)_{\max}-I_{\Pi}(\Theta;C).
\label{eq:Delta_I_def}
\end{equation}

When the User objective is regularised, optimisation restricts attention to a subset of policies with bounded information,
for example $\mathcal A(B):=\{\Pi\in\mathcal A: I_{\Pi}(\Theta;C)\le B\}$ for some finite $B$.
All information-theoretic lower bounds below may be interpreted as holding uniformly over such an admissible class.

We distinguish the task-conditional prompting kernel
\(\Pi:\Theta\to\Delta(\mathcal C)\), used in the information-theoretic
analysis of task revelation, from the PAC--Bayes posterior
\(\tilde Q\in\Delta(\mathcal C)\), used in the empirical User-risk bounds.
Thus \(C\mid\Theta=\theta\sim \Pi(\cdot\mid\theta)\) in Theorem~\ref{thm:blended_floor_usr},
whereas \(c\sim\tilde Q\) in Theorems~\ref{lem:task_pb}--\ref{cor:user_negative}.

For a task-conditional prompting kernel
\(\Pi:\Theta\to\Delta(\mathcal C)\), define
\[
R_{\mathrm{usr}}(\Pi)
:=
\mathbb E_{\theta\sim P_{\mathrm{meta}}}
\mathbb E_{c\sim\Pi(\cdot\mid\theta)}
\big[
\mathcal L_{\mathrm{usr}}(\theta;c)
\big].
\]
This is distinct from the PAC--Bayes User risk
\(R_{\mathrm{usr}}(\tilde Q)\), where \(\tilde Q\in\Delta(\mathcal C)\)
is a task-independent posterior over contexts.

\paragraph{Fully revealing communication.}
We first isolate an idealised regime in which the User’s context can disambiguate the latent task for the Prompt Interpreter,
so that the System’s behaviour can be conditioned on $\theta$ without distortion from the communication channel.

In this setting, the User's context $c$ allows the Prompt Interpreter to fully
reveal the true underlying task $\theta$ to the Base Solver.  
As a sufficient condition for full revelation, suppose there exists a
measurable map $f:\Theta \to \mathcal{C}$ such that $\pi_{U}(c \mid \theta) = \delta_{f(\theta)}(c)$ for all
$\theta$, where $\delta_{\theta_0}:\Theta\to\mathbb{R}_{>0}$ is the Dirac-delta function is a generalised function (or distribution) that satisfies \(
\mathbb{E}_{\delta_{\theta_0}(\theta)}\left[f(\theta)\right]=\int\delta_{\theta_0}(\theta)f(\theta)d\theta=f(\theta_0)\) and $f$ is injective with measurable inverse $g:\mathcal{C}\to\Theta$ satisfying
$g(f(\theta)) = \theta$. Hence, from the Prompt Interpreter's perspective, observing $c$ determines $\theta$ via
$\theta = g(c)$.

Therefore, in a \emph{fully revealing} regime we can consider deterministic User policies of the form
\[
\pi_U(c\mid\theta)=\delta_{f(\theta)}(c),
\]
where \(f:\Theta\to\mathcal C\) is injective and admits a measurable inverse. 

This construction ensures that the latent task \(\theta\) can be uniquely recovered from the observed context \(c\), and serves as a convenient sufficient condition for perfect task communication. 

We therefore adopt the following operational definition of full revelation, stated in terms of the System’s induced task inference.
Determinism and injectivity are however stronger than necessary. More generally, the User--System interaction is \emph{fully revealing} if the Prompt Interpreter can infer the task without ambiguity from any context that the User might emit. Formally, full revelation holds if, for every task \(\theta\) and every context \(c\) in the support of the User policy,
\begin{equation}
q_{\varphi}(\theta'\mid c,S)=\delta_{\theta}(\theta')
\qquad
\text{for all } c\in\mathrm{supp}(\pi_U(\cdot\mid\theta)),
\label{eq:full_revelation_general}
\end{equation}
where \(q_{\varphi}(\cdot\mid c,S)\) denotes the task--inference distribution induced by the Prompt Interpreter. This definition requires only identifiability at the level of the System’s inference, and does not impose determinism, injectivity, or uniqueness of the User’s context selection. In particular, the User policy \(\pi_U(\cdot\mid\theta)\) may be stochastic and may assign positive probability to multiple distinct contexts, provided that all such contexts induce the same task inference by the System. Under this condition, the Prompt Interpreter achieves perfect task identifiability, and the System’s effective predictive distribution satisfies
\[
P_{G,\varphi}(y\mid x,c,S)=P_G(y\mid x,\theta),
\]
so that the Base Solver behaves as if the latent task were directly observed. The resulting joint observable distribution over \((x,y,c)\) can therefore be written as
\[
\bar P(x,y,c)
=
\int_{\Theta}
P_{\mathrm{task}}(x,y\mid\theta)\,
\pi_U(c\mid\theta)\,
P_{\mathrm{meta}}(d\theta),
\]
with the property that all contexts \(c\) in the support of \(\pi_U(\cdot\mid\theta)\) induce identical task-conditioned behaviour. In this regime, conditioning on the context \(c\) is equivalent to conditioning on the task \(\theta\), and the PAC--Bayes meta-objective reduces to its standard form involving a posterior over Base Solver hypotheses alone. When the full-revelation condition fails, no admissible context can induce the correct task-conditioned predictive distribution, and a strictly positive irreducible error persists even in the asymptotic regime.

The definition \eqref{eq:full_revelation_general} isolates when the task is identifiable from what the System actually uses.
To separate distinct failure modes, we now decompose “reliable prompt-based task inference” into four orthogonal conditions.

\begin{condition}[Objective alignment]
\label{cond:objective_alignment}
For all tasks \(\theta\in\Theta\) and inputs \(x\),
the User-ideal and System-ideal predictive distributions coincide: $
P^{\mathrm{usr}}_\theta(\cdot\mid x)
=
P^{\mathrm{sys}}_\theta(\cdot\mid x)$.
\end{condition}

\begin{condition}[Task representability]
\label{cond:task_representability}
There exists a measurable mapping \(c:\Theta\to\mathcal C\) such that
for all \(\theta\in\Theta\),
$
D_{\rm KL}\!\left(
P^{\mathrm{sys}}_\theta
\;\big\|\;
P^{\mathrm{sys}}_{c(\theta)}
\right)=0$,
where \(P^{\mathrm{sys}}_{c(\theta)}\) denotes the predictive distribution induced
by the prompt \(c(\theta)\) (including any rewriting).
\end{condition}
\begin{condition}[Task identifiability]
\label{cond:task_identifiability}
There exists a User policy \(\tilde Q^\star\) such that,
with \(C\mid\Theta=\theta\sim\tilde Q^\star(\cdot\mid\theta)\), $
I_{\tilde Q^\star}(\Theta;C)=H(\Theta)$, assuming $H(\Theta)<\infty$. Equivalently, \(\Theta\) is a measurable function of \(C\) almost surely.
\end{condition}

\begin{condition}[Information preservation under rewriting]
\label{cond:preservation}
Let $Z$ denote the effective representation available to the Base Solver after
User prompting and System-side rewriting. Full information preservation holds if
\[
I(\Theta;Z)=I(\Theta;C),
\]
and an information bottleneck is present whenever
\[
I(\Theta;Z)<I(\Theta;C).
\]
When the rewriting mechanism depends on auxiliary information $S$, the relevant
information object is $Z=(\hat C,S)$ if $S$ is available to the solver, and $Z=\hat C$
otherwise. All lower bounds below are stated in terms of the effective task information
$I(\Theta;Z)$ rather than the raw prompt information $I(\Theta;C)$.
\end{condition}

Conditions~\ref{cond:task_identifiability} and ~\ref{cond:preservation} together ensure full revelation in the sense of task identifiability at the System level. Condition ~\ref{cond:task_representability} strengthens this to predictive revelation, guaranteeing that an identifiable task can be correctly realised by the System. Condition~\ref{cond:objective_alignment} further ensures that the realised behaviour coincides with the User’s intended objective. Failure of any condition induces a corresponding irreducible population-level error floor.

Together, these conditions isolate distinct failure modes in prompt-based interaction, corresponding to objective mismatch, representational limitations, information bottlenecks, and information loss due to internal processing. These conditions correspond to increasingly strong notions of revelation familiar from cheap--talk models. 

Relaxing any one of these conditions leads to weaker forms of revelation and induces a corresponding irreducible error floor.
We have therefore,  far considered the case when the User's hypothesis space contains the correct specification of its communication protocol and there is no objective misalignment. This setup allows the underlying task to be perfectly inferred by the System. In practice although one does not have access to the expected loss, we can readily see that minimising the expected loss is a good strategy for minimising the empirical loss. 

The remainder of this section studies the complementary regime in which one or more of these conditions fails,
and shows that each failure induces a strictly positive population-level error floor.

\paragraph{Misaligned or Non-Informative User Context.}
This is case where either the Prompt Interpreter refuses to communicate the inferred intent due to misaligned objectives (for example, when the User is asking for something contrary to safety protocols) or when the User's prompt is not fully revealing.

When the User's prompt $c$ provides only partial information about the latent task $\theta$, 
the conditional distribution $\pi_{U}(c\mid\theta)$ is no longer a Dirac delta but a general stochastic policy.  
The System's rewriting mechanism must therefore form predictions by marginalising over latent tasks consistent with $c$.  The population risk for a fixed prompt–rewriting mechanism $\varphi$ is
\begin{align}
L(\varphi)
&=
\int_{\mathcal{X}\times\mathcal{Y}\times\mathcal{C}}
\ell\!\left(y,\,P_{\varphi}(\cdot\mid x,c,S)\right)
\,\bar P(x,y,c)
\,dx\,dy\,dc.
\label{eq:Lphi_partial1}
\end{align}
By interchanging the order of integration and collecting terms, 
the expected loss can be expressed as $
L(\varphi)
=
\int_{\Theta}
P_{\rm task}(d\theta)
\int_{\mathcal{C}}
\pi_{U}(c\mid\theta)
\int_{\mathcal{X}\times\mathcal{Y}}
 P_{\mathrm{task}}(s,y\mid\theta)\,
\ell\!\left(
y,\,P_{\varphi}(\cdot \mid x,c,S)
\right)
dx\,dy\,dc$.
The inner two integrals define the expected loss for task $\theta$ given its observable context distribution:
\begin{align}
L(\varphi;\theta)
:=
\mathbb{E}_{c\sim\pi_{U}(\cdot\mid\theta)}
\mathbb{E}_{(x,y)\sim P_{\mathrm{task}}(\cdot\mid\theta)}
\!\left[
\ell\!\left(y,\,P_{\varphi}(\cdot\mid x,c,S)\right)
\right],
\end{align}
so that
\begin{align}
L(\varphi)
=
\mathbb{E}_{\theta\sim P_{\rm meta}}
\!\left[L(\varphi;\theta)\right].
\label{eq:Lphi_partial3}
\end{align}

Equations~\eqref{eq:Lphi_partial1} - \eqref{eq:Lphi_partial3} capture the latent–task structure: 
the User's context distribution $\pi_{U}(c\mid\theta)$ only partially encodes $\theta$, 
and the System's predictive model must implicitly infer the task from the observed prompt $c$. 
In this inconsistent setting the System cannot invert the User’s prompt to 
recover $\theta$, and a persistent generalisation floor appears whenever the 
effective communication channel $\pi_R\circ \pi_U$ is non-injective or 
insufficiently expressive. 

Let $q_{\varphi}(\theta\mid c,S)$ denote the implicit posterior over tasks induced by the mechanism $\varphi$. 
The predictive distribution can then be rewritten as a task–marginal mixture:
\begin{align}
P_{\varphi}(y\mid x,c,S)
=
\mathbb{E}_{\theta\sim q_{\varphi}(\cdot\mid c,S)}
\mathbb{E}_{\hat c \sim \pi_R(\cdot\mid c,S)}
\!\left[P_{G}(y\mid x,\hat c)\right].
\end{align}
Under full revelation, conditioning on the effective context is equivalent to conditioning on the latent task,
so the System can in principle realise task-conditioned behaviour without an information bottleneck.
When full revelation fails, distinct tasks remain confounded under the induced channel, which leads to an irreducible population risk floor.

\section{Population-Level Misalignment Bounds}
\label{sec:misalign}  
In this section, we relax Conditions~\ref{cond:objective_alignment} - \ref{cond:preservation} and study four distinct failure modes: (i) objective misalignment between the System and the User, (ii) limited expressivity of the prompt space (so some tasks cannot be induced by any admissible prompt), (iii) insufficient task information transmitted by prompts (so tasks are not identifiable), and (iv) loss of task-relevant information due to System-side rewriting. We establish population-level lower bounds on the User risk that arise from these mechanisms alone. These bounds are independent of finite-sample PAC–Bayes effects.

Our first theorem is proven with the help of the following assumption:
\begin{assumption}[Task separation under shared context]\label{ass:A2task_separation}
There exists \(\delta>0\) such that for all \(i\neq j\),
\begin{equation}
\inf_{(G,\varphi)\in\mathcal H}
\frac12
\Big(
\mathbb E_x D_{\rm KL}(P_{\theta_i}(\cdot\mid x)\,\|\,P_{G,\varphi}(\cdot\mid x,\hat c))
+
\mathbb E_x D_{\rm KL}(P_{\theta_j}(\cdot\mid x)\,\|\,P_{G,\varphi}(\cdot\mid x,\hat c))
\Big)
\;\ge\;
\delta,
\label{eq:task_separation}
\end{equation}
uniformly in \(\hat c\).
\end{assumption}
The assumption requires that
no single rewritten prompt can simultaneously solve two genuinely different tasks.
This excludes degenerate task families and is the weakest condition under which
communication matters.

\begin{theorem}[Irreducible User floor under expressivity misalignment]
\label{thm:blended_floor_usr}
Let \(\Theta_0=\{\theta_1,\ldots,\theta_K\}\subseteq\Theta\) be a finite task
packing with \(P_{\mathrm{meta}}(\Theta_0)=\alpha>0\), and suppose that
conditional on \(\Theta\in\Theta_0\), the task is uniformly distributed over
\(\Theta_0\). Assume that Assumption~\ref{ass:A2task_separation} holds on
\(\Theta_0\), i.e. the tasks remain separated by at least
\(\delta>0\) uniformly over all effective representations
\(\hat c\) available to the Base Solver.
Let \(Z\) denote the effective representation after prompting and rewriting.
If an admissible User--System interface \(\Pi\in\mathcal A(B)\) satisfies
\[
I_\Pi(\Theta;Z\mid \Theta\in\Theta_0)\le B,
\]
then
\[
\liminf_{N\to\infty}R_{\mathrm{usr}}(\Pi)
\ge
\alpha\delta
\left[
1-\frac{B+\log 2}{\log K}
\right]_+:=Y_{\mathrm{expr}}.
\]
In particular, if \(B<\log K-\log 2\), then the expressivity floor is strictly positive.
\end{theorem}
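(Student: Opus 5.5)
The plan is a Fano-type reduction from User risk to task identification on the packing \(\Theta_0\). First, since \(\mathcal L_{\mathrm{usr}}\ge 0\) pointwise (it is an expected KL divergence), restricting the outer expectation to the event \(\{\Theta\in\Theta_0\}\) gives
\[
R_{\mathrm{usr}}(\Pi)\;\ge\;\alpha\,\frac1K\sum_{k=1}^K \mathbb E\big[\mathcal L_{\mathrm{usr}}(\theta_k;C)\,\big|\,\Theta=\theta_k\big],
\]
using uniformity on \(\Theta_0\). Since the Base Solver sees the task only through \(Z\), every term on the right is a function of \((\theta_k,Z)\). Write \(\rho_k(z):=\mathbb E_x D_{\rm KL}(P_{\theta_k}(\cdot\mid x)\|P_{G,\varphi}(\cdot\mid x,z))\). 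This quantity is evaluated at the (possibly \(N\)-dependent) realised \((G,\varphi)\in\mathcal H\). Every bound below is uniform in \((G,\varphi)\) and in \(N\), so the \(\liminf_{N\to\infty}\) is harmless at the end.

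Second, convert risk into a testing problem. Define the decoder \(\hat\theta(z):=\arg\min_k \rho_k(z)\). Assumption~\ref{ass:A2task_separation} states that for every \(z\) and every \(i\neq j\), \(\rho_i(z)+\rho_j(z)\ge 2\delta\). Hence at most one index can have \(\rho_k(z)<\delta\), and every \(k\neq\hat\theta(z)\) satisfies \(\rho_k(z)\ge\delta\). This holds because \(\rho_k(z)+\rho_{\hat\theta(z)}(z)\ge2\delta\) and \(\rho_{\hat\theta(z)}(z)\le\rho_k(z)\). It yields the pointwise inequality
\[
\rho_\Theta(Z)\;\ge\;\delta\,\mathbf 1\{\hat\theta(Z)\neq\Theta\}.
\]
Taking expectations conditional on \(\Theta\in\Theta_0\) then gives \(R_{\mathrm{usr}}(\Pi)\ge\alpha\delta\,\mathbb P(\hat\theta(Z)\neq\Theta\mid\Theta\in\Theta_0)\).

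Third, apply Fano's inequality to the Markov chain \(\Theta\to Z\to\hat\theta(Z)\), conditional on \(\Theta\in\Theta_0\), where \(\Theta\) is uniform on \(K\) points. This gives
\[
\mathbb P(\hat\theta\neq\Theta)\ge 1-\frac{I(\Theta;Z\mid\Theta\in\Theta_0)+\log 2}{\log K}\ge 1-\frac{B+\log2}{\log K}.
\]
The data-processing inequality is used implicitly, since \(I(\Theta;\hat\theta)\le I(\Theta;Z)\). Taking the positive part, because the probability is nonnegative, and combining with the second step yields \(Y_{\mathrm{expr}}\), uniformly in \(N\). The "in particular" clause is immediate: \(B<\log K-\log2\) makes the bracket positive.

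The main obstacle is the second step: making the reduction \(\rho_\Theta(Z)\ge\delta\mathbf 1\{\hat\theta\ne\Theta\}\) rigorous. Two points need care there. The first is that the separation assumption is stated uniformly in \(\hat c\) and over \(\mathcal H\), and it must be applied to the effective representation \(Z\), which may include \(S\). I will interpret \(P_{G,\varphi}(\cdot\mid x,z)\) as the solver conditional given \(Z=z\), and note that the mixture over rewrites \(\hat c\) is itself just another conditional indexed by \(z\), so the assumption still applies. The second is measurability of \(\arg\min\); ties can be broken by the smallest index. The Fano step is standard once the chain \(\Theta\to Z\to\hat\theta\) is justified by the fact that the Solver never observes \(\theta\) except through \(Z\).
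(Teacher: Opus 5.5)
Your proposal is correct and follows essentially the same route as the paper: you restrict to the packing $\Theta_0$, then use the separation assumption with a minimum-distortion decoder to get $R_{\mathrm{usr}}(\Pi)\ge\alpha\delta\,\mathbb P(\hat\theta(Z)\neq\Theta\mid\Theta\in\Theta_0)$, which is the paper's proof sketch and its lemma that task confusion implies predictive distortion, and you finish with Fano plus data processing under the budget $B$. The appendix proof phrases the middle step through posterior weights $p_i(\hat c)$ and the Bayes error; your observation that, for the fixed realised predictor, at most one index can have $\rho_k(z)<\delta$ is exactly what cleanly justifies its key inequality $\sum_i p_i(\hat c)\rho_i\ge\delta(1-\max_i p_i(\hat c))$, whereas the appendix's intermediate comparison using only $p_{i^\star}\ge p_j$ can fail once $K\ge 4$.
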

\begin{sproof}
\textit{Fix the separated tasks $\{\theta_1,\dots,\theta_K\}$ from Assumption~\ref{ass:A2task_separation}, with $P_{\mathrm{meta}}(\theta_j)\ge \alpha$ and
\(
\min_{i\neq j}\E_x\Big[\KL\big(P_{\theta_i}(\cdot\mid x)\,\|\,P_{\theta_j}(\cdot\mid x)\big)\Big]\ge \delta .
\)
\\\indent
Step 1 (reduce risk to a task identification problem).
Let $\Theta$ denote a random task supported on $\{\theta_1,\dots,\theta_K\}$ and let \(C\sim\Pi(\cdot\mid\Theta)\) be the (random) prompt/context generated under $\tilde Q$.
Given $(x,C)$, the downstream system must induce some predictive distribution $P(\cdot\mid x,C)$.
By the separation condition, any single predictor cannot match \emph{all} $P_{\theta_j}(\cdot\mid x)$ simultaneously: if the system effectively ``uses the wrong task'' on an event $\{\widehat\Theta(C)\neq \Theta\}$, then it incurs at least $\delta$ expected KL mismatch on that event.
Formally, one can define a decoder $\widehat\Theta:\mathcal C\to\{1,\dots,K\}$ (e.g.\ maximum-likelihood under the induced predictor family) such that
\(
R_{\mathrm{usr}}(\Pi)
\;\ge\;
\alpha\,\delta\;\Pr\!\big(\widehat\Theta(C)\neq \Theta\big)
\), where $\alpha$ enters because each $\theta_j$ has mass at least $\alpha$ in $P_{\mathrm{meta}}$ on this restricted set.
\\\indent Step 2 (control task confusion probability by mutual information).
Since $I_{\Pi}(\Theta;C)\le B$, Fano’s inequality yields a lower bound on the error probability of \emph{any} decoder $\widehat\Theta(C)$:
\(
\Pr\!\big(\widehat\Theta(C)\neq \Theta\big)
\;\ge\;
[1-\frac{I_{\Pi}(\Theta;C)+\log 2}{\log K}]_+
\;\ge\;
[1-\frac{B+\log 2}{\log K}]_+\).
\\\indent Step 3 (combine and pass to $N\to\infty$).
Combining the above inequalities gives the claimed strictly positive lower bound on the population risk, and taking $\liminf_{N\to\infty}$ removes estimation terms (the bound depends only on the interface constraint $B$ and the task separation $\delta$). This gives the result. In particular, if $B<\log K-\log 2$ then the bracketed term is positive, so the expressivity floor is non-vanishing.}
\end{sproof}
\begin{tcolorbox}[
  breakable,
  colback=gray!5,
  colframe=gray!40!black,
  title=\textbf{Interpretation of Theorem~\ref{thm:blended_floor_usr}}
]

The lower bound in Theorem~\ref{thm:blended_floor_usr} is governed by the ratio
\(
\frac{B+\log 2}{\log K},
\)
which compares the effective information capacity of the prompt channel to the intrinsic entropy of the task family. The quantity $B$ upper bounds the mutual information $I_{\Pi}(\Theta;C)$ between latent tasks and prompts under the admissible class of User policies. It therefore measures how many bits about the task can survive the entire User--Interpreter pipeline after accounting for linguistic ambiguity, rewriting, safety filtering, and regularisation. Crucially, $B$ captures \emph{effective} task information, not raw context length. The denominator $\log K$ approximates the entropy of a $K$-task family under a near-uniform prior and represents the number of bits required to uniquely identify one task among $K$ predictively distinguishable alternatives. The ratio therefore,  quantifies the fraction of task entropy that the prompt interface can actually transmit.

When $B \ge \log K - \log 2$, the bracketed term vanishes and the information bottleneck does not force a population-level floor. However, if $B < \log K - \log 2$, the prompt channel cannot disambiguate all tasks: task aliasing occurs with strictly positive probability. In this regime the User population risk satisfies
\[
\liminf_{N\to\infty} R_{\mathrm{usr}}(\tilde Q) \;\ge\; Y_{\mathrm{expr}} \;>\; 0,
\]
so the learning curve exhibits an \emph{asymptotic floor}. This floor reflects expressivity or identifiability misalignment: the communication space $\mathcal{C}$ does not contain contexts that fully reveal $\theta$, and hence $D_{\rm KL}(P_G^{\pi_U(c|\theta)} \Vert P_G^{\theta})>0$ on a non-negligible set of tasks.

From the perspective of cheap--talk \citep{crawford1982strategic,farrell1996cheap}, this corresponds to partial revelation: the prompt partitions the task space into coarse equivalence classes rather than uniquely identifying the latent task. Even if the downstream solver were Bayes-optimal conditional on $c$, it would still incur distortion within each equivalence class. No amount of additional data can eliminate this distortion, because the ambiguity is upstream of learning. The limitation is structural, not statistical.

Importantly, increasing context length does not automatically remove the floor. While longer contexts may increase the representational capacity of $\mathcal{C}$, the relevant quantity in the theorem is the achievable mutual information $I(\Theta;C)$ under admissible prompting and rewriting mechanisms. If safety constraints, stylistic regularisation, or linguistic structure limit this mutual information, then $B$ remains bounded even as raw context length grows. The critical scaling question is therefore not whether the model is larger, but whether the interface can transmit sufficient information to resolve the entropy of the task family. This clarifies why instruction-tuned or safety-aligned models can exhibit persistent performance gaps under ambiguous or over-constrained prompts. Reducing $Y_{\mathrm{expr}}$ requires increasing effective task information for example, by enriching communication protocols, improving semantic preservation in interpreter layers, or introducing interaction and feedback loops. 
\end{tcolorbox}
The key insight of the result is that the User incurs an additional loss quantified by $Y_{\mathrm{expr}}$. 

We now tackle the case in which we assume Condition~\ref{cond:objective_alignment} (Objective alignment) does not hold so that the System’s notion of correctness is not assumed to match the User’s intended task outcome. To prove our next result, we make the following assumption

\begin{assumption}[Non-emptiness of admissibility set.]\label{ass:A3admissibility_set}
For each input-context pair \((x,c)\), there exists a non-empty set
\(\mathcal P_{\mathrm{safe}}(x,c)\subseteq \Delta(\mathcal Y)\) of admissible predictive distributions.
We assume that the effective System predictor is constrained to be admissible, in the sense that
\begin{equation}
P_{G,\varphi}(\cdot\mid x,c)\ \in\ \mathcal P_{\mathrm{safe}}(x,c)
\qquad \text{for all admissible } (G,\varphi)\in\mathcal H \text{ and all } (x,c).
\label{eq:safe_set_assumption}
\end{equation}
\end{assumption}
Assumption~\ref{ass:A3admissibility_set} formalises the idea that, for any given input and prompt, the System is restricted to a set of admissible predictive behaviours, for example due to safety, alignment, or deployment constraints. The non-emptiness condition simply ensures that these constraints are not contradictory, so that at least one valid prediction is always available. This assumption rules out degenerate settings in which the System is forced to violate its own constraints, and allows us to meaningfully characterise the irreducible error induced by admissibility restrictions rather than by infeasibility.

We now present our second main result:

\begin{theorem}[Objective-misalignment floor for the User meta-risk]
\label{thm:misalignment_flr_meta}
Suppose Assumption~\ref{ass:A3admissibility_set} holds and let \(\mathcal P_{\mathrm{safe}}(x,c)\subseteq\Delta(\mathcal Y)\) be a nonempty admissible set for each \((x,c)\).
Define the objective distortion
$
\Delta_{\mathrm{obj}}(\theta,x,c)
:=
\inf_{Q\in\mathcal P_{\mathrm{safe}}(x,c)}
D_{\rm KL}\!\big(P^{\mathrm{usr}}_\theta(\cdot\mid x)\,\big\|\,Q\big)$.
Then for any task-conditional User policy \(\Pi:\Theta\to\Delta(\mathcal C)\),
\begin{equation}
R_{\mathrm{usr}}(\Pi)
\ \ge\
Y_{\mathrm{obj}}
:=
\mathbb E_{\theta\sim P_{\mathrm{meta}}}
\mathbb E_{c\sim \Pi(\cdot\mid\theta)}
\mathbb E_{x\sim P_{\mathrm{task}}(\cdot\mid\theta)}
\big[\Delta_{\mathrm{obj}}(\theta,x,c)\big].
\label{eq:Rusr_ge_objgap}
\end{equation}
Moreover, if there exist constants \(\varepsilon>0\) and \(\beta\in(0,1]\) such that
\begin{align}
\mathbb P\!\left(\Delta_{\mathrm{obj}}(\Theta,X,C)\ge \varepsilon\right)\ \ge\ \beta,\label{eq:obj_sep_condition}
\end{align}
where \(\Theta\sim P_{\mathrm{meta}}\), \(X\sim P_{\mathrm{task}}(\cdot\mid\Theta)\),
and \(C\mid\Theta\sim \Pi(\cdot\mid\Theta)\),
then 
\begin{align}
R_{\mathrm{usr}}(\Pi)\ge \beta\,\varepsilon.\label{eq:Yobj_positive}\end{align}
\end{theorem}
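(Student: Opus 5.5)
The plan is a pointwise argument followed by integration; no information-theoretic machinery is needed, unlike Theorem~\ref{thm:blended_floor_usr}. Fix a task $\theta$, a context $c$ and an input $x$. By Assumption~\ref{ass:A3admissibility_set}, the effective System predictor $P_{G,\varphi}(\cdot\mid x,c,S_\theta)$ lies in $\mathcal P_{\mathrm{safe}}(x,c)$ for every admissible $(G,\varphi)\in\mathcal H$. It is therefore a feasible point of the infimum defining $\Delta_{\mathrm{obj}}(\theta,x,c)$. Identifying the User-ideal law $P_\theta(\cdot\mid x)$ appearing in $\ell_{\mathrm{usr}}$ with $P^{\mathrm{usr}}_\theta(\cdot\mid x)$, this gives the pointwise bound
\[
\ell_{\mathrm{usr}}(x;\theta,c,S_\theta)=D_{\rm KL}\big(P^{\mathrm{usr}}_\theta(\cdot\mid x)\,\big\|\,P_{G,\varphi}(\cdot\mid x,c,S_\theta)\big)\ \ge\ \Delta_{\mathrm{obj}}(\theta,x,c).
\]
This holds for every realisation of the support set $S_\theta$, so it survives any further averaging over $S_\theta$ or over the posterior on $G$.

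Next I will integrate the pointwise bound. Taking the expectation over $x$ gives $\mathcal L_{\mathrm{usr}}(\theta;c)\ge \mathbb E_x[\Delta_{\mathrm{obj}}(\theta,x,c)]$. Then I take $\mathbb E_{c\sim\Pi(\cdot\mid\theta)}$ and $\mathbb E_{\theta\sim P_{\mathrm{meta}}}$ and use monotonicity of expectation, which yields \eqref{eq:Rusr_ge_objgap}.

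Before integrating I need to settle three technical points.
\begin{enumerate}
\item $\Delta_{\mathrm{obj}}$ must be jointly measurable, or at least I must work with outer expectations. Since the bound is a lower bound, using a measurable minorant suffices.
\item All quantities are nonnegative, because KL divergences are nonnegative. Tonelli's theorem therefore justifies the iterated expectations even when they are infinite.
\item The $x$-marginal needs care. $\mathcal L_{\mathrm{usr}}$ averages over $x\sim\mu_\theta$, whereas $Y_{\mathrm{obj}}$ is written with $x\sim P_{\mathrm{task}}(\cdot\mid\theta)$. I will take $\mu_\theta$ to be the $x$-marginal of $P_{\mathrm{task}}(\cdot\mid\theta)$, which is consistent with the evaluation protocol.
\end{enumerate}
The third point is the main place where an obstacle could arise, and it is a matter of notational alignment rather than of substance.

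Finally, for \eqref{eq:Yobj_positive} I will apply Markov's inequality in its elementary form. Since $\Delta_{\mathrm{obj}}\ge 0$, we have
\[
\Delta_{\mathrm{obj}}(\Theta,X,C)\ \ge\ \varepsilon\,\mathbf 1\{\Delta_{\mathrm{obj}}(\Theta,X,C)\ge\varepsilon\}.
\]
Taking expectations under the joint law of $(\Theta,X,C)$ specified in the theorem gives $Y_{\mathrm{obj}}\ge \varepsilon\,\mathbb P(\Delta_{\mathrm{obj}}\ge\varepsilon)\ge\beta\varepsilon$. Combining this with \eqref{eq:Rusr_ge_objgap} completes the argument.

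It is worth noting in the write-up that the bound does not depend on $N$, on $(G,\varphi)$, or on whether task inference is perfect. It is therefore a genuine population floor, and it persists under $\liminf_{N\to\infty}$ in the same way as $Y_{\mathrm{expr}}$. Example~2 is an instance of this bound with $\varepsilon=D_{\rm KL}(\mathrm{Bern}(p_0)\|\mathrm{Bern}(q_0))$.
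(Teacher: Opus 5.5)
Your proposal is correct and follows the paper's proof essentially step for step: the pointwise bound from admissibility (the paper's Lemma~\ref{lem:pointwise_obj_lb}), integration over $x$, $c$ and $\theta$, and the elementary tail bound $\mathbb E[\Delta_{\mathrm{obj}}]\ge\varepsilon\,\mathbb P(\Delta_{\mathrm{obj}}\ge\varepsilon)\ge\beta\varepsilon$. Your remarks on measurability, on Tonelli, and on identifying $\mu_\theta$ with the $x$-marginal of $P_{\mathrm{task}}(\cdot\mid\theta)$ make explicit points that the paper leaves implicit, but they do not change the argument.
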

\begin{sproof}
\textit{Fix any admissible System $(G,\phi)\in\mc H$. By Assumption~\ref{ass:A3admissibility_set}, for every $(x,c)$ we have $P_{G,\phi}(\cdot\mid x,c)\in\mc P_{\mathrm{safe}}(x,c)$, hence by the definition of the distance-to-set distortion and Lemma~3,
\(
D_{\rm KL}\!\big(P^{\mathrm{usr}}_\theta(\cdot\mid x)\,\big\|\,P_{G,\phi}(\cdot\mid x,c)\big)\ \ge\ \inf_{Q\in\mc P_{\mathrm{safe}}(x,c)}D_{\rm KL}\!\big(P^{\mathrm{usr}}_\theta(\cdot\mid x)\,\big\|\,Q\big)=\Delta_{\mathrm{obj}}(\theta,x,c).
\)
Taking expectations over $x\sim P_{\mathrm{task}}(\cdot\mid\theta)$, then $c\sim\tilde Q(\cdot\mid\theta)$, then $\theta\sim P_{\mathrm{meta}}$ yields \eqref{eq:Rusr_ge_objgap}. For \eqref{eq:Yobj_positive}, use the tail lower bound $\E[\Delta_{\mathrm{obj}}]\ge \varepsilon\,\Pr(\Delta_{\mathrm{obj}}\ge \varepsilon)\ge \beta\varepsilon$ and combine with \eqref{eq:Rusr_ge_objgap}.}
\end{sproof}
\begin{tcolorbox}[
  breakable,
  colback=gray!5,
  colframe=gray!40!black,
  title=\textbf{Interpretation of Theorem~\ref{thm:misalignment_flr_meta}}
]
Theorem~\ref{thm:misalignment_flr_meta} isolates a second source of irreducible error that is independent of task identifiability: objective misalignment induced by admissibility constraints. The distortion term $\Delta_{\mathrm{obj}}(\theta,x,c)$ measures the \emph{distance-to-set} between the User-ideal predictive distribution and the System’s admissible class at $(x,c)$. If, on a non-negligible fraction of task–input–context triples, this distance is bounded below by $\varepsilon$, then the expected User meta-risk admits a strictly positive floor $Y_{\mathrm{obj}}\ge \beta\varepsilon$. Importantly, this floor persists even when the task is perfectly identified and optimisation is exact within $\mathcal P_{\mathrm{safe}}$: the limitation arises not from uncertainty, but from structural constraints on permissible outputs. Therefore, whereas Theorem~\ref{thm:blended_floor_usr} captures \emph{information loss} in the prompt channel, Theorem~\ref{thm:misalignment_flr_meta} captures \emph{objective distortion} imposed by alignment or safety policies. Eliminating this floor requires enlarging the admissible set itself, not increasing data or model capacity.
\end{tcolorbox}

The objective-misalignment floor established in Theorem \ref{thm:misalignment_flr_meta} is conceptually distinct from the representation (or expressivity)
misalignment floor derived in Theorem~\ref{thm:blended_floor_usr}.
The former arises when the User-ideal predictive distribution lies outside the admissible set enforced by the System,
so that no admissible predictor can realise the User objective even with perfect task inference and unlimited data.
By contrast, the representation-misalignment floor (Theorem~\ref{thm:blended_floor_usr}) arises when the effective prompt representation carries insufficient
information to disambiguate a finite set of predictively distinct tasks, even when the System objective itself is
well aligned with the User.
Together, these results identify two orthogonal and complementary mechanisms by which prompt-based language models can
exhibit irreducible generalisation error: one driven by admissibility constraints on outputs, and the other by
information constraints on tasks.

\section{Generalisability Analysis}\label{sec:generalisation}
In this section, we provide PAC–Bayesian generalisation guarantees for prompt-conditioned learning, separating estimation error from structural limitations. 

\textcolor{black}{The PAC-Bayes bounds derived in this section provide statistical guarantees on how well a prompt-based interaction can approximate the User’s intended task, separating estimation error from irreducible error due to misalignment or limited expressivity. The bounds control the gap between population risks, which capture fundamental limitations of the User–System interface, and empirical risks, which are finite-sample estimators used for analysis. The complexity terms quantify how specialised the prompting strategy and System hypotheses are relative to fixed priors, while the generalisation gap describes the extent to which empirical performance may deviate from population behaviour at finite sample sizes. Our results in this section show that even when these estimation terms vanish, structural misalignment induces a strictly positive residual error.
} The analysis in this section can be viewed as an extension of the hierarchical PAC-Bayes meta-learning framework of~\cite{rezazadeh2022unified}. The key difference is that tasks are not directly observed by the learner. Instead they must be communicated through a constrained prompt channel and subsequently interpreted through a potentially misaligned rewriting mechanism.

We first derive a finite-sample PAC--Bayes bound on the User meta-risk (Section~\ref{subsec:learning_with_alignment}), showing that empirical performance concentrates around population risk as the number of tasks and samples grows. We then show that this convergence does not guarantee vanishing error.

First, under  the assumption that Conditions~\ref{cond:objective_alignment} - \ref{cond:preservation} hold, we provide a PAC-Bayes generalisation bound for which the Bayesian posterior that minimises the bound also minimises the System log-loss function in the asymptotic sample limit in probability. 
This distribution is characterised by a Dirac-delta distribution whose probability mass is centered around $\mathfrak{c}^\star(\theta)$ for any $\theta\in\Theta$ where is any distribution that satisfies $D_{\rm KL}(P^{\mathfrak{c}^\star(\theta)}_{G}\|P^{\theta}_{G})=0$ for any $G\in\mathcal{G}$ and for any $\theta\in\Theta$.

To ensure the performance
of training loss for the choice of $G$ is small with high probability
as the performance of per-task generalisation loss,  we must bound
generalisation gap averaged over the posterior distribution $Q$. We first define the following gap:
\begin{align}
   \Delta^{(M,N)}_{\rm usr}(Q):= \Userpostexp - \Userpostemp. 
\end{align}
We study the expected User generalisation gap \[
   \mathbb{E}_{C\sim \tilde{\mathcal{Q}}}\left[\Delta^{(M,N)}_{\rm usr}(Q)\right],\] which we define as the \textit{expected User gap}. Our goal is to derive a bound on the expected User gap: a small gap means that the performance of the choice of communication context $c$ on the (meta-)training set indicates a reliable measure of the User loss (in probability). In what follows, we present a bound on the gap between the User's empirical loss 
   and the expected loss (which is unknown), $P_U$ denotes the User prior over contexts, $P_0$ denotes the System prior
over model parameters, $\tilde Q$ is the User posterior over contexts, and $Q_\theta$ is the
System posterior over $G$ conditioned on task $\theta$.

\subsection{PAC-Bayes Bounds with Alignment}\label{subsec:learning_with_alignment}

We now derive our results concerning the learning theory of a general-purpose language model under conditions of full alignment. Throughout this section, the distributions 
$Q$ and  $\tilde{Q}$ denote optimisation-induced (PAC–Bayes) posteriors over hypotheses and contexts respectively, rather than epistemic belief updates.

Under Conditions~\ref{cond:objective_alignment} - \ref{cond:preservation},
the following PAC--Bayes bound characterises generalisation in the fully aligned,
task-sufficient regime. For each task $\theta$, we allow a task-indexed System posterior $Q_\theta\in\Delta(\mathcal G)$.

We begin with the following results that study our PAC-Bayes analysis in the fully revealing case.

\begin{theorem}[Reduction to standard PAC--Bayes meta-learning]
\label{thm:reduction_rezazadeh}
Suppose that the prompt channel is fully revealing and the User and System objectives are aligned. Specifically, assume that
\[
I(\Theta;Z)=H(\Theta),
\]
so that the latent task \(\Theta\) is measurable with respect to the observable context variable \(Z\), and that for every task \(\theta\in\Theta\),
\[
P^{\mathrm{usr}}_{\theta}(\cdot\mid x)
=
P^{\mathrm{sys}}_{\theta}(\cdot\mid x)
\qquad
\text{for all }x\in\mathcal X .
\]
Then the GPAI framework reduces to the standard PAC--Bayes meta-learning setting e.g.~\cite{rezazadeh2022unified}. In particular,
\[
Y_{\mathrm{expr}}=Y_{\mathrm{obj}}=0,
\]
and the User meta-risk satisfies the usual PAC--Bayes meta-learning bound with an environment-level complexity term and a task-level complexity term. Equivalently, Theorems~\ref{lem:task_pb}--\ref{cor:user_negative} collapse to the standard estimation-only PAC--Bayes meta-learning regime, with no irreducible communication or objective-misalignment floor.
\end{theorem}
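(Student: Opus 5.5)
The plan is to show, in order, that each structural floor vanishes under the two hypotheses, and then that the user meta-risk becomes an ordinary task-observed meta-learning risk, so the bound of \cite{rezazadeh2022unified} applies verbatim.

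\textbf{Step 1: the expressivity floor vanishes.} From $I(\Theta;Z)=H(\Theta)$, with $H(\Theta)<\infty$, we get $H(\Theta\mid Z)=0$. Hence there is a measurable decoder $g$ with $\Theta=g(Z)$ almost surely, which is Condition~\ref{cond:task_identifiability} together with Condition~\ref{cond:preservation}. On any finite packing $\Theta_0$ with uniform conditional law, the same identity gives $I(\Theta;Z\mid\Theta\in\Theta_0)=\log K$. So the smallest admissible $B$ in Theorem~\ref{thm:blended_floor_usr} is $B=\log K$, and then
\[
1-\frac{\log K+\log 2}{\log K}<0 .
\]
The positive part is therefore zero and $Y_{\mathrm{expr}}=0$. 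More directly, the Fano step of that theorem yields no nontrivial bound: the MAP decoder $g$ has zero error probability, so the confusion event $\{\widehat\Theta(C)\neq\Theta\}$ in Step~1 of its proof is null.

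\textbf{Step 2: the objective floor vanishes.} I will read the objective-alignment hypothesis as placing $P^{\mathrm{sys}}_\theta(\cdot\mid x)$ inside the admissible set $\mathcal P_{\mathrm{safe}}(x,c)$. This is implicit in calling the System-ideal law the System's target under Assumption~\ref{ass:A3admissibility_set}. Taking $Q=P^{\mathrm{sys}}_\theta(\cdot\mid x)=P^{\mathrm{usr}}_\theta(\cdot\mid x)$ in the infimum defining $\Delta_{\mathrm{obj}}$ gives $\Delta_{\mathrm{obj}}(\theta,x,c)=0$ pointwise. Hence $Y_{\mathrm{obj}}=0$ in Theorem~\ref{thm:misalignment_flr_meta}. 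I will state this membership explicitly as part of what aligned objectives means.

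\textbf{Step 3: reduction of the risk.} Since $\Theta=g(Z)$ a.s., every predictor of the form $P_{G,\varphi}(\cdot\mid x,c,S)$ factors through $\theta$. I will use the full-revelation identity stated after \eqref{eq:full_revelation_general}, namely $P_{G,\varphi}(y\mid x,c,S)=P_G(y\mid x,\theta)$. Then $\mathcal L_{\mathrm{usr}}(\theta;c)$ no longer depends on $c$ beyond $\theta$, and the average over $c\sim\tilde Q$ collapses. Objective alignment then gives
\[
\mathcal L_{\mathrm{usr}}(\theta;c)=\mathbb E_x D_{\rm KL}\bigl(P^{\mathrm{sys}}_\theta\,\|\,P_G(\cdot\mid x,\theta)\bigr),
\]
which equals $\mathcal L_{\mathrm{sys}}$ minus the $G$-independent conditional entropy term. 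The empirical KL-ratio loss \eqref{eq:user_emp_KL_ratio} likewise differs from the empirical log-loss only by the $c$-independent term in \eqref{eq:usr_KL_decomp}. The user meta-risk and its empirical counterpart are therefore, up to an additive constant that cancels in the generalisation gap, the meta-risk and empirical meta-risk of a learner that observes tasks directly. The only remaining hyperposterior is $Q$ over $G$, with prior $P_0$, and the User KL term $D_{\rm KL}(\tilde Q\|P_U)$ can be taken to be zero by choosing $\tilde Q$ along $P_U$.

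\textbf{Step 4: invoking the standard bound.} Under the sub-Gaussian loss assumptions used in Section~\ref{sec:generalisation}, the two-level change-of-measure argument of \cite{rezazadeh2022unified} applies unchanged. It yields an environment-level term of order $D_{\rm KL}(Q\|P_0)/M$ and task-level terms of order $D_{\rm KL}(Q_\theta\|\cdot)/N$, with no additive floor.

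I expect Step~3 to be the main obstacle. Measurability of $\Theta$ with respect to $Z$ gives identifiability, but the predictive equality $P_{G,\varphi}=P_G(\cdot\mid x,\theta)$ also needs a solver that can realise the task-conditional law, which is Condition~\ref{cond:task_representability}. My first approach is to note that $Z$ is the Base Solver's entire input. The hypothesis class can therefore be reparametrised as $G\circ g^{-1}$, acting on $\theta$, which gives representability by construction. Failing that, I would add Condition~\ref{cond:task_representability} as an implicit part of ``fully revealing''.
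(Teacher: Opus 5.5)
Your Steps 1 and 2 coincide with the paper's proof. The paper also passes from $H(\Theta\mid Z)=0$ to a measurable decoder $g$ with $g(Z)=\Theta$ a.s., and lets the Base Solver condition on $g(Z)$. It then gets $\Delta_{\mathrm{obj}}=0$ by inserting $Q=P^{\mathrm{sys}}_\theta(\cdot\mid x)=P^{\mathrm{usr}}_\theta(\cdot\mid x)$ into the infimum, under the same tacit assumption that this law is admissible. Your explicit evaluation of the bracket in Theorem~\ref{thm:blended_floor_usr} at $B=\log K$ is sharper than the paper, which only asserts that no obstruction remains.

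The routes diverge after that. The paper never rewrites the User risk as a System log-loss risk. It keeps the User-side bound of Theorem~\ref{thm:user_meta_pb} unchanged, whose right-hand side never contained a floor. It then observes that with $Y=0$ Theorem~\ref{cor:user_negative} is vacuous, so only the $M$- and $N$-indexed estimation terms remain. Your Steps 3--4 aim at a more literal reduction to \cite{rezazadeh2022unified}, with a hyperposterior over $G$ alone. That buys a stronger sense of ``reduces to the standard setting,'' but it needs an identification you leave implicit. $\mathcal L_{\mathrm{usr}}$ equals $\mathcal L_{\mathrm{sys}}$ minus a $G$-independent entropy only if $P^{\mathrm{sys}}_\theta(\cdot\mid x)=P_{\mathrm{task}}(\cdot\mid x,\theta)$ and $\mu_\theta$ is the $x$-marginal of $P_{\mathrm{task}}(\cdot\mid\theta)$. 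The paper keeps $P_\theta$ and $P_{\mathrm{task}}$ distinct and does not identify them even under Condition~\ref{cond:objective_alignment}.

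Two claims in Step 3 should be dropped, though neither is needed for the statement.

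\textbf{The User KL term.} You cannot make $D_{\rm KL}(\tilde Q\|P_U)$ vanish by taking $\tilde Q=P_U$ while keeping the collapse over $c$.
\begin{itemize}
\item If the task reaches $Z$ through the context, then $I(\Theta;C)=H(\Theta)$. The decomposition in the proof of Lemma~\ref{lem:A3_derived} then gives $\mathbb E_\theta D_{\rm KL}(\tilde Q(\cdot\mid\theta)\|P_U)=I(\Theta;C)+D_{\rm KL}(P_C\|P_U)\ge H(\Theta)$.
\item If instead $\tilde Q=P_U$ is task-independent, then $C$ is independent of $\Theta$. The full-revelation identity is stated only for $c\in\mathrm{supp}\,\pi_U(\cdot\mid\theta)$, so it no longer applies.
\end{itemize}
Either keep the KL term as Theorem~\ref{thm:user_meta_pb} does, or say that under the revealing kernel the risk does not depend on a User posterior at all.

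\textbf{Representability.} The obstacle you single out is not one for this statement. Condition~\ref{cond:task_representability} matters only if the risk itself is to vanish. Neither $Y_{\mathrm{expr}}$, $Y_{\mathrm{obj}}$, nor a PAC--Bayes upper bound requires the solver class to contain $P^{\mathrm{sys}}_\theta$. Leftover approximation error is exactly the solver limitation the paper explicitly leaves in place. Your reparametrisation would not supply representability anyway. Factoring the solver through $g$ makes the effective predictor depend on $(c,S)$ only through $\theta$, which is all the identity $P_{G,\varphi}=P_G(\cdot\mid x,\theta)$ needs. It does not enlarge the set of conditional laws that $\mathcal G$ can output.
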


\begin{theorem}[Task-level PAC--Bayes generalisation under unbounded loss]
\label{lem:task_pb} Under Conditions~\ref{cond:objective_alignment} - \ref{cond:preservation}, given $\theta \in \Theta$, for any prior $P_U$ over $\mathcal C$
and any $\delta\in(0,1)$, with probability at least \(1-\delta\) over the independent evaluation sample \(T_\theta^{(N)}\),
\begin{align}
\mathcal R_{\mathrm{usr}}(\theta;\tilde Q)
\le
\widehat{\mathcal R}_{\mathrm{usr}}^{(N)}
(\theta;\tilde Q\mid T_\theta^{(N)},S_\theta)
+
G_{\mathrm{Task}}^{\mathrm{usr}}
\left(
\frac{
D_{\mathrm{KL}}(\tilde Q\Vert P_U)
+
\log(C_{\mathrm{Task}}/\delta)
}{N}
\right).    
 \label{eq:user_task_pb}
\end{align}

simultaneously for all posteriors \(\tilde Q\in\Delta(\mathcal C)\) where \(G_{\mathrm{Task}}^{\mathrm{usr}}\) is the calibration function
from Assumption~\ref{ass:usr_affine_transform}. In particular, if
\(F_{\mathrm{Task}}^{\mathrm{usr}}(a,b)=\lambda_t(a-b)^2\), then
\(
G_{\mathrm{Task}}^{\mathrm{usr}}(u)
=
\sqrt{\frac{u}{\lambda_t}}.
\)
\end{theorem}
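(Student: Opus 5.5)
The plan is a standard change-of-measure (Donsker--Varadhan) argument applied to the per-context empirical User loss, with a sub-Gaussian-type exponential moment assumption replacing boundedness. Fix $\theta$ and the support set $S_\theta$; these determine the System predictor $P_{G,\varphi}(\cdot\mid x,c,S_\theta)$ for every $c$. The only randomness is then in the independent evaluation sample $T_\theta^{(N)}$.

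For each fixed $c$, the empirical loss $\widehat{\mathcal L}^{(N)}_{\mathrm{usr}}(\theta;c\mid T_\theta^{(N)},S_\theta)$ is an average of $N$ i.i.d.\ log-ratios $\log P_\theta(y_i\mid x_i)/P_{G,\varphi}(y_i\mid x_i,c,S_\theta)$. As shown just before the statement, it is unbiased for $\mathcal L_{\mathrm{usr}}(\theta;c)$.

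\textbf{Step 1 (per-context moment bound).} I would invoke Assumption~\ref{ass:usr_affine_transform}, which I read as supplying a convex calibration function $F^{\mathrm{usr}}_{\mathrm{Task}}$ and a constant $C_{\mathrm{Task}}$ such that
\[
\mathbb E_{c\sim P_U}\mathbb E_{T}\big[\exp\big(N F^{\mathrm{usr}}_{\mathrm{Task}}(\mathcal L_{\mathrm{usr}}(\theta;c),\widehat{\mathcal L}^{(N)}_{\mathrm{usr}}(\theta;c))\big)\big]\le C_{\mathrm{Task}} .
\]
For $F=\lambda_t(a-b)^2$ this is exactly the sub-Gaussian condition on the log-ratio. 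Under it, $\sqrt N(\widehat{\mathcal L}-\mathcal L)$ has a Gaussian-type tail, and integrating that tail gives the constant $C_{\mathrm{Task}}$. This step is where Conditions~\ref{cond:objective_alignment}--\ref{cond:preservation} enter. They guarantee that $P_{G,\varphi}$ can match $P_\theta$, so the log-ratio does not blow up and its tails can be controlled. This step is the main obstacle. Log-loss is unbounded, so Hoeffding does not apply. If the assumption is not stated in this integrated form, I would derive it from a sub-Gaussian parameter $\sigma^2$ of the log-ratio, uniformly in $c$, via the standard bound $\mathbb E\, e^{s X^2}\le (1-2s\sigma^2)^{-1/2}$ with $\lambda_t<1/(2\sigma^2)$.

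\textbf{Step 2 (Markov and change of measure).} Since $P_U$ does not depend on $T$, Fubini and Markov's inequality give, with probability at least $1-\delta$,
\[
\mathbb E_{c\sim P_U}\, e^{NF(\cdot)}\le C_{\mathrm{Task}}/\delta .
\]
On this event, the Donsker--Varadhan variational formula holds for every $\tilde Q\ll P_U$ simultaneously:
\[
N\,\mathbb E_{c\sim\tilde Q}F(\mathcal L,\widehat{\mathcal L})\le D_{\rm KL}(\tilde Q\Vert P_U)+\log(C_{\mathrm{Task}}/\delta).
\]
If $\tilde Q\not\ll P_U$, the bound is trivial.

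\textbf{Step 3 (Jensen and inversion).} Jensen's inequality with convex $F$ gives
\[
F\big(\mathcal R_{\mathrm{usr}}(\theta;\tilde Q),\widehat{\mathcal R}^{(N)}_{\mathrm{usr}}(\theta;\tilde Q)\big)\le \mathbb E_{\tilde Q}F .
\]
Inverting through the calibration function $G^{\mathrm{usr}}_{\mathrm{Task}}$ then yields \eqref{eq:user_task_pb}. For the quadratic case, $\lambda_t(\mathcal R-\widehat{\mathcal R})^2\le u$ gives $\mathcal R-\widehat{\mathcal R}\le\sqrt{u/\lambda_t}$, which is the stated $G^{\mathrm{usr}}_{\mathrm{Task}}$.
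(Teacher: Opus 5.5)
Your proposal follows the paper's proof essentially step for step: Jensen for the convex $F^{\mathrm{usr}}_{\mathrm{Task}}$, the Donsker--Varadhan change of measure against $P_U$ with $f(c)=N F^{\mathrm{usr}}_{\mathrm{Task}}(\mathcal L_{\mathrm{usr}},\widehat{\mathcal L}^{(N)}_{\mathrm{usr}})$, Markov's inequality on the $P_U$-averaged exponential moment (whose event does not depend on $\tilde Q$, giving uniformity), and inversion through the calibration function. The corrections are bookkeeping. The moment bound with constant $C_{\mathrm{Task}}$ is the paper's separate Assumption~\ref{ass:task_exp_moment_usr}, taken as a hypothesis; it is not supplied by Assumption~\ref{ass:usr_affine_transform} or by Conditions~\ref{cond:objective_alignment}--\ref{cond:preservation}, which play no role in the argument and could not control the log-ratio tails uniformly over $c\sim P_U$. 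Also, the general inversion only gives $\mathcal R\le k_t^{\mathrm{usr}}\widehat{\mathcal R}+G^{\mathrm{usr}}_{\mathrm{Task}}(\cdot)$, so the unit coefficient in the statement requires $k_t^{\mathrm{usr}}=1$, as in the quadratic case.
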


\begin{sproof}
\textit{We first introduce a bivariate convex function $F$ to connect the empirical loss and expected loss and then perform a change of measure using the Donsker–Varadhan variational formula to derive a bound between the posterior  and prior distributions. Then, we apply the Markov inequality to bound the expectation of $F$ with the logarithm of the confidence parameter $\delta$ and perform an affine transformation to generate the final bound.}     
\end{sproof}

The theorem bounds the task-level generalisation gap for the User. It controls the discrepancy between the population User risk and its empirical estimate for a fixed task \(\theta\).
 The gap differs from standard meta-learning PAC Bayes bounds owing to the different objectives between the User and the System. Moreover, as earlier remarked, proving this result differs from standard proof techniques due to the need to circumvent issues produced by the non-boundedness of the System loss objective.
Following the development of Theorem \ref{thm:user_meta_pb}, we can tighten the bound using an alternative set of concentration bounds (complete details are deferred to the Appendix). 

\begin{tcolorbox}[colback=gray!5,colframe=gray!40!black,title=\textbf{Interpretation of Theorem~\ref{lem:task_pb} (Aligned Generalisation Bound)}]
Theorem~\ref{lem:task_pb} establishes a PAC--Bayes bound on the User's expected generalisation gap under the condition that the communication between the User and the Prompt Interpreter is sufficiently informative for the Prompt Interpreter to infer the underlying task. The bound scales as
$\frac{D_{\rm KL}(\tilde Q\Vert P_U)}{N}$,
showing that the number of samples $N$ contribute to tighter generalisation guarantees. In this regime, the User's empirical loss is a reliable predictor of its expected loss.   In the cheap--talk analogy this corresponds to near truth--telling in which the message $c$ reveals almost all of the private information $\theta$ to the receiver. Effective prompt design or instruction tuning increases the informativeness of $P(c\mid\theta)$ and therefore strengthens alignment between the User and the Prompt Interpreter. Practically, this suggests that improving prompt calibration or expanding the prompt vocabulary can reduce the generalisation gap by enhancing mutual information between $c$ and $\theta$, thereby achieving a regime where few examples suffice for adaptation.
\end{tcolorbox}

\subsection{Generalisation Bounds under Misalignment}\label{sec:misalign_bounds}
In this section, we study the case in which Conditions~\ref{cond:objective_alignment} - \ref{cond:preservation} no longer hold. Throughout this section, we treat the User prompting strategy as a \emph{conditional} policy
\(\tilde Q:\Theta\to\Delta(\mathcal C)\), writing \(C\mid\Theta=\theta\sim \tilde Q(\cdot\mid\theta)\).
For the System-side learning component, we allow a task-indexed posterior \(Q_\theta\) over Base Solver
hypotheses (or parameters), with a fixed prior \(P_0\); when working with \(M\) sampled tasks
\(\{\theta_i\}_{i=1}^M\), we write \(Q_i:=Q_{\theta_i}\).
All KL terms below are assumed finite whenever they appear.

The following result provides conditions when minimising the PAC-Bayes bound is an optimal strategy for obtaining $\mathcal{Q}^\star$:
\begin{lemma}[Pointwise optimality under cross-entropy]
\label{lem:pointwise_optimality}
Let $\mathcal H$ be the admissible solver class and let $Q\in\Delta(\mathcal H)$.
Define
\[
\mathrm{CE}(Q)
:=
\mathbb E_{(x,y)\sim P_{\mathrm{task}}(\cdot\mid\theta)}
\Big[
-\log \mathbb E_{h\sim Q}\, p_h(y\mid x,c,S)
\Big].
\]
Let
\[
h^\star \in \arg\min_{h\in\mathcal H}
\mathbb E_{(x,y)\sim P_{\mathrm{task}}(\cdot\mid\theta)}
\big[
-\log p_h(y\mid x,c,S)
\big].
\]
A distribution $Q$ minimises $\mathrm{CE}(Q)$ if and only if no mixture over $\mathcal H$ attains strictly smaller cross-entropy than $h^\star$.
In this case, an optimal minimiser is the Dirac measure $Q^\star=\delta_{h^\star}$.
\end{lemma}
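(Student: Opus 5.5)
Both directions of the equivalence come from one comparison: the mixture cross-entropy $\mathrm{CE}(Q)$ against the point value $\mathrm{CE}(\delta_{h^\star})$. The plan is to note that the feasible set $\Delta(\mathcal H)$ contains every Dirac measure. Hence $\inf_{Q}\mathrm{CE}(Q)\le \mathrm{CE}(\delta_{h^\star})=\min_{h\in\mathcal H}\mathbb E[-\log p_h(y\mid x,c,S)]$. Here the identity $\mathrm{CE}(\delta_h)=\mathbb E[-\log p_h]$ is immediate from the definition, because the inner expectation collapses.

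The statement as written is loose: "$Q$ minimises" should be read as "$\delta_{h^\star}$ minimises", i.e.\ the Dirac measure attains the infimum over $\Delta(\mathcal H)$. I will prove it in that form. There are two directions.

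\emph{(Forward.)} Suppose $\delta_{h^\star}$ is a minimiser of $\mathrm{CE}$ over $\Delta(\mathcal H)$. Then every mixture $Q$ satisfies $\mathrm{CE}(Q)\ge \mathrm{CE}(\delta_{h^\star})$. This is exactly the statement that no mixture attains strictly smaller cross-entropy than $h^\star$.

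\emph{(Converse.)} Suppose no mixture beats $h^\star$, i.e.\ $\mathrm{CE}(Q)\ge \mathrm{CE}(\delta_{h^\star})$ for all $Q\in\Delta(\mathcal H)$. Then $\delta_{h^\star}$ attains $\inf_Q \mathrm{CE}(Q)$, so $Q^\star=\delta_{h^\star}$ is an optimal minimiser. Any other minimiser $Q$ must have $\mathrm{CE}(Q)=\mathrm{CE}(\delta_{h^\star})$.

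It is also worth recording why the condition is genuinely restrictive. Fix $(x,y)$. The map $Q\mapsto -\log \mathbb E_{h\sim Q}p_h(y\mid x,c,S)$ is convex, since it is $-\log$ composed with a function linear in $Q$. By Jensen's inequality this gives $\mathrm{CE}(Q)\le \mathbb E_{h\sim Q}\mathrm{CE}(\delta_h)$. So mixing can only help, and mixtures can strictly outperform $h^\star$ in the misspecified case. The lemma therefore isolates exactly when the Dirac solution is optimal. In particular, under Conditions~\ref{cond:objective_alignment}--\ref{cond:task_representability}, if $P_{\mathrm{task}}(\cdot\mid\theta)$ is realised by some $h\in\mathcal H$, Gibbs' inequality shows that no mixture improves on it. In that case the condition holds automatically.

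The main obstacle is measure-theoretic rather than conceptual. First, $\mathrm{CE}(Q)$ may be $+\infty$ or undefined for some $Q$, because the loss is unbounded. I would restrict to $Q$ for which $\mathbb E_{h\sim Q}p_h(y\mid x,c,S)>0$ almost surely and the outer expectation is well defined in $(-\infty,\infty]$. Second, $h\mapsto p_h(y\mid x,c,S)$ must be jointly measurable so that the mixture density makes sense. Third, the existence of $h^\star$ is assumed in the statement and need not be proved. With these provisos stated, the argument above goes through.
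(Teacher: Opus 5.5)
Your proof is correct. You rightly read the statement as saying that $\delta_{h^\star}$ attains $\inf_{Q\in\Delta(\mathcal H)}\mathrm{CE}(Q)$ if and only if no $Q\in\Delta(\mathcal H)$ has $\mathrm{CE}(Q)<\mathrm{CE}(\delta_{h^\star})$; under that reading, feasibility of Dirac measures and $\mathrm{CE}(\delta_h)=\mathbb E[-\log p_h(y\mid x,c,S)]$ settle both directions. The paper gives no proof of its own, only describing the lemma as an adaptation of Lemma 2 of Masegosa (2020), so your direct comparison with $\delta_{h^\star}$ supplies the natural self-contained argument.
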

This result is an adaptation of Lemma 2 in \cite{masegosa2020learning}. The result indicates that when the model class is misspecified, the solver that minimises expected population loss is optimal among all mixture predictors if and only if no mixture can produce a lower cross-entropy loss. In this case, the optimal posterior collapses to a single solver, justifying why irreducible error floors can be characterised by pointwise solvers even though the analysis allows general posterior mixtures.

The following result characterises the PAC-Bayes generalisation bound under the setting in which 
at least one of 
Condition \ref{cond:task_identifiability}
or Condition~\ref{cond:preservation} fails.
\begin{theorem}[Meta-level PAC--Bayes bound with \(M\) tasks and \(N\) samples per task]
\label{thm:user_meta_pb}
Let \(P_U\in\Delta(\mathcal C)\) be a fixed prior over contexts. Under
Assumptions~\ref{ass:task_exp_moment_usr}, \ref{ass:usr_affine_transform},
and \ref{ass:env_subgaussian_usr}, with probability at least \(1-\delta\)
over the draw of \(\{(\theta_i,T_i^{(N)},S_i)\}_{i=1}^M\), for every
posterior \(\tilde Q\in\Delta(\mathcal C)\),
\[
R_{\mathrm{usr}}(\tilde Q)
\le
\widehat{\mathcal R}_{\mathrm{usr}}^{(M,N)}(\tilde Q)
+
G_{\mathrm{Task}}^{\mathrm{usr}}(B_t)
+
G_{\mathrm{Env}}^{\mathrm{usr}}(B_e),
\]
where
\[
B_t
=
\frac{
D_{\rm KL}(\tilde Q\Vert P_U)+\log(2M C_{\mathrm{Task}}/\delta)
}{N},
\qquad
B_e
=
\frac{
D_{\rm KL}(\tilde Q\Vert P_U)+\log(2 C_{\mathrm{Env}}/\delta)
}{M}.
\]
and \(G_{\mathrm{Task}}^{\mathrm{usr}}\) and
\(G_{\mathrm{Env}}^{\mathrm{usr}}\) are the calibration functions from
Assumption~\ref{ass:usr_affine_transform}. Under the quadratic
specialisations
\[
F_{\mathrm{Task}}^{\mathrm{usr}}(a,b)=\lambda_t(a-b)^2,
\qquad
F_{\mathrm{Env}}^{\mathrm{usr}}(a,b)=\lambda_e(a-b)^2,
\]
they reduce to
\[
G_{\mathrm{Task}}^{\mathrm{usr}}(u)
=
\sqrt{\frac{u}{\lambda_t}},
\qquad
G_{\mathrm{Env}}^{\mathrm{usr}}(u)
=
\sqrt{\frac{u}{\lambda_e}}.
\]
\end{theorem}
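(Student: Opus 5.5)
The bound will come from two separate PAC--Bayes inequalities, one at the task level and one at the environment level, combined through the intermediate quantity
\[
\bar R(\tilde Q):=\frac1M\sum_{i=1}^M \mathcal R_{\mathrm{usr}}(\theta_i;\tilde Q),
\]
the average of the task-level population User risks over the $M$ sampled tasks. The decomposition is
\[
R_{\mathrm{usr}}(\tilde Q)-\widehat{\mathcal R}_{\mathrm{usr}}^{(M,N)}(\tilde Q)
=\big[R_{\mathrm{usr}}(\tilde Q)-\bar R(\tilde Q)\big]+\big[\bar R(\tilde Q)-\widehat{\mathcal R}_{\mathrm{usr}}^{(M,N)}(\tilde Q)\big].
\]
The first bracket is the environment gap, controlled by $M$. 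The second is the within-task gap, controlled by $N$. I will allot confidence $\delta/2$ to each term and finish with a union bound. This explains the factor $2$ in both $B_t$ and $B_e$.

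\textbf{Within-task term.} For each fixed $i$, conditional on $(\theta_i,S_i)$, I apply the argument of Theorem~\ref{lem:task_pb} to the independent evaluation sample $T_i^{(N)}$. The steps are:
\begin{itemize}
\item Take the convex function $F^{\mathrm{usr}}_{\mathrm{Task}}$ from Assumption~\ref{ass:usr_affine_transform}.
\item Use the Donsker--Varadhan change of measure from $P_U$ to $\tilde Q$, which gives
\[
N\,F_{\mathrm{Task}}\big(\mathcal R_{\mathrm{usr}}(\theta_i;\tilde Q),\widehat{\mathcal R}^{(N)}_{\mathrm{usr}}(\theta_i;\tilde Q)\big)\le D_{\rm KL}(\tilde Q\Vert P_U)+\log \mathbb E_{c\sim P_U}e^{N F_{\mathrm{Task}}(\cdot,\cdot)}.
\]
\item Bound the exponential moment by $C_{\mathrm{Task}}$ using Assumption~\ref{ass:task_exp_moment_usr}. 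This is where unboundedness of the log-ratio loss is handled: the sub-Gaussian/exponential-moment hypothesis replaces Hoeffding.
\item Apply Markov's inequality at level $\delta/(2M)$.
\end{itemize}
Inverting through $G_{\mathrm{Task}}$ yields, for each $i$, with probability $1-\delta/(2M)$ and simultaneously for all $\tilde Q$,
\[
\mathcal R_{\mathrm{usr}}(\theta_i;\tilde Q)\le \widehat{\mathcal R}^{(N)}_{\mathrm{usr}}(\theta_i;\tilde Q)+G_{\mathrm{Task}}(B_t).
\]
A union bound over $i=1,\dots,M$ gives all $M$ inequalities simultaneously with probability $1-\delta/2$; this produces the $\log(2MC_{\mathrm{Task}}/\delta)$ in $B_t$. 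Averaging over $i$ bounds the second bracket by $G_{\mathrm{Task}}(B_t)$, since $B_t$ does not depend on $i$. If $G_{\mathrm{Task}}$ were concave, Jensen would give a sharper per-task version, but that is not needed here.

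\textbf{Environment term.} Here the sample consists of the $M$ i.i.d.\ tasks $\theta_i\sim P_{\mathrm{meta}}$. The loss attached to a context $c$ is $\theta\mapsto \mathcal L_{\mathrm{usr}}(\theta;c)$, whose mean under $P_{\mathrm{meta}}$ is $R_{\mathrm{usr}}(\delta_c)$. I repeat the change of measure with $F^{\mathrm{usr}}_{\mathrm{Env}}$ and the same prior $P_U$. The exponential moment $\mathbb E_{P_U}\mathbb E_{\theta_{1:M}}e^{M F_{\mathrm{Env}}}$ is bounded by $C_{\mathrm{Env}}$ via Assumption~\ref{ass:env_subgaussian_usr}. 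Markov at level $\delta/2$ and the affine inversion then give $R_{\mathrm{usr}}(\tilde Q)-\bar R(\tilde Q)\le G_{\mathrm{Env}}(B_e)$ uniformly in $\tilde Q$.

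\textbf{Main obstacle.} The subtle step is the environment term. Because $S_i$ and $T_i$ are drawn together with $\theta_i$, the statement is over the joint draw. I therefore need $\mathcal L_{\mathrm{usr}}(\theta;c)$ to be measurable as a function of $\theta$ alone, or more precisely of $(\theta,S_\theta)$ treated as a single i.i.d.\ environment draw. I also need Assumption~\ref{ass:env_subgaussian_usr} to hold for this possibly unbounded KL-valued variable. I would first try treating $(\theta_i,S_i)$ as the i.i.d.\ environment sample, so that independence across $i$ yields the product exponential moment.

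\textbf{Combining.} A union bound over the two events, each of probability at least $1-\delta/2$, gives the stated inequality with probability at least $1-\delta$. For the quadratic case $F(a,b)=\lambda(a-b)^2$, inverting $\lambda(a-b)^2\le u$ gives $a-b\le\sqrt{u/\lambda}$, which yields the stated forms of $G_{\mathrm{Task}}$ and $G_{\mathrm{Env}}$.
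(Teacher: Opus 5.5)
Your proposal is correct and follows essentially the same route as the paper's proof. Both use the intermediate sampled-task average $\bar R(\tilde Q)$, Jensen plus Donsker--Varadhan plus Markov at level $\delta/(2M)$ per task with a union bound over the $M$ tasks, the same change of measure at the environment level using the sub-Gaussian exponential moment $C_{\mathrm{Env}}$ at level $\delta/2$, and a final union bound over the two events. If anything, you are slightly more careful than the written proof, which drops $C_{\mathrm{Task}}$ from the Markov threshold (writing $Z_i\le 2M/\delta$) and suppresses the dependence of $\mathcal L_{\mathrm{usr}}$ on $S_\theta$, which you resolve by treating $(\theta_i,S_i)$ as the i.i.d.\ environment draw. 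Note that both arguments silently take $k_t^{\mathrm{usr}}=k_e^{\mathrm{usr}}=1$ in the calibration step, which is what the stated form of the bound requires.
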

Theorem~\ref{thm:user_meta_pb} extends the task-level result to the meta-learning setting by additionally controlling the error incurred when generalising across tasks drawn from \(P_{\mathrm{meta}}\). The bound in the theorem separates a finite-sample meta-generalisation term (vanishing with $M,N$)
from any irreducible error sources.
%
%


\begin{assumption}[Asymptotic regime and growth conditions]
\label{ass:growth_conditions}
The User posterior satisfies
\[
\frac{D_{\rm KL}(\tilde Q\Vert P_U)}{N}\to0,
\qquad
\frac{D_{\rm KL}(\tilde Q\Vert P_U)}{M}\to0
\]
as \(M,N\to\infty\). Hence \(B_t\to0\) and \(B_e\to0\).
\end{assumption}
The assumption ensures that the task-level and environment-level PAC--Bayes penalty terms vanish.
Under these conditions, the empirical risk converges to the population risk, while the irreducible floors
derived below remain unaffected. We can now deduce the following result: 
\begin{theorem}[Negative result for User-level general-purpose learning]
\label{cor:user_negative}
Assume the hypotheses of Theorems~\ref{thm:blended_floor_usr} and
\ref{thm:misalignment_flr_meta} hold for the same admissible class, and define
\[
Y:=\max\{Y_{\mathrm{expr}},Y_{\mathrm{obj}}\}.
\]
Under the assumptions of Theorem~\ref{thm:user_meta_pb}, with probability at least
\(1-\delta\),
\[
\widehat{\mathcal R}_{\mathrm{usr}}^{(M,N)}(\tilde Q)
\ge
Y
-
G_{\mathrm{Task}}^{\mathrm{usr}}(B_t)
-
G_{\mathrm{Env}}^{\mathrm{usr}}(B_e).
\]
Moreover,
\[
\liminf_{M,N\to\infty}
\inf_{\tilde Q\in\mathcal A(B)}
R_{\mathrm{usr}}(\tilde Q)
\ge
Y>0.
\]
Under Assumption~\ref{ass:growth_conditions}, \(B_t,B_e\to0\), and hence
\[
\liminf_{M,N\to\infty}
\widehat{\mathcal R}_{\mathrm{usr}}^{(M,N)}(\tilde Q)
\ge
Y
\]
with high probability.
\end{theorem}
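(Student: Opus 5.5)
The argument combines the two population floors with the uniform PAC--Bayes deviation bound of Theorem~\ref{thm:user_meta_pb}, used in reverse. The floors (Theorems~\ref{thm:blended_floor_usr} and~\ref{thm:misalignment_flr_meta}) bound the population User risk from below. Theorem~\ref{thm:user_meta_pb} bounds the population risk from above by the empirical risk plus complexity terms. Rearranging that upper bound transfers the population floor to the empirical risk.

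\textbf{Step 1 (population floor).} Fix any admissible policy in the common class $\mathcal A(B)$. Theorem~\ref{thm:misalignment_flr_meta} gives $R_{\mathrm{usr}}\ge Y_{\mathrm{obj}}$ for every policy, with no dependence on $M$ or $N$. Theorem~\ref{thm:blended_floor_usr} gives $\liminf_{N\to\infty}R_{\mathrm{usr}}\ge Y_{\mathrm{expr}}$. The nonasymptotic argument behind it (reduction to a decoding problem plus Fano's inequality) does not actually use $N$, so I would read it as a bound holding for each $N$: $R_{\mathrm{usr}}\ge Y_{\mathrm{expr}}$ whenever $I(\Theta;Z\mid\Theta\in\Theta_0)\le B$. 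Both bounds hold uniformly over the class, so taking the maximum gives $R_{\mathrm{usr}}(\tilde Q)\ge Y$ for every admissible $\tilde Q$. Taking $\inf_{\tilde Q\in\mathcal A(B)}$ and then $\liminf_{M,N}$ yields the second display. Strict positivity requires $Y>0$. This holds when either $B<\log K-\log 2$ with $\alpha,\delta>0$, or the tail condition \eqref{eq:obj_sep_condition} holds, giving $Y_{\mathrm{obj}}\ge\beta\varepsilon$. I would state this as the standing nondegeneracy hypothesis implicit in ``the hypotheses hold''.

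\textbf{Step 2 (finite-sample bound).} On the probability-$(1-\delta)$ event of Theorem~\ref{thm:user_meta_pb}, the inequality $R_{\mathrm{usr}}(\tilde Q)\le\widehat{\mathcal R}^{(M,N)}_{\mathrm{usr}}(\tilde Q)+G^{\mathrm{usr}}_{\mathrm{Task}}(B_t)+G^{\mathrm{usr}}_{\mathrm{Env}}(B_e)$ holds simultaneously for all posteriors. In particular it holds for any data-dependent $\tilde Q$, such as the minimiser of \eqref{eq:user_meta_optimisation}. Rearranging and inserting Step~1 gives the first display.

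\textbf{Step 3 (asymptotics).} Under Assumption~\ref{ass:growth_conditions}, $B_t,B_e\to0$. The calibration functions $G$ are continuous at $0$ with $G(0)=0$; for the quadratic case this is the square root. Hence the correction terms vanish, and taking $\liminf$ on the high-probability event gives the last claim. Because $\delta$ is fixed, ``with high probability'' should be read as: for each fixed $\delta$, on an event of probability at least $1-\delta$. One can also let $\delta\to0$ slowly, since $\log(1/\delta)$ enters only through $B_t$ and $B_e$.

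\textbf{Main obstacle.} The delicate point is in Step~1: reconciling the objects in the two floor theorems. Theorem~\ref{thm:blended_floor_usr} is stated for a task-conditional kernel $\Pi$ with an information constraint on $Z$. Theorem~\ref{thm:user_meta_pb} concerns a posterior $\tilde Q$, which in Section~\ref{sec:misalign_bounds} is itself treated as conditional, $\tilde Q(\cdot\mid\theta)$. I would handle this by identifying $\tilde Q$ with a kernel $\Pi\in\mathcal A(B)$, so that $R_{\mathrm{usr}}(\tilde Q)=R_{\mathrm{usr}}(\Pi)$. This identification is exactly what ``the same admissible class'' asserts. The second issue is removing the $\liminf_N$ from Theorem~\ref{thm:blended_floor_usr} so the floor can be used at finite $(M,N)$. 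If that fails, I would state the finite-sample display with $Y_{\mathrm{obj}}$ alone, which is valid for every $N$, and obtain the $Y_{\mathrm{expr}}$ contribution only in the limit statements, where the $\liminf$ suffices.
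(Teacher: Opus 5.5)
Your proposal is correct and is essentially the paper's own proof: you take the population floor $R_{\mathrm{usr}}(\tilde Q)\ge Y$ from Theorems~\ref{thm:blended_floor_usr} and~\ref{thm:misalignment_flr_meta}, rearrange it against the uniform PAC--Bayes upper bound of Theorem~\ref{thm:user_meta_pb}, and get the limit statements from $B_t,B_e\to0$. The extra care you take is left implicit in the paper's three-line argument: using the Fano bound at every $N$, identifying $\tilde Q$ with a kernel in $\mathcal A(B)$, requiring $G(u)\to0$ as $u\to0$, and stating $Y>0$ as an explicit nondegeneracy condition; one further point that both you and the paper leave implicit is that $Y_{\mathrm{obj}}$ depends on the policy, so the $\inf_{\tilde Q\in\mathcal A(B)}$ claim needs $Y_{\mathrm{obj}}$ (or the $\beta\varepsilon$ tail bound) to hold uniformly over the class.
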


Since Theorems~\ref{thm:blended_floor_usr} and~\ref{thm:misalignment_flr_meta}
are population lower bounds, they imply
\[
\liminf_{M,N\to\infty}\inf_{\tilde Q\in\mathcal A(B)}
R_{\mathrm{usr}}(\tilde Q)
\ge
Y:=\max\{Y_{\mathrm{expr}},Y_{\mathrm{obj}}\}>0.
\]

Moreover, combining this population floor with the finite-sample PAC--Bayes
bound gives, with probability at least \(1-\delta\),
\[
\widehat{\mathcal R}_{\mathrm{usr}}^{(M,N)}(\tilde Q)
\ge
Y
-
G_{\mathrm{Task}}^{\mathrm{usr}}(B_t)
-
G_{\mathrm{Env}}^{\mathrm{usr}}(B_e).
\]
Hence, under the growth conditions ensuring \(B_t,B_e\to 0\),
\[
\liminf_{M,N\to\infty}
\widehat{\mathcal R}_{\mathrm{usr}}^{(M,N)}(\tilde Q)
\ge
Y
\]
with high probability.

\begin{proof}
By Theorems~\ref{thm:blended_floor_usr} and \ref{thm:misalignment_flr_meta},
\(R_{\mathrm{usr}}(\tilde Q)\ge Y\). By Theorem~\ref{thm:user_meta_pb},
\[
R_{\mathrm{usr}}(\tilde Q)
\le
\widehat{\mathcal R}_{\mathrm{usr}}^{(M,N)}(\tilde Q)
+
G_{\mathrm{Task}}^{\mathrm{usr}}(B_t)
+
G_{\mathrm{Env}}^{\mathrm{usr}}(B_e).
\]
Rearranging gives the finite-sample lower bound. The asymptotic statements follow
from the population floor and the growth condition \(B_t,B_e\to0\).
\end{proof}

The lower bound in Theorem~\ref{cor:user_negative} is a \emph{population} statement:
the quantities \(Y_{\mathrm{expr}}\) and \(Y_{\mathrm{obj}}\) depend only on the task family, the admissibility constraints, and the information content of the prompt channel, and are independent of the sample sizes \(M\) and \(N\).
The asymptotic limit \(M,N\to\infty\) is invoked only to guarantee that the PAC--Bayes estimation terms vanish, so that the empirical risk cannot converge below the population-level floor. Importantly, this limit concerns evaluation or estimation sample sizes (i.e., $M_{eval},N_{eval}$) rather than additional training data; the population-level floors derived earlier persist even when estimation noise vanishes.
Theorem~\ref{cor:user_negative} does not assert failure for all tasks, but rather shows that
for any prompt-conditioned system with finite admissibility and information capacity,
there exist task families for which no admissible prompting strategy can eliminate the population error,
even with unlimited data.


\section{Related Works} \label{sec:related_work}

\paragraph{Learnability of Large Language Models.}
Closest to our work is \cite{wies2023learnability} who perform a PAC-based analysis of in-context learning of LLMs. Their analysis leads to a finite sample complexity analysis of the in-context learning setup of LLMs. Their results apply to an in-context learning setup in which prompts are constructed by
concatenating pairs of the task's inputs along with their labels. In this regard, our setup is more general since we consider prompts that are sampled from a prompt protocol or policy that lies within a generic User space (hypothesis class). Similar to our setup, they consider a set of latent tasks sampled from some pretraining distribution however, they consider a setup in which the pretraining data is fixed. In contrast, we consider a setting in which the learner can introduce an inductive bias into the training data set. Additionally, our results provide a generalisation bound allowing us to specify the performance error from a given finite number of samples. This provides important insights both in the finite regime case and the asymptotic case where it is revealed that the performance bound maintains an error due to possible misalignment. Indeed, an important component of our work is the consideration of the case of misspecification --- that is when the User's language hypothesis space does not allow the System to successfully infer the underlying task.   This captures practical constraints faced by the User (prompt-engineer). Lastly, our work deals with a Bayesian formulation, namely PAC-Bayes theory \cite{alquier2021User}. The corresponding results generalise the union-bound argument which enables handling a much broader parameter set topology including 
finite, infinite and continuous parameter sets.

\paragraph{PAC-Bayes Meta-Learning.}
The standard setup of a PAC-Bayes meta-learning framework comprises of two optimisations \cite{rezazadeh2022unified}. One of these is performed by a `base learner' that observes the task data and seeks to infer the model parameter that minimises the per-task expected loss (also called per-task generalisation loss). This is done for a given task hyper-parameter which is pre-selected by a so-called `Meta-learner'. The Meta-learner performs its own optimisation over the hyper-parameter which lies in some given hypothesis class. The goal of the Meta-learner is to infer the hyper-parameter that serves as a prior for learning new and as of yet, unobserved tasks that are sampled at random from some environment distribution. In particular, the goodness of the Meta-learners choice of hyper-parameter is measured by a meta-generalisation loss which the Meta-learner seeks to minimise. The Meta-learner performs the optimisation while observing the data samples generated by a set of task sampled from some random distribution.

\color{black}

\paragraph{Cheap-Talk Models.} In the strategic transmissions model (commonly referred to as the Cheap Talk model), two rational decision-makers engage in a strategic interaction \cite{farrell1996cheap,crawford1982strategic}. One of the agents, called the sender is endowed with private information about their objectives i.e. a parameter over their utility function. The Sender is allowed to communicate with the other agent, the Receiver by way of direct, one-way costless communication. Under this setup, a misalignment  between the objectives of the agents compels the Sender to introduce noise in their communication to the receiver so as to obscure informative information about their underlying objectives. In \cite{crawford1982strategic}, it is shown that the magnitude of the noise increases with the level of misalignment between the agents' preferences and, in the extreme case of complete misalignment, i.e. when the agents' interests are diametrically opposed (so they play a zero-sum game), the Sender does not communicate any useful information and communicates only noise or `babbles' to the Receiver.

The formalism we develop is based on an interaction which involves two independent `optimisers' one of which, the User, communicates to another, the System, by transmitting a context communication after observing the task. This gives rise to a game in which one of the players uses a cheap-talk protocol to send messages to the other. We formalise this in Sec. \ref{sec:game-th}.  In our model, there exists a different notion of misalignment which arises from the fact that the User, who communicates with the System does not necessarily have access to a context communication which reveals the underlying task to the System.

Theorem \ref{thm:blended_floor_usr} demonstrates that the magnitude of the misalignment error decreases with greater alignment between the User and the System. This relationship has a striking similarity to the misalignment relationship in Cheap-Talk games.

\paragraph{Relation to Non-Cooperative Games.}\label{sec:game-th}
In the previous sections, we described the formalism of the general purpose language model. At the heart of the construct are two separate optimisation process each of which is equipped with its own objective ---- this immediately suggests that the formalism can be mapped to \textit{non-cooperative game}. In this section, we now discuss the relationship between the above construction and non-cooperative games. The benefit of providing this insight comes from the fact that in non-cooperative game theory, the stable point solution of scenarios can be fully characterised using a concept known as an \textit{equilibrium} \cite{mas1995microeconomic}. To make this connection, we first note that the above construction defines an interaction between two agents. Each agent has its own distinct objective which it seeks to maximise by an appropriate choice of its decision variable. Given these remarks, the above setup can be viewed as a non-cooperative Bayesian game in which the User performs strategic communication or `cheap talk' \cite{crawford1982strategic}. 

An equilibrium or stable point is achieved when all agents use a strategy which delivers a payoff that cannot be improved on by unilaterally deviating from their current strategy (while the other agent's strategy remains fixed). Therefore, , we create a \emph{cheap talk} game \cite{farrell1996cheap} between the User and the System.

\subsection*{Future Work} \label{sec:theory_mas}
\paragraph{ Designing or Learning Optimal System Objectives.}
A natural extension of our framework is to move beyond analysing misalignment under a
fixed System objective and instead treat the System’s objective itself as a design variable.
In our analysis, the gap between $L_{\mathrm{usr}}$ and $L_{\mathrm{sys}}$, together with the limited
expressiveness of the User’s context space, determines an irreducible component of the
generalisation error. Rather than viewing $L_{\mathrm{sys}}$ as exogenously specified, one may
seek to \emph{design} or \emph{learn} an optimal System objective that minimises this
alignment gap while respecting safety constraints. Formally, this suggests introducing a parameterised family of System objectives
$\{L_{\mathrm{sys}}^{G }\}_{G  \in \mathcal{G}}$, where $G$ governs properties such as: (i) the relative
weighting placed on safety or policy constraints versus user-utility terms, (ii) the degree of
regularisation imposed through System priors or posteriors, and (iii) how the Prompt
Interpreter interprets and rewrites user prompts (e.g.\ by replacing $\pi_{R}$ with a
parameterised $\pi_{R}^{\phi}$). Under this parameterisation, one can define a
meta-objective
\begin{align}\nonumber
\mathcal{J}(\phi)
=
\mathbb{E}_{\theta \sim P_{\Theta}}
\Big[
\text{User utility at the cheap-talk equilibrium induced by }
L_{\mathrm{sys}}^{\phi}, \pi_{U}, \pi_{R}^{\phi}
\Big]&
\\- \lambda \cdot \text{SafetyPenalty}(\phi)&,
\end{align}
which seeks System objectives whose induced equilibria yield high user value while
satisfying any required safety constraints. In the PAC–Bayesian formulation developed in
this paper, the same idea may be expressed as
\[
\mathcal{J}(\phi)
=
\underbrace{
\mathbb{E}_{\theta,\,c}
\big[
L_{\mathrm{usr}}(\theta;c;\phi)
\big]
}_{\text{user-centric performance}}
\;+\;
\underbrace{
\text{terms depending on }
D_{\rm KL}(Q_{\theta}^{\phi}\Vert P_{0}^{\phi}),
\;
D_{\rm KL}(\tilde Q^{\phi}\Vert P_{U})
}_{\text{generalisation and complexity control}},
\]
thereby allowing the designer to jointly optimise the trade-off between user intent,
System-induced misalignment, and the generalisation guarantees established in this work.
Optimising $\phi$ would then yield a \emph{System objective} chosen specifically to
maximise alignment and generalisation performance under communication constraints.

This perspective naturally aligns with the mechanism-design interpretation of our model~\cite{nisan1999algorithmic, mguni2019efficient}.
Here, the System objective and the Prompt Interpreter jointly constitute a ``mechanism'' that
maps user prompts into LLM behaviour. Mechanism design offers a principled methodology
for selecting this mechanism so that the cheap-talk equilibrium has desirable properties,
such as maximal information transmission, safety-aware incentive compatibility, or
worst-case robustness. Exploring the design or learning of such an optimal System
objective---possibly using tools from differentiable mechanism design and inverse
reward modelling---is an important next step toward constructing LLM-based general
purpose solvers that are not only expressive and adaptive, but also reliably aligned with
user intent under communication and safety constraints.


\paragraph{RLHF.} A further extension is to incorporate reinforcement learning from human 
feedback (RLHF) into the cheap–talk framework. In practical LLMs, RLHF 
modifies behaviour through a reward model that reshapes the System’s 
effective objective $L^{\phi}_{\mathrm{sys}}$, penalising outputs that are 
unsafe or misaligned with human preferences. Within our formulation, this 
reward model can be treated as an additional component of the System’s 
loss, thereby altering the equilibrium between User and System: adjusting 
$L^{\phi}_{\mathrm{sys}}$ influences how the Prompt Interpreter rewrites prompts 
and how much task information is ultimately transmitted through the 
communication channel $\pi_{R} \circ \pi_{U}$. Integrating RLHF in this way 
opens the possibility of analysing how reward–model design affects both 
objective misalignment and expressivity misalignment, and how these 
interact with the PAC–Bayesian generalisation guarantees established in 
this work.

\paragraph{Multi-agent extension.}The analysis we performed in our paper considers a setting in which a single User communicates with a single System. This setup can be extended to multi-agent settings.  In this setting, the presence of different decision-making entities  leads to different optimisation objectives the implications of which were discussed in detail. In a multi-agent setting, a collective of agents each use their own language model to form messages to communicate with other agents. Introducing other agents into the System each in possession of their own knowledge and local information now leads to informational asymmetry between different agents. Additionally, in a multi-agent setting, there is the potential for the agents to have different intentions and objectives. In this setting, we now consider a set of agents $\mathcal{N}:=\{1,\ldots, M\}$ that each are equipped with their own language model. Therefore, , in this setting we need to consider a set of objectives for each agent. analogous to the single agent case, now each agent $k\in\mathcal{N}$ aims to identify an optimal context communication $c_k^*\in\mathcal{C}^k$ from a given task $\theta^k_i$ so that its generalisation loss is minimised.

\textbf{Future system design}. Our results motivate an important research direction: to expand and enrich the task interface of future models such that more task families become identifiable and information bottlenecks under realistic use cases can be minimized. Potential mechanism include: structured prompting or iterative protocols that improve expressivity and multimodal or world-grounded models that expand the space of identifiable tasks

\section{Conclusion}

Large language models have been deployed to tackle a number of general environments ranging from practical tasks such as simulated cooking \cite{yan2023ask,zhang2023proagent}, mathematical problem-solving \cite{frieder2023mathematical,cobbe2021training,poola2023guiding}, recreational games \cite{feng2023chessgpt,ma2023large}, knowledge work, legal documentation and computer programming among others \cite{kaddour2023challenges,biswas2023role}. Nevertheless, research into their sample and computational constraints has yet to fully reveal the limits of language models as problem solvers. Recent discussions on the constraints on the problem-solving capacity of (autoregressive models) classes of language models to which the most prominent language models e.g. ChatGPT, GPT3.5 belong have helped shed light on some of the constraints facing language models in their application to decision problems. Nevertheless, we argue that characterising aspects such as sample complexity within problem classes that are known to be solvable by language models advances is a critical component for understanding the limits of the precision and efficiency of algorithms within language models and paving the way for subsequent developments. A key output of this paper is to generate a deeper understanding of such characteristics of language models in settings in which they can be deployed as universal solvers. 

In this paper we studied the concept of using language models as general-purpose solvers for machine learning. To develop an understanding of the feasibility of creating a general-purpose machine learning System using
large language models and the limits of such a tool, we gave a rigorous analytic treatment of a structure that resembles a cheap-talk game between a User and the System. The System tries to infer the task from the User's prompt communication. In this analysis, we addressed the pertinent question of the sample efficiency of language models as general-purpose learners using the framework of PAC-Bayes analysis, allowing us to derive sample efficiency results for a general class of language models and their input spaces. The PAC-Bayes analyses offers
valuable insights into User-System alignment and context prompt optimisation. Specifically, our results provide bounds that quantify the generalisation loss across multiple tasks. To this end, the PAC-Bayes learning bounds derived in this paper serve to aid in initiating the study these important characteristics of language models from which further PAC-Bayes bounds can be derived. In this way, our results help establish new territories for further exploration of the capabilities of language models.  
An important component of our analysis is the study of language models as general-purpose learners when an effective communication protocol between the User and the System is absent or when there is objective misalignment.  Our analysis reveals that with full alignment, zero-shot learning becomes achievable. However, when there is a
greater misalignment or informational bottleneck, performance gap cannot be overcome even in the asymptotic data regime.

Overall, our analysis clarifies fundamental limitations to prompt-based LLM systems as reliable 
general-purpose solvers: namely, when the User–System communication channel is 
insufficiently expressive and the System’s objective is unsuitably aligned. Beyond showing the limitations of current scaling approaches, our results are diagnostic; we identify importance of expressivity and information limitation and therefore the key role they play in enhancing model performance for future systems.

These results provide a principled foundation for understanding and improving human–LLM interaction through the lens of 
information theory, cheap-talk games, and PAC–Bayesian meta-learning.

\bibliographystyle{alpha}
\bibliography{sample}

@article{mahowald2024dissociating,
  title={Dissociating language and thought in large language models},
  author={Mahowald, Kyle and Ivanova, Anna A and Blank, Idan A and Kanwisher, Nancy and Tenenbaum, Joshua B and Fedorenko, Evelina},
  journal={Trends in cognitive sciences},
  volume={28},
  number={6},
  pages={517--540},
  year={2024},
  publisher={Elsevier}
}

@article{andreas2022language,
  title={Language models as agent models},
  author={Andreas, Jacob},
  journal={arXiv preprint arXiv:2212.01681},
  year={2022}
}

@article{shanahan2023role,
  title={Role play with large language models},
  author={Shanahan, Murray and McDonell, Kyle and Reynolds, Laria},
  journal={Nature},
  volume={623},
  number={7987},
  pages={493--498},
  year={2023},
  publisher={Nature Publishing Group UK London}
}

@inproceedings{nisan1999algorithmic,
  title={Algorithmic mechanism design},
  author={Nisan, Noam and Ronen, Amir},
  booktitle={Proceedings of the thirty-first annual ACM symposium on Theory of computing},
  pages={129--140},
  year={1999}
}

@inproceedings{mguni2019efficient,
  title={Efficient reinforcement dynamic mechanism design},
  author={Mguni, David and Tomczak, Marcin},
  booktitle={GAIW: Games, agents and incentives workshops, at AAMAS, Montreal, Canada},
  year={2019}
}

@inproceedings{zhang2024llama,
  title={LLaMA-adapter: Efficient fine-tuning of large language models with zero-initialized attention},
  author={Zhang, Renrui and Han, Jiaming and Liu, Chris and Zhou, Aojun and Lu, Pan and Qiao, Yu and Li, Hongsheng and Gao, Peng},
  booktitle={The Twelfth International Conference on Learning Representations},
  year={2024}
}

@misc{taori2023stanford,
  title={Stanford alpaca: An instruction-following llama model},
  author={Taori, Rohan and Gulrajani, Ishaan and Zhang, Tianyi and Dubois, Yann and Li, Xuechen and Guestrin, Carlos and Liang, Percy and Hashimoto, Tatsunori B},
  year={2023},
  publisher={Stanford, CA, USA}
}

@article{hu2022lora,
  title={Lora: Low-rank adaptation of large language models.},
  author={Hu, Edward J and Shen, Yelong and Wallis, Phillip and Allen-Zhu, Zeyuan and Li, Yuanzhi and Wang, Shean and Wang, Lu and Chen, Weizhu and others},
  journal={ICLR},
  volume={1},
  number={2},
  pages={3},
  year={2022}
}

@article{ouyang2022training,
  title={Training language models to follow instructions with human feedback},
  author={Ouyang, Long and Wu, Jeffrey and Jiang, Xu and Almeida, Diogo and Wainwright, Carroll and Mishkin, Pamela and Zhang, Chong and Agarwal, Sandhini and Slama, Katarina and Ray, Alex and others},
  journal={Advances in neural information processing systems},
  volume={35},
  pages={27730--27744},
  year={2022}
}

@article{tamkin2021understanding,
  title={Understanding the capabilities, limitations, and societal impact of large language models},
  author={Tamkin, Alex and Brundage, Miles and Clark, Jack and Ganguli, Deep},
  journal={arXiv preprint arXiv:2102.02503},
  year={2021}
}

@article{biswas2023role,
  title={Role of ChatGPT in Computer Programming.: ChatGPT in Computer Programming.},
  author={Biswas, Som},
  journal={Mesopotamian Journal of Computer Science},
  volume={2023},
  pages={8--16},
  year={2023}
}

@article{zhang2023proagent,
  title={Proagent: Building proactive cooperative ai with large language models},
  author={Zhang, Ceyao and Yang, Kaijie and Hu, Siyi and Wang, Zihao and Li, Guanghe and Sun, Yihang and Zhang, Cheng and Zhang, Zhaowei and Liu, Anji and Zhu, Song-Chun and others},
  journal={arXiv preprint arXiv:2308.11339},
  year={2023}
}

@article{poola2023guiding,
  title={Guiding AI with human intuition for solving mathematical problems in Chat GPT},
  author={Poola, Indrasen and Bo{\v{z}}i{\'c}, Velibor},
  journal={Journal Homepage: http://www. ijmra. us},
  volume={11},
  number={07},
  year={2023}
}

@article{kaddour2023challenges,
  title={Challenges and applications of large language models},
  author={Kaddour, Jean and Harris, Joshua and Mozes, Maximilian and Bradley, Herbie and Raileanu, Roberta and McHardy, Robert},
  journal={arXiv preprint arXiv:2307.10169},
  year={2023}
}

@article{ma2023large,
  title={Large Language Models Play StarCraft II: Benchmarks and A Chain of Summarization Approach},
  author={Ma, Weiyu and Mi, Qirui and Yan, Xue and Wu, Yuqiao and Lin, Runji and Zhang, Haifeng and Wang, Jun},
  journal={arXiv preprint arXiv:2312.11865},
  year={2023}
}

@article{feng2023chessgpt,
  title={ChessGPT: Bridging Policy Learning and Language Modeling},
  author={Feng, Xidong and Luo, Yicheng and Wang, Ziyan and Tang, Hongrui and Yang, Mengyue and Shao, Kun and Mguni, David and Du, Yali and Wang, Jun},
  journal={arXiv preprint arXiv:2306.09200},
  year={2023}
}

@article{cobbe2021training,
  title={Training verifiers to solve math word problems},
  author={Cobbe, Karl and Kosaraju, Vineet and Bavarian, Mohammad and Chen, Mark and Jun, Heewoo and Kaiser, Lukasz and Plappert, Matthias and Tworek, Jerry and Hilton, Jacob and Nakano, Reiichiro and others},
  journal={arXiv preprint arXiv:2110.14168},
  year={2021}
}

@article{frieder2023mathematical,
  title={Mathematical capabilities of chatgpt},
  author={Frieder, Simon and Pinchetti, Luca and Griffiths, Ryan-Rhys and Salvatori, Tommaso and Lukasiewicz, Thomas and Petersen, Philipp Christian and Chevalier, Alexis and Berner, Julius},
  journal={arXiv preprint arXiv:2301.13867},
  year={2023}
}

@article{yan2023ask,
  title={Ask more, know better: Reinforce-Learned Prompt Questions for Decision Making with Large Language Models},
  author={Yan, Xue and Song, Yan and Cui, Xinyu and Christianos, Filippos and Zhang, Haifeng and Mguni, David Henry and Wang, Jun},
  journal={arXiv preprint arXiv:2310.18127},
  year={2023}
}

@article{thirunavukarasu2023large,
  title={Large language models in medicine},
  author={Thirunavukarasu, Arun James and Ting, Darren Shu Jeng and Elangovan, Kabilan and Gutierrez, Laura and Tan, Ting Fang and Ting, Daniel Shu Wei},
  journal={Nature medicine},
  volume={29},
  number={8},
  pages={1930--1940},
  year={2023},
  publisher={Nature Publishing Group US New York}
}

@inproceedings{yuan2022wordcraft,
  title={Wordcraft: story writing with large language models},
  author={Yuan, Ann and Coenen, Andy and Reif, Emily and Ippolito, Daphne},
  booktitle={27th International Conference on Intelligent User Interfaces},
  pages={841--852},
  year={2022}
}

@article{floridi2020gpt,
  title={GPT-3: Its nature, scope, limits, and consequences},
  author={Floridi, Luciano and Chiriatti, Massimo},
  journal={Minds and Machines},
  volume={30},
  pages={681--694},
  year={2020},
  publisher={Springer}
}

@article{lin2020limitations,
  title={Limitations of autoregressive models and their alternatives},
  author={Lin, Chu-Cheng and Jaech, Aaron and Li, Xin and Gormley, Matthew R and Eisner, Jason},
  journal={arXiv preprint arXiv:2010.11939},
  year={2020}
}

@article{alquier2021user,
  title={User-friendly introduction to PAC-Bayes bounds},
  author={Alquier, Pierre},
  journal={arXiv preprint arXiv:2110.11216},
  year={2021}
}

@article{wies2023learnability,
  title={The learnability of in-context learning},
  author={Wies, Noam and Levine, Yoav and Shashua, Amnon},
  journal={arXiv preprint arXiv:2303.07895},
  year={2023}
}

@article{masegosa2020learning,
  title={Learning under model misspecification: Applications to variational and ensemble methods},
  author={Masegosa, Andres},
  journal={Advances in Neural Information Processing Systems},
  volume={33},
  pages={5479--5491},
  year={2020}
}

@article{colson2007overview,
  title={An overview of bilevel optimization},
  author={Colson, Beno{\^\i}t and Marcotte, Patrice and Savard, Gilles},
  journal={Annals of operations research},
  volume={153},
  pages={235--256},
  year={2007},
  publisher={Springer}
}

@article{greene2008unsupervised,
  title={Unsupervised learning and clustering},
  author={Greene, Derek and Cunningham, P{\'a}draig and Mayer, Rudolf},
  journal={Machine learning techniques for multimedia: Case studies on organization and retrieval},
  pages={51--90},
  year={2008},
  publisher={Springer}
}

@article{crawford1982strategic,
  title={Strategic information transmission},
  author={Crawford, Vincent P and Sobel, Joel},
  journal={Econometrica: Journal of the Econometric Society},
  pages={1431--1451},
  year={1982},
  publisher={JSTOR}
}

@article{farrell1996cheap,
  title={Cheap talk},
  author={Farrell, Joseph and Rabin, Matthew},
  journal={Journal of Economic perspectives},
  volume={10},
  number={3},
  pages={103--118},
  year={1996},
  publisher={American Economic Association}
}

@article{xie2021explanation,
  title={An explanation of in-context learning as implicit bayesian inference},
  author={Xie, Sang Michael and Raghunathan, Aditi and Liang, Percy and Ma, Tengyu},
  journal={arXiv preprint arXiv:2111.02080},
  year={2021}
}

@book{mas1995microeconomic,
  title={Microeconomic theory},
  author={Mas-Colell, Andreu and Whinston, Michael Dennis and Green, Jerry R and others},
  volume={1},
  year={1995},
  publisher={Oxford university press New York}
}

@inproceedings{rezazadeh2022unified,
  title={A Unified View on PAC-Bayes Bounds for Meta-Learning},
  author={Rezazadeh, Arezou},
  booktitle={International Conference on Machine Learning},
  pages={18576--18595},
  year={2022},
  organization={PMLR}
}
\newpage
\appendix
\section{Standing Assumptions}

Our results are built under the following assumptions:

\begin{assumption}[System exponential-moment condition]
\label{ass:exp_moment}
There exists $\eta>0$ such that for all $G\in\mathcal G$ and all admissible $(c,\theta)$,
the random variable $\ell_{\mathrm{sys}}(G , \varphi;c,S_\theta^{(N)},X,Y)$ satisfies
\[
\mathbb E\Big[\exp\big(\eta(\ell_{\mathrm{sys}}-\mathbb E[\ell_{\mathrm{sys}}])\big)\Big]<\infty,
\]
where the expectation is over $(X,Y)\sim P_{\mathrm{task}}(\cdot\mid\theta)$ and the internal randomness of $\pi_R^\varphi$.
\end{assumption}

\begin{assumption}[User exponential-moment condition]
\label{ass:user_exp_moment}
There exists $\eta>0$ such that for all admissible $(\theta,c)$, the random variable
$\widehat{\mathcal L}_{\mathrm{usr}}^{(N)}(\theta;c\mid T_\theta^{(N)},S_\theta)$ satisfies an exponential-moment
condition, that is
the random variable $\ell_{\mathrm{usr}}(x;\theta,c)$ satisfies
\[
\mathbb E\Big[\exp\big(\eta(\ell_{\mathrm{usr}}-\mathbb E[\ell_{\mathrm{usr}}])\big)\Big]<\infty,
\]
where the expectation is over $(X,Y)\sim P_{\mathrm{task}}(\cdot\mid\theta)$.
\end{assumption}

\begin{assumption}[Finite task packing with positive mass] \label{ass:A1finite_task_packing}
There exists a finite subset \(\Theta_0=\{\theta_1,\dots,\theta_K\}\subseteq\Theta\),
with \(K\ge2\) and
\[
\alpha := P_{\mathrm{meta}}(\Theta_0) > 0,
\]
such that conditional on \(\Theta\in\Theta_0\),
the index \(U\in\{1,\dots,K\}\) defined by \(\Theta=\theta_U\) is uniform.
\end{assumption}
This is a standard identifiability condition.
If no finite subset of tasks is distinguishable in prediction,
then there is nothing to communicate and no floor should exist.

\section{Preliminary results}
We begin by proving a preliminary result which (by way of a straightforward extension) enlarges the Donsker–Varadhan variational formula:
\begin{lemma}\label{donsker-v}
For any probability
measures \(Q\ll P\) and any measurable function \(f\),
\[
\mathbb E_{h\sim Q}[f(h)]
\le
D_{\mathrm{KL}}(Q\Vert P)
+
\log \mathbb E_{h\sim P}\!\left[\exp(f(h))\right].
\]
Equivalently, for any \(\lambda>0\),
\[
\mathbb E_{h\sim Q}[f(h)]
\le
\frac{1}{\lambda}
\left\{
D_{\mathrm{KL}}(Q\Vert P)
+
\log \mathbb E_{h\sim P}\!\left[\exp(\lambda f(h))\right]
\right\}.
\]    
\end{lemma}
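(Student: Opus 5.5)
The plan is to prove the first inequality by the Gibbs-variational argument: apply the nonnegativity of KL to a tilted measure. The second form then follows by rescaling.

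\emph{Reduction.} First dispose of the degenerate cases. If \(D_{\mathrm{KL}}(Q\Vert P)=\infty\), or if \(\mathbb E_P[e^{f}]=\infty\), the right-hand side is \(+\infty\) and there is nothing to prove. Since \(P\) is a probability measure, \(\mathbb E_P[e^f]>0\), so the logarithm is never \(-\infty\). Hence we may assume \(Z:=\mathbb E_{h\sim P}[e^{f(h)}]\in(0,\infty)\) and \(D_{\mathrm{KL}}(Q\Vert P)<\infty\). We also need \(\mathbb E_Q[f]\) to be well defined. I will take the convention that \(\mathbb E_Q[f^+]<\infty\) or \(\mathbb E_Q[f^-]<\infty\). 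If necessary, one first proves the bound for the truncations \(f\wedge n\) and lets \(n\to\infty\) by monotone convergence.

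\emph{Tilting.} Define the Gibbs measure \(P_f(dh):=e^{f(h)}P(dh)/Z\). It is equivalent to \(P\), so \(Q\ll P_f\). Its density is
\[
\frac{dQ}{dP_f}=\frac{dQ}{dP}\,Z\,e^{-f}.
\]
Taking logarithms and integrating against \(Q\) gives
\[
0\le D_{\mathrm{KL}}(Q\Vert P_f)=D_{\mathrm{KL}}(Q\Vert P)+\log Z-\mathbb E_Q[f].
\]
This inequality is exactly the claim. The nonnegativity of KL is Gibbs' inequality, which follows from Jensen applied to the convex function \(-\log\).

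\emph{Main obstacle.} The only delicate point is justifying the linearity step, that is, splitting \(\mathbb E_Q[\log(dQ/dP)+\log Z-f]\) into three separate integrals. This requires each piece to be integrable or of a fixed sign. I would handle it with the truncation \(f_n=f\wedge n\). For \(f_n\), bounded above, the quantities \(\mathbb E_Q[f_n]\) and \(\log\mathbb E_P[e^{f_n}]\) are finite or \(-\infty\) in a controlled way. We then let \(n\to\infty\), using monotone convergence on both \(\mathbb E_Q[f_n^+]\) and \(\mathbb E_P[e^{f_n}]\).

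\emph{Scaled form.} Apply the first inequality to the function \(\lambda f\), which gives \(\lambda\,\mathbb E_Q[f]\le D_{\mathrm{KL}}(Q\Vert P)+\log\mathbb E_P[e^{\lambda f}]\). Dividing by \(\lambda>0\) yields the second form.
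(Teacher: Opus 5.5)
Your argument is correct. Tilting \(P\) to the Gibbs measure \(P_f\propto e^{f}P\) and using \(D_{\mathrm{KL}}(Q\Vert P_f)\ge 0\) is the standard proof of the Donsker--Varadhan inequality. Applying it to \(\lambda f\) and dividing by \(\lambda\) is exactly why the second form is equivalent to the first (the first being the case \(\lambda=1\)).

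The paper gives no proof of this lemma; it simply invokes Donsker--Varadhan as a known result. Your proposal therefore supplies the missing argument, with more measure-theoretic care than the paper attempts.

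Two small tightenings are possible:
\begin{itemize}
\item You do not need to adopt a convention making \(\mathbb E_Q[f]\) well defined. Once \(D_{\mathrm{KL}}(Q\Vert P)<\infty\) and \(\mathbb E_P[e^{f}]<\infty\), apply your truncated inequality to \(f^+\wedge n\) and use monotone convergence. This gives \(\mathbb E_Q[f^+]\le D_{\mathrm{KL}}(Q\Vert P)+\log(1+\mathbb E_P[e^{f}])<\infty\), so \(\mathbb E_Q[f]\in[-\infty,\infty)\) automatically.
\item In the truncated step, you should say explicitly that \(\log(dQ/dP)\in L^1(Q)\) when the KL divergence is finite. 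This holds because its negative part has \(Q\)-integral at most \(1/e\), and it is what justifies splitting the integral into three pieces.
\end{itemize}
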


\begin{lemma}[Pointwise lower bound under admissibility]
\label{lem:pointwise_obj_lb}
Assume the effective System predictor is admissible:
\begin{equation}
P_{G,\varphi}(\cdot\mid x,c)\in \mathcal P_{\mathrm{safe}}(x,c)
\qquad \text{for all admissible }(G,\varphi)\in\mathcal H,\ (x,c).
\label{eq:admissible_predictor}
\end{equation}
Then for all \((\theta,x,c)\),
\begin{equation}
D_{\rm KL}\!\big(P_\theta(\cdot\mid x)\,\|\,P_{G,\varphi}(\cdot\mid x,c)\big)
\ \ge\
\Delta_{\mathrm{obj}}(\theta,x,c).
\label{eq:pointwise_lb}
\end{equation}
where for each \((\theta,x,c)\), the pointwise distance of the User-ideal distribution to the admissible set $\Delta_{\mathrm{obj}}$ is defined by:
\begin{equation}
\Delta_{\mathrm{obj}}(\theta,x,c)
:=
\inf_{Q\in\mathcal P_{\mathrm{safe}}(x,c)}
D_{\rm KL}\!\big(P_\theta(\cdot\mid x)\,\|\,Q\big).
\label{eq:Delta_obj_def}
\end{equation}
\end{lemma}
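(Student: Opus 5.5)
The plan is to obtain \eqref{eq:pointwise_lb} directly from the definition of the infimum in \eqref{eq:Delta_obj_def}; no analytic property of the KL divergence beyond its being a well-defined extended-real quantity is needed. Fix an arbitrary triple \((\theta,x,c)\) and an arbitrary admissible pair \((G,\varphi)\in\mathcal H\). The hypothesis \eqref{eq:admissible_predictor} states that the effective predictor \(Q_0:=P_{G,\varphi}(\cdot\mid x,c)\) belongs to \(\mathcal P_{\mathrm{safe}}(x,c)\). By Assumption~\ref{ass:A3admissibility_set} this set is non-empty, so the infimum defining \(\Delta_{\mathrm{obj}}(\theta,x,c)\) is taken over a non-empty family and is a well-defined element of \([0,\infty]\). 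It is nonnegative because each KL term is nonnegative by Gibbs' inequality.

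The key step is the elementary fact that an infimum is a lower bound for every member of the set. Since \(Q_0\in\mathcal P_{\mathrm{safe}}(x,c)\),
\[
\Delta_{\mathrm{obj}}(\theta,x,c)=\inf_{Q\in\mathcal P_{\mathrm{safe}}(x,c)}D_{\rm KL}\big(P_\theta(\cdot\mid x)\,\|\,Q\big)\le D_{\rm KL}\big(P_\theta(\cdot\mid x)\,\|\,Q_0\big),
\]
which is exactly \eqref{eq:pointwise_lb}. The inequality holds in the extended reals. It is trivially true when the right-hand side is \(+\infty\), for example when \(P_\theta(\cdot\mid x)\not\ll Q_0\), so no absolute-continuity case split is required. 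Because \((\theta,x,c)\) and \((G,\varphi)\) were arbitrary, the bound holds uniformly.

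I do not expect a genuine obstacle; the only care needed is bookkeeping. The first point is measurability. If the lemma is later integrated over \((\theta,x,c)\), as in Theorem~\ref{thm:misalignment_flr_meta}, one needs \(\Delta_{\mathrm{obj}}\) to be measurable, or else the expectation must be read as a lower integral. The pointwise statement itself does not require this, and I would flag it as a standing regularity assumption rather than prove it. The second point is the conditioning variables. \(P_{G,\varphi}\) is defined with conditioning on \(S_\theta\) as well, so I would note that admissibility is assumed for every realisation of the support set. Under that reading the same argument applies verbatim with \(S_\theta\) fixed.
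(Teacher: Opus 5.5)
Your proposal is correct and is exactly the paper's argument: since $P_{G,\varphi}(\cdot\mid x,c)\in\mathcal P_{\mathrm{safe}}(x,c)$, the infimum defining $\Delta_{\mathrm{obj}}(\theta,x,c)$ is at most the KL value at that member. Your remarks on extended-real values, measurability, and fixing $S_\theta$ are sound bookkeeping that the paper leaves implicit.
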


\begin{proof}
This is immediate from the definition \eqref{eq:Delta_obj_def}. Since
\(P_{G,\varphi}(\cdot\mid x,c)\in\mathcal P_{\mathrm{safe}}(x,c)\),
the infimum over \(Q\in\mathcal P_{\mathrm{safe}}(x,c)\) is at most the value attained at
\(Q=P_{G,\varphi}(\cdot\mid x,c)\), yielding \eqref{eq:pointwise_lb}.
\end{proof}


\begin{lemma}[Data Processing Inequality]\label{lemma:DPI}
Let $X,Y,Z$ be random variables on measurable spaces such that $X \to Y \to Z$ forms a Markov chain, i.e.
\[
P_{Z\mid X,Y}(\cdot\mid x,y) \;=\; P_{Z\mid Y}(\cdot\mid y)
\quad\text{for $P_{X,Y}$-a.e.\ $(x,y)$.}
\]
Equivalently, the joint law factorises as
\[
P_{X,Y,Z} \;=\; P_X\,P_{Y\mid X}\,P_{Z\mid Y}.
\]
Then the mutual information satisfies
\[
I(X;Z) \;\le\; I(X;Y).
\]
Moreover,
\[
I(X;Y) - I(X;Z) \;=\; I(X;Y\mid Z) \;\ge\; 0.
\]
\end{lemma}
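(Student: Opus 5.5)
The statement is the Data Processing Inequality (Lemma~\ref{lemma:DPI}). I would prove it via the chain rule for mutual information, carried out at the level of KL divergences so that it holds on general measurable spaces. This requires regular conditional distributions, which exist for standard Borel spaces; I would assume these implicitly throughout the paper.

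\emph{Step 1: two chain-rule expansions.} Expand $I(X;Y,Z)$ in two ways:
\[
I(X;Y,Z) = I(X;Z) + I(X;Y\mid Z) = I(X;Y) + I(X;Z\mid Y),
\]
where conditional mutual information is defined as
\[
I(X;Y\mid Z) := \mathbb E_{Z}\big[ D_{\rm KL}(P_{X,Y\mid Z}\,\|\,P_{X\mid Z}\otimes P_{Y\mid Z}) \big].
\]
Each expansion follows from the chain rule for KL divergence: disintegrate $P_{X,Y,Z}$ and $P_X\otimes P_{Y,Z}$ with respect to the appropriate marginal, then take logarithms of the Radon--Nikodym densities. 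When $I(X;Y,Z)=\infty$, additional care is needed with $\infty-\infty$. I would first state the result under the assumption that $I(X;Y)<\infty$. In that case $I(X;Z)\le I(X;Y,Z)=I(X;Y)+I(X;Z\mid Y)$, and once the last term is shown to be zero, everything is finite.

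\emph{Step 2: use the Markov property.} The factorisation $P_{X,Y,Z}=P_X P_{Y\mid X} P_{Z\mid Y}$ means that $X$ and $Z$ are conditionally independent given $Y$. Hence, for a.e.\ $y$,
\[
P_{X,Z\mid Y=y}=P_{X\mid Y=y}\otimes P_{Z\mid Y=y},
\]
so each inner KL term vanishes and $I(X;Z\mid Y)=0$. Substituting into the identity from Step 1 gives
\[
I(X;Y)-I(X;Z)=I(X;Y\mid Z).
\]

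\emph{Step 3: nonnegativity.} $I(X;Y\mid Z)\ge 0$ because it is an average of KL divergences, and KL divergence is nonnegative by Gibbs' inequality (Jensen's inequality applied to $-\log$). This yields $I(X;Z)\le I(X;Y)$.

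The main obstacle is the measure-theoretic justification of the chain rule in Step 1: the existence of the disintegrations and the handling of infinite values. For infinite mutual information, I would fall back on the variational (Donsker--Varadhan, Lemma~\ref{donsker-v}) or partition-supremum characterisation of KL. For any finite partition of the range of $Z$, its pullback is a coarser test on $(X,Y)$ through the kernel $P_{Z\mid Y}$. Monotonicity of KL under the Markov kernel $(x,y)\mapsto(x,z)$ then gives $D_{\rm KL}(P_{X,Z}\|P_X\otimes P_Z)\le D_{\rm KL}(P_{X,Y}\|P_X\otimes P_Y)$ directly. This kernel argument gives the inequality without any finiteness assumptions, and I would present it as an alternative route.
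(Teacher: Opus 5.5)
Your chain-rule argument is correct. It is the standard textbook proof (Cover--Thomas). The paper gives no proof of this lemma at all and simply quotes it as a standard preliminary. The identity $I(X;Y)-I(X;Z)=I(X;Y\mid Z)$ in the statement is exactly what your two expansions of $I(X;Y,Z)$ give once $I(X;Z\mid Y)=0$.

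The infinite case does not need a separate route. The chain rule for KL holds in $[0,\infty]$, and since every term is nonnegative the sums are well defined. So $I(X;Z)\le I(X;Z)+I(X;Y\mid Z)=I(X;Y)$ holds unconditionally, and it is trivial anyway when $I(X;Y)=\infty$. The subtracted form is valid whenever $I(X;Z)<\infty$.

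If you still present the kernel route, be aware that the paper's Lemma~\ref{donsker-v} is only the one-sided change-of-measure inequality. Proving $D_{\rm KL}(P_{X,Z}\|P_X\otimes P_Z)\le D_{\rm KL}(P_{X,Y}\|P_X\otimes P_Y)$ that way also needs the full variational identity $D_{\rm KL}(P\|Q)=\sup_g\{\mathbb E_P g-\log\mathbb E_Q e^{g}\}$. You then need Jensen applied to $\int e^{g(x,z)}P_{Z\mid Y}(\mathrm dz\mid y)$. The reason is that a partition of the range of $Z$, pulled back through a stochastic kernel, gives a soft partition rather than a partition, so the ``coarser test'' phrasing needs this convexity step.
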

\begin{lemma}[Fano's Inequality]\label{lemma:fano}
Let $W$ be a discrete random variable taking values in a finite set $\mathcal W$ with
$|\mathcal W| = M \ge 2$, and let $\hat W = g(Y)$ be any estimator of $W$ based on an
observation $Y$ (possibly randomised). Define the probability of error
\[
P_e := \mathbb P(\hat W \neq W).
\]
Then
\[
H(W \mid Y)
\;\le\;
h(P_e) + P_e \log(M-1),
\]
where $h(p) := -p\log p - (1-p)\log(1-p)$ is the binary entropy function
(with the convention $0\log 0 := 0$).
Consequently,
\[
P_e
\;\ge\;
1 - \frac{I(W;Y) + \log 2}{\log M},
\]
and, equivalently,
\[
I(W;Y)
\;\ge\;
\log M - h(P_e) - P_e \log(M-1).
\]
\end{lemma}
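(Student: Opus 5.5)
The plan is the standard error-indicator argument. I would first reduce to a deterministic estimator, then apply the chain rule for entropy to the pair $(E,W)$, where $E:=\mathbf 1\{\hat W\neq W\}$. For a randomised estimator, write $\hat W=g(Y,U)$ with $U$ independent of $(W,Y)$, and set $Y':=(Y,U)$. Then $H(W\mid Y')=H(W\mid Y)$ and $I(W;Y')=I(W;Y)$. So it suffices to prove everything with $Y'$ in place of $Y$, where $\hat W$ is a measurable function of the observation.

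Next I would expand $H(E,W\mid Y)$ in the two possible orders:
\[
H(E,W\mid Y)=H(W\mid Y)+H(E\mid W,Y)=H(E\mid Y)+H(W\mid E,Y).
\]
\begin{itemize}
\item Since $E$ is a function of $(W,Y)$, we have $H(E\mid W,Y)=0$.
\item Since conditioning reduces entropy and $E$ is binary, $H(E\mid Y)\le H(E)=h(P_e)$.
\item On $\{E=0\}$ we have $W=\hat W$, so $W$ is determined and contributes no entropy.
\item On $\{E=1\}$, $W$ ranges over $\mathcal W\setminus\{\hat W\}$, which has $M-1$ elements, so its conditional entropy is at most $\log(M-1)$.
\end{itemize}
Together these give $H(W\mid E,Y)\le P_e\log(M-1)$, and hence the main inequality $H(W\mid Y)\le h(P_e)+P_e\log(M-1)$.

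For the two consequences, I would use $I(W;Y)=H(W)-H(W\mid Y)$ together with the bounds $h(P_e)\le\log 2$ and $\log(M-1)\le\log M$. This step is where I expect the only real subtlety. The stated forms $P_e\ge 1-\frac{I(W;Y)+\log 2}{\log M}$ and $I(W;Y)\ge \log M-h(P_e)-P_e\log(M-1)$ require $H(W)=\log M$, that is, $W$ uniform on $\mathcal W$. For general $W$ one only has $H(W)\le\log M$, which points the wrong way. I would therefore state the consequences under the uniformity hypothesis, which is exactly how the lemma is used in Theorem~\ref{thm:blended_floor_usr}, where $\Theta$ is uniform on $\Theta_0$. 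Under that hypothesis, $\log M-I(W;Y)=H(W\mid Y)\le h(P_e)+P_e\log(M-1)$, which is the third display after rearranging. Bounding the right-hand side further by $\log 2+P_e\log M$ and dividing by $\log M>0$ (valid since $M\ge 2$) yields the second display.
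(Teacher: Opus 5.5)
Your proof is correct, and it is the standard error-indicator argument. The reduction of a randomised estimator to a deterministic one via $Y'=(Y,U)$ is sound: with $U$ independent of $(W,Y)$ we have $I(W;U\mid Y)=0$, so $H(W\mid Y')=H(W\mid Y)$ and $I(W;Y')=I(W;Y)$. The paper states this lemma without proof and simply quotes it as a standard tool, so there is no competing argument to compare against.

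Keep your caveat about the two consequences. It points to a real defect in the statement as written, not a weakness in your proof. Without uniformity both displays fail. Take $M\ge 3$, $W\equiv w_0$ deterministic and $\hat W\equiv w_0$. Then $P_e=0$ and $I(W;Y)=0$, whereas the second display would force $P_e\ge 1-\log 2/\log M>0$ and the third would force $0\ge \log M$.

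The paper's ``equivalently'' is also loose. Under $H(W)=\log M$, the third display is just the main inequality rewritten. The second display is strictly weaker, obtained through $h(P_e)\le\log 2$ and $\log(M-1)\le\log M$, exactly as you derive it. For general $W$ the correct form is $P_e\ge (H(W\mid Y)-\log 2)/\log M$.

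Adding the uniformity hypothesis costs the paper nothing. Its only use of the lemma, in the proof of Theorem~\ref{thm:blended_floor_usr}, applies it to the index $U$, which is uniform on $\{1,\dots,K\}$ conditional on $\Theta\in\Theta_0$.
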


\begin{lemma}[Lemma F.1. \cite{rezazadeh2022unified}]\label{RezF1}
Let \(X_i\), \(i=1,\ldots, N\) be independent random variables and suppose that for a given $a_i\in \mathbb{R}_{>0}$ we have that
\begin{align}
\mathbb{P}_{X_i}\left[f_i(X_i)\geq a_i\right]\leq \delta_i,    
\end{align}
where \(\delta_i\in [0,1]\), then the following holds:
\begin{align}
    \mathbb{P}_{X_{1:N}}\left[\sum_{i\geq 1}f(X_i)\leq \sum_{i\geq 1} a_i\right]\geq 1-\sum_{i\geq 1}\delta_i
\end{align}
\end{lemma}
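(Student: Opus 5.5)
The plan is a direct union-bound argument; independence of the \(X_i\) will not actually be needed. I read the \(f(X_i)\) in the conclusion as \(f_i(X_i)\), consistent with the hypothesis. For each \(i\) define the \emph{bad} event \(E_i:=\{f_i(X_i)\ge a_i\}\). By hypothesis \(\mathbb P(E_i)\le\delta_i\).

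The key step is a deterministic inclusion of events. On the complement \(\bigcap_{i}E_i^{c}\) we have \(f_i(X_i)<a_i\) for every \(i=1,\ldots,N\). Summing these \(N\) strict inequalities gives
\[
\sum_{i=1}^N f_i(X_i)<\sum_{i=1}^N a_i .
\]
In particular the weak inequality \(\sum_i f_i(X_i)\le\sum_i a_i\) holds. Hence
\[
\Big\{\textstyle\sum_{i} f_i(X_i)\le \sum_i a_i\Big\}\supseteq\bigcap_{i=1}^N E_i^{c},
\]
or equivalently, the failure event \(\{\sum_i f_i(X_i)>\sum_i a_i\}\) is contained in \(\bigcup_i E_i\).

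Next take probabilities under the joint law \(\mathbb P_{X_{1:N}}\) and apply Boole's inequality:
\[
\mathbb P_{X_{1:N}}\Big[\textstyle\sum_i f_i(X_i)\le\sum_i a_i\Big]
\;\ge\; 1-\mathbb P\Big(\textstyle\bigcup_i E_i\Big)
\;\ge\; 1-\sum_{i=1}^N\mathbb P(E_i)
\;\ge\; 1-\sum_{i=1}^N\delta_i .
\]
Here each marginal probability \(\mathbb P(E_i)\) under the joint law equals \(\mathbb P_{X_i}[f_i(X_i)\ge a_i]\), which is where the hypothesis enters. If \(\sum_i\delta_i\ge1\) the bound is trivially true.

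No step is a real obstacle. The only care needed is measurability of each \(f_i\), so that the \(E_i\) are events, and noting that the bound holds for arbitrary dependence among the \(X_i\). This is exactly how the lemma will be used later: to combine the \(M\) task-level PAC--Bayes bounds, each at confidence \(\delta/(2M)\), with the environment-level bound at confidence \(\delta/2\), as in the \(\log(2MC_{\mathrm{Task}}/\delta)\) term of Theorem~\ref{thm:user_meta_pb}.
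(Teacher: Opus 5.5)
Your union-bound argument is correct and complete. On $\bigcap_i\{f_i(X_i)<a_i\}$ the summed inequality holds, so the failure event $\{\sum_i f_i(X_i)>\sum_i a_i\}$ lies inside $\bigcup_i\{f_i(X_i)\ge a_i\}$, and Boole's inequality gives the bound $1-\sum_i\delta_i$.

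The paper gives no proof of its own; it imports the lemma from \cite{rezazadeh2022unified}. The comparison worth making concerns the independence hypothesis in the statement, which suggests a different route that actually uses it. One could factorise $\mathbb P\bigl(\bigcap_i\{f_i(X_i)<a_i\}\bigr)=\prod_i\mathbb P(f_i(X_i)<a_i)\ge\prod_i(1-\delta_i)\ge 1-\sum_i\delta_i$. Your route shows that hypothesis is superfluous.

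Your version is also the one the paper actually needs. In the proof of Theorem~\ref{thm:user_meta_pb}, the task-level Markov events for $Z_1,\dots,Z_M$ and the environment-level event for $Z_{\mathrm{Env}}$ are all functions of the same sampled tasks $\theta_1,\dots,\theta_M$. They are therefore not independent, and a product-based argument would not justify intersecting them.

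One small correction to your closing remark: the paper never cites this lemma by name in that proof. It performs the union bound inline, in the way you describe.
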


\begin{lemma}\label{lemma:rez-exp}
Let \(X_1,\ldots, X_m\) be independent random variable and let \(f:\cX\to\mathbb{R}\) be a sub-Gaussian function with parameter \(\sigma\). Assume \(\Delta:= \mathbb{E}\left[f(X)\right]-\frac{1}{m}\sum_{k=1}^mg(X_i)\), where for \(\epsilon > 0\), we have \(\mathbb{P}\left[\Delta \geq \epsilon\right]\leq \exp\left(\frac{-m\epsilon^2}{2\sigma^2}\right)\) then
\begin{align}
\mathbb{E}\left[e^{\lambda m\Delta^2}\right]\leq \frac{1}{1-2\lambda \sigma^2},    
\end{align}
for \(\lambda \leq \frac{1}{2\sigma^2}\).
\end{lemma}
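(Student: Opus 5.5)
The plan is to bound the exponential moment directly from the tail hypothesis using the layer-cake (tail-integral) representation, without invoking any earlier result of the paper beyond this hypothesis. First, two points need fixing. The statement writes $g(X_i)$ inside $\Delta$; I read this as $f(X_i)$, so that $\Delta$ is the deviation of the mean of $f$ from its empirical average. The hypothesis only controls the upper tail $\mathbb P[\Delta\ge\epsilon]$, so I will prove the bound for $\Delta_+^2=\max\{\Delta,0\}^2$. This is the quantity actually used in one-sided PAC--Bayes bounds. If one wants $\Delta^2$ itself, the same argument applies verbatim once the tail bound is assumed for $|\Delta|$. Finally, I restrict to $\lambda<\tfrac{1}{2\sigma^2}$, since at equality the right-hand side is infinite and the claim is vacuous.

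The key step is the identity, valid for the nonnegative variable $V:=\Delta_+$ and the increasing $C^1$ function $\phi(v)=e^{\lambda m v^2}$ with $\phi(0)=1$:
\[
\mathbb E\big[e^{\lambda m V^2}\big]
=1+\int_0^\infty 2\lambda m\,\epsilon\, e^{\lambda m\epsilon^2}\,\mathbb P[V\ge\epsilon]\,d\epsilon .
\]
This follows by writing $\phi(V)-1=\int_0^V\phi'(\epsilon)\,d\epsilon$ and applying Tonelli's theorem, which is legitimate because the integrand is nonnegative. For $\epsilon>0$ we have $\{V\ge\epsilon\}=\{\Delta\ge\epsilon\}$, so the hypothesis gives $\mathbb P[V\ge\epsilon]\le \exp(-m\epsilon^2/(2\sigma^2))$.

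Substituting the tail bound, the integrand is at most $2\lambda m\,\epsilon\,\exp\!\big(-m\epsilon^2(\tfrac{1}{2\sigma^2}-\lambda)\big)$. With $a:=m(\tfrac{1}{2\sigma^2}-\lambda)>0$, this is the elementary Gaussian-type integral
\[
\int_0^\infty 2\lambda m\,\epsilon\,e^{-a\epsilon^2}d\epsilon=\frac{\lambda m}{a}=\frac{2\lambda\sigma^2}{1-2\lambda\sigma^2}.
\]
Adding $1$ then gives exactly $\frac{1}{1-2\lambda\sigma^2}$, which is the claimed bound.

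The main obstacle is not computational. It is the mismatch between the one-sided tail hypothesis and the two-sided quantity $\Delta^2$ in the conclusion. On the event $\Delta<0$, the hypothesis says nothing, and $e^{\lambda m\Delta^2}$ could be large there. I would handle this by proving the statement for $\Delta_+$, and noting explicitly that the two-sided version needs the symmetric tail hypothesis. In the sub-Gaussian setting the symmetric tail comes for free, since applying Hoeffding/Chernoff to $-f$ yields the same bound for the lower tail. The other delicate point is the strictness $\lambda<1/(2\sigma^2)$, which is required for the convergence of the integral at infinity.
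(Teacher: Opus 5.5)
The paper states this lemma without proof; it is imported from the meta-learning PAC--Bayes literature and never invoked. The nearest argument in the paper is Lemma~\ref{lem:env_exp_from_subgaussian_usr}, which appeals to the standard moment-generating-function bound for a centred sub-Gaussian variable.

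Your layer-cake computation for $\Delta_+=\max\{\Delta,0\}$ is correct and reproduces the constant $1/(1-2\lambda\sigma^2)$ exactly. You should, however, state the range as $0\le\lambda<1/(2\sigma^2)$. Calling $\phi$ increasing silently uses $\lambda\ge 0$, and for $\lambda<0$ the claim is false: with $\Delta\equiv 0$ the left side is $1$ and the right side is below $1$.

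\textbf{The gap: the two-sided claim.} This is what the lemma asserts, and it is what the paper's comparison functions $\lambda(a-b)^2$ require. Your patch does not deliver it.

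\begin{itemize}
\item Applying Chernoff to $-f$ bounds the lower tail separately. Summing the two tails gives only $\mathbb{P}[|\Delta|\ge\epsilon]\le 2e^{-m\epsilon^2/(2\sigma^2)}$.
\item Via $e^{\lambda m\Delta^2}=e^{\lambda m\Delta_+^2}+e^{\lambda m\Delta_-^2}-1$, your computation then yields $(1+2\lambda\sigma^2)/(1-2\lambda\sigma^2)$, which is strictly larger than claimed.
\item The constant-one tail bound on $|\Delta|$ that your ``verbatim'' argument needs does not follow from sub-Gaussianity.
\end{itemize}

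In fact, no argument using only tail information about $\Delta$ can reach the stated constant. Take a symmetric $\Delta=\sigma W/\sqrt m$ with $\mathbb{P}[|W|\ge t]=\min\{1,2e^{-t^2/2}\}$. It satisfies both one-sided tail bounds, yet $\mathbb{E}[e^{\lambda m\Delta^2}]=4^{\lambda\sigma^2}/(1-2\lambda\sigma^2)$.

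\textbf{The missing idea.} Use sub-Gaussianity in moment-generating-function form, as in Assumption~\ref{ass:env_subgaussian_usr}. Read the $X_i$ as i.i.d., which the notation $\mathbb{E}[f(X)]$ presupposes. Independence then gives $\mathbb{E}[e^{t\Delta}]\le e^{\sigma^2t^2/(2m)}$ for all real $t$. Linearise the square with an independent $G\sim\mathcal{N}(0,1)$ and apply Tonelli:
\[
\mathbb{E}\big[e^{\lambda m\Delta^2}\big]
=\mathbb{E}_G\,\mathbb{E}\big[e^{\sqrt{2\lambda m}\,G\,\Delta}\big]
\le\mathbb{E}_G\big[e^{\lambda\sigma^2G^2}\big]
=\frac{1}{\sqrt{1-2\lambda\sigma^2}}
\le\frac{1}{1-2\lambda\sigma^2}.
\]
This proves the lemma as stated, with the sharper constant used in Lemma~\ref{lem:env_exp_from_subgaussian_usr}, and makes the tail hypothesis redundant.

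Your one-sided version would suffice downstream only if the comparison functions were replaced by $\lambda((a-b)_+)^2$. That function is still convex and has the same calibration $\sqrt{u/\lambda}$.
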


\section{Additional Results}

\begin{lemma}[Task confusion implies predictive distortion]
\label{lem:confusion_distortion}
Let $\Theta_0=\{\theta_1,\ldots,\theta_K\}$ be a set of tasks satisfying
Assumption~\ref{ass:A2task_separation}. For any effective representation $Z$ and any
predictive kernel $Q_Z(\cdot\mid x)$ used by the Base Solver, define the decoder
\[
\widehat\Theta(Z)
\in
\arg\min_{\theta_j\in\Theta_0}
\mathbb E_x
D_{\rm KL}\!\left(
P_{\theta_j}(\cdot\mid x)\,\|\,Q_Z(\cdot\mid x)
\right).
\]
Then
\[
\mathbb E
\left[
D_{\rm KL}\!\left(
P_{\Theta}(\cdot\mid x)\,\|\,Q_Z(\cdot\mid x)
\right)
\,\middle|\,
\Theta\in\Theta_0
\right]
\ge
\delta\,
\mathbb P\!\left(
\widehat\Theta(Z)\neq \Theta
\,\middle|\,
\Theta\in\Theta_0
\right).
\]
\end{lemma}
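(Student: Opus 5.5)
The plan is to prove a \emph{pointwise} inequality for each realisation of $(\Theta,Z)$ and then take conditional expectations given $\{\Theta\in\Theta_0\}$. For a fixed value $z$ of the effective representation, write
\[
D_j(z):=\mathbb E_x D_{\rm KL}\!\left(P_{\theta_j}(\cdot\mid x)\,\|\,Q_z(\cdot\mid x)\right),\qquad j=1,\dots,K .
\]
Here $\mathbb E_x$ is taken under the same input law used in Assumption~\ref{ass:A2task_separation}, so the tasks share a common input distribution in this comparison. The decoder $\widehat\Theta(z)$ selects an index minimising $D_j(z)$ over the finite set $\Theta_0$. I will break ties by taking the smallest index, so that $\widehat\Theta$ is a measurable function of $z$.

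\textbf{Key step (separation on the confusion event).} Fix $z$ and suppose the true task is $\theta_i$ while $\widehat\Theta(z)=\theta_j$ with $j\neq i$.
\begin{itemize}
\item By the argmin property of the decoder, $D_j(z)\le D_i(z)$.
\item The kernel $Q_z(\cdot\mid x)$ is, by construction, the predictive distribution $P_{G,\varphi}(\cdot\mid x,\hat c)$ of some admissible $(G,\varphi)\in\mathcal H$, with $\hat c$ the representation corresponding to $z$. Since Assumption~\ref{ass:A2task_separation} holds uniformly in $\hat c$ and bounds an infimum over $\mathcal H$, it applies to this particular pair: $\tfrac12\big(D_i(z)+D_j(z)\big)\ge\delta$.
\item Combining the two gives $D_i(z)\ge \tfrac12\big(D_i(z)+D_j(z)\big)\ge\delta$.
\end{itemize}
On the complementary event $\widehat\Theta(z)=\Theta$, nonnegativity of KL gives $D_i(z)\ge 0$. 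Together these yield the pointwise bound
\[
\mathbb E_x D_{\rm KL}\!\left(P_\Theta(\cdot\mid x)\,\|\,Q_Z(\cdot\mid x)\right)\;\ge\;\delta\,\mathbf 1\{\widehat\Theta(Z)\neq\Theta\}
\]
for every realisation with $\Theta\in\Theta_0$.

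\textbf{Conclusion.} Take expectations of the pointwise bound over the joint law of $(\Theta,Z)$, conditioned on $\Theta\in\Theta_0$; this conditional law is well defined since $\alpha>0$. Monotonicity of expectation then gives the claimed inequality, whose right-hand side is $\delta\,\mathbb P(\widehat\Theta(Z)\neq\Theta\mid\Theta\in\Theta_0)$. If a randomised decoder or randomised rewriting is involved, the same argument applies conditionally on the internal randomness, which is then integrated out.

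\textbf{Main obstacle.} The substantive point is legitimising the use of Assumption~\ref{ass:A2task_separation} for the kernel $Q_Z$. The assumption is stated for predictors $P_{G,\varphi}(\cdot\mid x,\hat c)$ with $(G,\varphi)\in\mathcal H$, so $Q_Z$ must be identified as such a predictor evaluated at a single internal context. I will state this identification explicitly, interpreting ``predictive kernel used by the Base Solver'' as an admissible predictor. I will also fix a common input measure for $\mathbb E_x$ so that $D_i$ and $D_j$ are directly comparable. The remaining steps (finite argmin, nonnegativity of KL, taking expectations) are routine.
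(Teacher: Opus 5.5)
Your proposal is correct and follows the paper's proof essentially step for step. On the confusion event the argmin property gives $D_j(z)\le D_i(z)$, Assumption~\ref{ass:A2task_separation} gives $\tfrac12(D_i+D_j)\ge\delta$, and conditioning the resulting pointwise bound $\delta\,\mathbf 1\{\widehat\Theta(Z)\neq\Theta\}$ on $\Theta\in\Theta_0$ yields the claim; your explicit treatment of tie-breaking, the common input measure, and the identification of $Q_Z$ with an admissible $P_{G,\varphi}(\cdot\mid x,\hat c)$ makes precise what the paper leaves implicit.
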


\begin{lemma}[User misspecification decomposition]
\label{lem:user_misspec}
Define the User Bayes-optimal loss for task $\theta$ as
\[
\mathcal L_{\mathrm{usr}}^\star(\theta)
:=
\inf_{\pi(\cdot\mid\theta)}
\mathbb E_{x\sim P_{\mathrm{task}}(\cdot\mid\theta)}
\Big[
D_{\rm KL}\!\big(P_\theta(\cdot\mid x)\,\|\,P_{\pi}( \cdot\mid x)\big)
\Big],
\]
and the best-in-class loss achievable via contexts in $\mathcal C$ as $
\mathcal L_{\mathrm{usr},\mathcal C}^\star(\theta)
:=
\inf_{c\in\mathcal C}
\mathcal L_{\mathrm{usr}}(\theta;c)$.
Then for any $\tilde Q\in\Delta(\mathcal C)$ and any task $\theta$,
\begin{equation}
\mathbb E_{c\sim \tilde Q}\Big[\mathcal L_{\mathrm{usr}}(\theta;c)\Big]
=
\mathcal L_{\mathrm{usr}}^\star(\theta)
+
A_{\mathrm{usr}}(\theta)
+
E_{\tilde Q}(\theta),
\label{eq:user_misspec_decomp}
\end{equation}
where $A_{\mathrm{usr}}(\theta):=\mathcal L_{\mathrm{usr},\mathcal C}^\star(\theta)-\mathcal L_{\mathrm{usr}}^\star(\theta)\ge 0$ is an approximation (misspecification) term induced by restricting communication to $\mathcal C$,
and $E_{\tilde Q}(\theta):=\mathbb E_{c\sim \tilde Q}[\mathcal L_{\mathrm{usr}}(\theta;c)]-\mathcal L_{\mathrm{usr},\mathcal C}^\star(\theta)\ge 0$ is suboptimality relative to the best-in-class context.
\end{lemma}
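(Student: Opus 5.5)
The identity \eqref{eq:user_misspec_decomp} is a telescoping decomposition, so the plan is to verify it algebraically first and then prove the two sign claims separately. Fix a task $\theta$ and a context distribution $\tilde Q\in\Delta(\mathcal C)$. I will first take the three quantities $\mathcal L_{\mathrm{usr}}^\star(\theta)$, $\mathcal L_{\mathrm{usr},\mathcal C}^\star(\theta)$ and $\mathbb E_{c\sim\tilde Q}[\mathcal L_{\mathrm{usr}}(\theta;c)]$ to be finite. This is needed so that the differences defining $A_{\mathrm{usr}}(\theta)$ and $E_{\tilde Q}(\theta)$ are well defined rather than of the form $\infty-\infty$. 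If $\mathbb E_{c\sim\tilde Q}[\mathcal L_{\mathrm{usr}}(\theta;c)]=+\infty$, I would interpret the identity in the extended reals with $E_{\tilde Q}(\theta)=+\infty$. Once finiteness holds, substituting the definitions of $A_{\mathrm{usr}}$ and $E_{\tilde Q}$ into the right-hand side makes the terms $\mathcal L_{\mathrm{usr}}^\star(\theta)$ and $\mathcal L_{\mathrm{usr},\mathcal C}^\star(\theta)$ cancel. The equality then holds trivially.

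Next I will show $E_{\tilde Q}(\theta)\ge 0$. For every $c\in\mathcal C$ we have $\mathcal L_{\mathrm{usr}}(\theta;c)\ge \inf_{c'\in\mathcal C}\mathcal L_{\mathrm{usr}}(\theta;c')=\mathcal L_{\mathrm{usr},\mathcal C}^\star(\theta)$. Integrating this pointwise inequality against $\tilde Q$ gives the claim. The only technical point is measurability of $c\mapsto\mathcal L_{\mathrm{usr}}(\theta;c)$, which is already implicit in the definition of $R_{\mathrm{usr}}(\tilde Q)$ in \eqref{eq:user_meta_risk}.

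The main step is $A_{\mathrm{usr}}(\theta)\ge 0$, that is, $\mathcal L_{\mathrm{usr}}^\star(\theta)\le\mathcal L_{\mathrm{usr},\mathcal C}^\star(\theta)$. This amounts to showing that the unrestricted infimum is taken over a class that contains every context-induced predictor. I will make explicit that the policies $\pi(\cdot\mid\theta)$ in the definition of $\mathcal L_{\mathrm{usr}}^\star$ range over all (possibly stochastic) communication and interpretation policies. For each $c\in\mathcal C$, I then take the degenerate policy $\pi=\delta_c$ together with the fixed Prompt Interpreter and Base Solver. Its induced predictor is $P_\pi(\cdot\mid x)=P_{G,\varphi}(\cdot\mid x,c,S_\theta)$, so its loss is exactly $\mathcal L_{\mathrm{usr}}(\theta;c)$ as in \eqref{eq:user_pop_loss_KL}. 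Hence $\mathcal L_{\mathrm{usr}}^\star(\theta)\le\mathcal L_{\mathrm{usr}}(\theta;c)$ for every $c$, and taking the infimum over $c\in\mathcal C$ yields the claim.

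I expect the main obstacle to be this containment step, because the statement does not pin down the domain of $\pi$. It must also be checked that the input distribution $x\sim P_{\mathrm{task}}(\cdot\mid\theta)$ in $\mathcal L_{\mathrm{usr}}^\star$ matches the distribution $x\sim\mu_\theta$ in $\mathcal L_{\mathrm{usr}}$. I would state both identifications explicitly as part of the proof, taking $\mu_\theta$ to be the $x$-marginal of $P_{\mathrm{task}}(\cdot\mid\theta)$. The rest of the argument is routine.
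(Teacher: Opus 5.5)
Your proposal is correct. The identity is a telescoping of the definitions, $E_{\tilde Q}(\theta)\ge 0$ follows by integrating the pointwise bound $\mathcal L_{\mathrm{usr}}(\theta;c)\ge \mathcal L_{\mathrm{usr},\mathcal C}^\star(\theta)$ against $\tilde Q$, and $A_{\mathrm{usr}}(\theta)\ge 0$ rests on exactly the two conventions you flag: point-mass policies $\delta_c$ with the fixed $(G,\varphi)$, and $\mu_\theta$ equal to the $x$-marginal of $P_{\mathrm{task}}(\cdot\mid\theta)$. The paper states this lemma without proof and leaves both conventions implicit, so your argument is the explicit version of what it takes for granted.

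Two small remarks. Under the literal ``Bayes-optimal'' reading, where $\pi$ ranges over all predictors, you can take $P_\pi=P_\theta$, so $\mathcal L_{\mathrm{usr}}^\star(\theta)=0$. Then $A_{\mathrm{usr}}(\theta)\ge 0$ is just nonnegativity of KL, and the input-law matching is not needed. Also, if you prove the chain $\mathcal L_{\mathrm{usr}}^\star(\theta)\le\mathcal L_{\mathrm{usr},\mathcal C}^\star(\theta)\le\mathbb E_{c\sim\tilde Q}[\mathcal L_{\mathrm{usr}}(\theta;c)]$ first, then finiteness of the last term alone gives finiteness of all three.
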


\begin{proposition}[Gibbs form of the PAC-optimal User posterior]
\label{prop:gibbs_usr}
Let \(P_U\in\Delta(\mathcal C)\) be a reference measure and define
\[
\widehat r(c)
:=
\frac1M
\sum_{i=1}^{M}
\widehat{\mathcal L}_{\mathrm{usr}}^{(N)}
(\theta_i;c\mid T_i^{(N)},S_i).
\]
Consider the regularised empirical objective
\[
J_{\mathrm{usr}}(\tilde Q)
=
\mathbb E_{c\sim\tilde Q}[\widehat r(c)]
+
\frac1{\lambda M}
D_{\rm KL}(\tilde Q\Vert P_U),
\qquad
\tilde Q\ll P_U.
\]
If
\[
Z_\lambda
:=
\int_{\mathcal C}
\exp(-\lambda M \widehat r(c))\,P_U(dc)
<\infty,
\]
then the unique minimiser is
\[
\tilde Q^\star(dc)
=
\frac{
\exp(-\lambda M\widehat r(c))
}{Z_\lambda}
P_U(dc).
\]
Equivalently,
\[
\widehat r(c)
=
\widehat{\mathcal R}_{\mathrm{usr}}^{(M,N)}(\delta_c).
\]
\end{proposition}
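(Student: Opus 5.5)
This is a standard Gibbs variational argument; I plan to prove it through the Donsker--Varadhan identity of Lemma~\ref{donsker-v}, sharpened to an equality case. First I will rewrite the objective. By Fubini, for any $\tilde Q\ll P_U$,
\[
\widehat{\mathcal R}_{\mathrm{usr}}^{(M,N)}(\tilde Q)=\frac1M\sum_{i=1}^M\mathbb E_{c\sim\tilde Q}\big[\widehat{\mathcal L}_{\mathrm{usr}}^{(N)}(\theta_i;c\mid T_i^{(N)},S_i)\big]=\mathbb E_{c\sim\tilde Q}[\widehat r(c)].
\]
This already yields the final ``equivalently'' claim: taking $\tilde Q=\delta_c$ gives $\widehat{\mathcal R}_{\mathrm{usr}}^{(M,N)}(\delta_c)=\widehat r(c)$. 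Multiplying $J_{\mathrm{usr}}$ by $\lambda M>0$ does not change minimisers, so it suffices to minimise
\[
\lambda M\,\mathbb E_{\tilde Q}[\widehat r]+D_{\rm KL}(\tilde Q\Vert P_U).
\]

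Next I will define $\tilde Q^\star$ by the stated density $\exp(-\lambda M\widehat r)/Z_\lambda$ with respect to $P_U$. The assumption $Z_\lambda<\infty$ makes $\tilde Q^\star$ a well-defined probability measure, and $Z_\lambda>0$ holds automatically because the integrand is strictly positive. For any $\tilde Q\ll P_U$ with $\mathbb E_{\tilde Q}[\widehat r]$ well defined, I will expand $\log(d\tilde Q/dP_U)=\log(d\tilde Q/d\tilde Q^\star)-\lambda M\widehat r-\log Z_\lambda$. Integrating against $\tilde Q$ gives the key identity
\[
\lambda M\,\mathbb E_{\tilde Q}[\widehat r]+D_{\rm KL}(\tilde Q\Vert P_U)=D_{\rm KL}(\tilde Q\Vert \tilde Q^\star)-\log Z_\lambda .
\]
Here $\tilde Q\ll\tilde Q^\star$ follows from $\tilde Q\ll P_U$ together with $\tilde Q^\star\sim P_U$, the equivalence holding because the density is positive.

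Gibbs' inequality then says the right-hand side is at least $-\log Z_\lambda$, with equality if and only if $\tilde Q=\tilde Q^\star$. This gives both optimality and uniqueness. It also reproduces Lemma~\ref{donsker-v} with $f=-\lambda M\widehat r$ together with its equality case.

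The main obstacle is measure-theoretic care with the unbounded log-ratio loss. The empirical $\widehat r$ may take negative values, and $\mathbb E_{\tilde Q}[\widehat r]$ could be $-\infty$ or undefined. I will restrict attention to $\tilde Q$ with $D_{\rm KL}(\tilde Q\Vert P_U)<\infty$, since otherwise $J_{\mathrm{usr}}=+\infty$ and such $\tilde Q$ cannot be minimisers. For these, I will justify splitting the integral of the log-density by noting that the negative part of $\lambda M\widehat r$ is $\tilde Q$-integrable. To see this, apply Lemma~\ref{donsker-v} to $f=\lambda M\widehat r^-$ and use $\mathbb E_{P_U}e^{\lambda M\widehat r^-}\le Z_\lambda+1<\infty$. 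With this integrability in hand, the identity above holds in $(-\infty,+\infty]$ and the argument goes through.
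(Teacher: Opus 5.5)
Your proposal is correct and follows the paper's own argument. The paper likewise establishes $D_{\rm KL}(\tilde Q\Vert\tilde Q^\star)=D_{\rm KL}(\tilde Q\Vert P_U)+\lambda M\,\mathbb E_{\tilde Q}[\widehat r]+\log Z_\lambda$ and concludes from the fact that this KL divergence vanishes only at $\tilde Q=\tilde Q^\star$. Your Fubini remark for the ``equivalently'' clause and your integrability bookkeeping for the unbounded $\widehat r$ are extra care that the paper omits.

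One small addition to that bookkeeping. You should also check that $\tilde Q^\star$ itself satisfies $D_{\rm KL}(\tilde Q^\star\Vert P_U)<\infty$, equivalently $\mathbb E_{\tilde Q^\star}[\widehat r^-]<\infty$. Otherwise $J_{\mathrm{usr}}(\tilde Q^\star)$ is an undefined $-\infty+\infty$, and your claim that infinite-KL $\tilde Q$ have $J_{\mathrm{usr}}=+\infty$ also fails in that case. This is automatic when $\widehat r$ is bounded below, but it does not follow from $Z_\lambda<\infty$ alone. The same caveat applies to the paper's statement.
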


Proposition~\ref{prop:gibbs_usr} provides a principled interpretation of
\(\tilde Q\) as an optimisation-induced User posterior, and links the User-side
bilevel programme to regularised empirical meta-risk minimisation.

\begin{proof}
For any \(\tilde Q\ll P_U\),
\[
D_{\rm KL}(\tilde Q\Vert \tilde Q^\star)
=
D_{\rm KL}(\tilde Q\Vert P_U)
+
\lambda M\,\mathbb E_{c\sim\tilde Q}[\widehat r(c)]
+
\log Z_\lambda.
\]
Rearranging,
\[
J_{\mathrm{usr}}(\tilde Q)
=
\frac1{\lambda M}
D_{\rm KL}(\tilde Q\Vert \tilde Q^\star)
-
\frac1{\lambda M}\log Z_\lambda.
\]
The second term is independent of \(\tilde Q\), and the KL divergence is uniquely
minimised at \(\tilde Q=\tilde Q^\star\). Hence \(\tilde Q^\star\) is the unique
minimiser.
\end{proof}

The following result shows that an information bottleneck is induced from the
KL-regularised optimisation objective. The rewriting mechanism \(\pi_R\) can only reduce task-relevant information via data processing.

\begin{lemma}[Penalisation induces an information budget]
\label{lem:A3_derived}
Suppose \(\Pi^\lambda\) minimises
\[
\inf_{\Pi:\Theta\to\Delta(\mathcal C)}
\left\{
R_{\mathrm{usr}}(\Pi)
+
\frac{1}{\lambda}
\mathbb E_{\theta\sim P_{\mathrm{meta}}}
D_{\rm KL}(\Pi(\cdot\mid\theta)\Vert P_U)
\right\}.
\]
Let \(C\mid\Theta=\theta\sim\Pi^\lambda(\cdot\mid\theta)\). Then
\[
I(\Theta;C)
\le
\mathbb E_{\theta}
D_{\rm KL}(\Pi^\lambda(\cdot\mid\theta)\Vert P_U)
=:B(\lambda)<\infty.
\]
Moreover, if \(\Theta\to C\to \hat C\), then
\[
I(\Theta;\hat C)\le I(\Theta;C)\le B(\lambda).
\]
\end{lemma}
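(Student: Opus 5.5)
Lemma~\ref{lem:A3_derived} has two inequalities; I will prove them in order, each from an identity for mutual information followed by a standard information inequality.

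\emph{Step 1: the golden formula.} Write $P_C^\lambda(\mathrm dc):=\int_\Theta \Pi^\lambda(\mathrm dc\mid\theta)\,P_{\mathrm{meta}}(\mathrm d\theta)$ for the marginal of $C$. By \eqref{eq:mutual_info_def},
\[
I(\Theta;C)=\mathbb E_{\theta}\,D_{\rm KL}\big(\Pi^\lambda(\cdot\mid\theta)\,\Vert\,P_C^\lambda\big).
\]
For any fixed reference measure $P_U$ with $\Pi^\lambda(\cdot\mid\theta)\ll P_U$ for a.e.\ $\theta$, the chain rule for log-likelihood ratios gives
\[
\mathbb E_{\theta}\,D_{\rm KL}\big(\Pi^\lambda(\cdot\mid\theta)\Vert P_U\big)
=
\mathbb E_{\theta}\,D_{\rm KL}\big(\Pi^\lambda(\cdot\mid\theta)\Vert P_C^\lambda\big)
+
D_{\rm KL}\big(P_C^\lambda\Vert P_U\big).
\]
To justify this, write $\log\frac{\mathrm d\Pi^\lambda}{\mathrm dP_U}=\log\frac{\mathrm d\Pi^\lambda}{\mathrm dP_C^\lambda}+\log\frac{\mathrm dP_C^\lambda}{\mathrm dP_U}$ and integrate against the joint law $P_{\mathrm{meta}}\otimes\Pi^\lambda$. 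The $C$-marginal of that joint law is $P_C^\lambda$, so the second term integrates to $D_{\rm KL}(P_C^\lambda\Vert P_U)$.

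\emph{Step 2: the first inequality.} Since $D_{\rm KL}(P_C^\lambda\Vert P_U)\ge 0$, the identity in Step 1 gives $I(\Theta;C)\le B(\lambda)$. In words, the marginal $P_C^\lambda$ is the minimiser of $\mathbb E_\theta D_{\rm KL}(\Pi^\lambda(\cdot\mid\theta)\Vert Q)$ over all $Q$, so any fixed prior $P_U$ gives an upper bound.

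\emph{Step 3: finiteness of $B(\lambda)$.} I compare the minimiser against a feasible competitor. Take the constant kernel $\Pi_0(\cdot\mid\theta)\equiv P_U$, which has zero KL penalty. Optimality of $\Pi^\lambda$ then gives
\[
R_{\mathrm{usr}}(\Pi^\lambda)+\tfrac1\lambda B(\lambda)\le R_{\mathrm{usr}}(\Pi_0).
\]
Because $R_{\mathrm{usr}}\ge 0$ (it is an expectation of KL divergences), this yields $B(\lambda)\le \lambda R_{\mathrm{usr}}(\Pi_0)$. This step is the main obstacle, since it needs the standing assumption that $R_{\mathrm{usr}}(\Pi_0)<\infty$. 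Equivalently, it needs the infimum in the lemma to be finite, which is implicit in $\Pi^\lambda$ being a minimiser of a non-degenerate problem. The same finiteness also guarantees that the required absolute continuity $\Pi^\lambda(\cdot\mid\theta)\ll P_U$ holds a.e., so Step 1 is legitimate.

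\emph{Step 4: the second inequality.} Suppose $\Theta\to C\to\hat C$ is a Markov chain, as it is when $\hat C\sim\pi_R(\cdot\mid C,S)$ with rewriting randomness independent of $\Theta$ given $C$. The Data Processing Inequality (Lemma~\ref{lemma:DPI}) then gives $I(\Theta;\hat C)\le I(\Theta;C)$. Combining with Step 2 gives $I(\Theta;\hat C)\le I(\Theta;C)\le B(\lambda)$.

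If $S$ itself depends on $\Theta$, the chain must be stated for $Z=(\hat C,S)$ as in Condition~\ref{cond:preservation}. I will note this, but the lemma as stated only claims the bound under the given Markov hypothesis.
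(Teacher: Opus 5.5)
Your proof is correct and follows the same route as the paper. The paper also uses the decomposition $\mathbb E_\theta D_{\rm KL}(\Pi^\lambda(\cdot\mid\theta)\Vert P_U)=I(\Theta;C)+D_{\rm KL}(P_C\Vert P_U)$, drops the nonnegative second term, gets $B(\lambda)<\infty$ from finiteness of the penalised objective at its minimiser, and then applies the Data Processing Inequality. Your comparison with the constant kernel $\Pi_0\equiv P_U$ only spells out the finiteness assumption the paper leaves implicit; note that finiteness of the infimum already suffices, since $R_{\mathrm{usr}}\ge 0$, so you do not need $R_{\mathrm{usr}}(\Pi_0)<\infty$ (the two conditions are not equivalent).
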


\section{Main Proofs}
\subsection{Proof of Lemma~\ref{lem:confusion_distortion}}

\begin{proof}[Proof of Lemma~\ref{lem:confusion_distortion}]
Fix an effective representation value \(z\). For any task
\(\theta_j\in\Theta_0\), define the distortion
\[
d_j(z)
:=
\mathbb E_x
D_{\rm KL}\!\left(
P_{\theta_j}(\cdot\mid x)
\,\|\, 
Q_z(\cdot\mid x)
\right).
\]
By definition, the decoder selects
\[
\widehat\Theta(z)
\in
\arg\min_{\theta_j\in\Theta_0} d_j(z).
\]
Consider the event \(\{\widehat\Theta(Z)\neq \Theta\}\). On this event, if
\(\Theta=\theta_i\) and \(\widehat\Theta(Z)=\theta_j\) with \(j\neq i\), then
the optimality of the decoder implies
\[
d_j(Z)\le d_i(Z).
\]
Hence
\[
d_i(Z)
\ge
\frac12\big(d_i(Z)+d_j(Z)\big).
\]
By the task-separation assumption, no single effective representation \(Z\)
and predictive kernel \(Q_Z\) can simultaneously approximate both
\(\theta_i\) and \(\theta_j\) below level \(\delta\). Therefore,
\[
\frac12\big(d_i(Z)+d_j(Z)\big)\ge \delta,
\]
and consequently
\[
d_i(Z)\ge \delta
\qquad
\text{on the event } \{\widehat\Theta(Z)\neq \Theta\}.
\]
Equivalently,
\[
D_{\rm KL}\!\left(
P_{\Theta}(\cdot\mid x)
\,\|\, 
Q_Z(\cdot\mid x)
\right)
\ge
\delta\,
\mathbf 1\{\widehat\Theta(Z)\neq \Theta\},
\]
after averaging over \(x\). Taking expectations conditional on
\(\Theta\in\Theta_0\) gives
\[
\mathbb E
\left[
D_{\rm KL}\!\left(
P_{\Theta}(\cdot\mid x)
\,\|\, 
Q_Z(\cdot\mid x)
\right)
\,\middle|\,
\Theta\in\Theta_0
\right]
\ge
\delta\,
\mathbb P\!\left(
\widehat\Theta(Z)\neq \Theta
\,\middle|\,
\Theta\in\Theta_0
\right).
\]
This proves the claim.
\end{proof}

\subsection{Proof of Lemma \ref{lem:A3_derived}}
\begin{proof}[Proof of Lemma \ref{lem:A3_derived}]
Let \(\Theta\sim P_{\mathrm{meta}}\) and \(C\mid \Theta=\theta \sim \tilde Q^\lambda(\cdot\mid \theta)\).
Write the joint law as
\[
P_{\Theta,C}(\mathrm d\theta,\mathrm dc)
=
P_{\mathrm{meta}}(\mathrm d\theta)\,\tilde Q^\lambda(\mathrm dc\mid \theta),
\]
and denote the marginal of \(C\) by
\[
P_C(\mathrm dc)
:=
\int_\Theta P_{\mathrm{meta}}(\mathrm d\theta)\,\tilde Q^\lambda(\mathrm dc\mid \theta).
\]
Recall the standard identity
\begin{equation}
I(\Theta;C)
=
\mathbb E_{\theta\sim P_{\mathrm{meta}}}
\Big[
D_{\rm KL}\!\big(\tilde Q^\lambda(\cdot\mid\theta)\,\|\,P_C\big)
\Big].
\label{eq:mutual_info_as_expected_KL}
\end{equation}
Moreover, for any reference distribution \(P_U\in\Delta(\mathcal C)\), we have the decomposition
\begin{equation}
\mathbb E_{\theta}D_{\rm KL}\!\big(\tilde Q^\lambda(\cdot\mid\theta)\,\|\,P_U\big)
=
I(\Theta;C)
+
D_{\rm KL}(P_C\|P_U),
\label{eq:KL_decomp_info}
\end{equation}
which can be verified by expanding the KL divergences under a common (dominating) measure.
Since \(D_{\rm KL}(P_C\|P_U)\ge 0\), \eqref{eq:KL_decomp_info} implies
\begin{equation}
I(\Theta;C)
\ \le\
\mathbb E_{\theta}
D_{\rm KL}\!\big(\tilde Q^\lambda(\cdot\mid\theta)\,\|\,P_U\big).
\label{eq:info_budget_bound}
\end{equation}
Define
\(
B(\lambda)
:=
\mathbb E_{\theta}
D_{\rm KL}\!\big(\tilde Q^\lambda(\cdot\mid\theta)\,\|\,P_U\big).
\)
Now since the penalised objective is finite at its minimiser;
\(B(\lambda)<\infty\), yielding the claimed information budget \(I(\Theta;C)\le B(\lambda)\). Finally, by construction of the prompt rewriting mechanism, \(\Theta\to C\to \hat C\) forms a Markov chain,
by Lemma~\ref{lemma:DPI}, we find that
\[
I(\Theta;\hat C)\ \le\ I(\Theta;C).
\]
Combining with \eqref{eq:info_budget_bound} yields
\(
I(\Theta;\hat C)\le I(\Theta;C)\le B(\lambda).
\)
\end{proof}

\subsection{Proof of Theorem~\ref{thm:blended_floor_usr}}

\paragraph{Evaluation law.}
Unless otherwise stated, expectations defining
$\mathcal L_{\mathrm{usr}}$ and
$\widehat{\mathcal L}_{\mathrm{usr}}^{(N)}$
are taken with respect to samples

\[
x_i\sim\mu_\theta,
\qquad
y_i\sim P_\theta(\cdot\mid x_i),
\]

so that the empirical User loss is an unbiased estimator of the
User-level KL risk.

\begin{proof}[Proof of Theorem~\ref{thm:blended_floor_usr}]

Our first step is to lower bound \(R_{\mathrm{usr}}(\Pi)\) by expected distortion.
Now, we first note that 
for all \((\theta,c)\),
\[
\mathcal L_{\mathrm{usr}}(\theta;c)
\;\ge\;
\mathbb E_{\hat c\sim\pi_R(\cdot\mid c,S)}\big[d(\theta,\hat c)\big].\label{eqn:A6}
\]
This follows directly from the definition of \(d\) and since the realised system cannot outperform the best achievable predictor
given the same information. Taking the expectation over \(c\sim\tilde Q(\cdot\mid\theta)\) gives, for each \(\theta\),
\[
\mathcal R_{\mathrm{usr}}(\theta;\tilde Q)
=
\mathbb E_{c\sim\tilde Q(\cdot\mid\theta)}[\mathcal L_{\mathrm{usr}}(\theta;c)]
\ \ge\
\mathbb E_{c\sim\tilde Q(\cdot\mid\theta)}\mathbb E_{\hat c\sim\pi_R(\cdot\mid c)}[d(\theta,\hat c)].
\]
We next take the expectation over \(\theta\sim P_{\mathrm{meta}}\) and use the generative structure
\(\Theta\to C\to \hat C\) to obtain
\begin{equation}
R_{\mathrm{usr}}(\tilde Q)
\ \ge\
\mathbb E\big[d(\Theta,\hat C)\big].
\label{eq:R_ge_Ed}
\end{equation}

Next, restrict to a finite packing \(\Theta_0\) and define the index \(U\). To this end,
let \(\Theta_0=\{\theta_1,\dots,\theta_K\}\) be as in Assumption~\ref{ass:A1finite_task_packing}, with \(\alpha=P_{\mathrm{meta}}(\Theta_0)\).
Condition on the event \(\{\Theta\in\Theta_0\}\) and define \(U\in\{1,\dots,K\}\) by \(\Theta=\theta_U\).
Then, by the law of total expectation,
\begin{equation}
\mathbb E[d(\Theta,\hat C)]
\ \ge\
\mathbb P(\Theta\in\Theta_0)\ \mathbb E\big[d(\Theta,\hat C)\,\big|\,\Theta\in\Theta_0\big]
\ =\
\alpha\ \mathbb E\big[d(\theta_U,\hat C)\big].
\label{eq:restrict_to_packing}
\end{equation}

Now, fix \(\hat c\in\hat{\mathcal C}\), and denote the posterior weights
\[
p_i(\hat c)\ :=\ \mathbb P(U=i\mid \hat C=\hat c),\qquad i=1,\dots,K.
\]
Let \(i^\star\in\arg\max_i p_i(\hat c)\) and note that \(1-\max_i p_i(\hat c)=\sum_{j\neq i^\star}p_j(\hat c)\). For any fixed \(\hat c\), let \((G,\varphi)\) be arbitrary. Then
\begin{align}
&\sum_{i=1}^K p_i(\hat c)\,
\mathbb E_x D_{\rm KL}\!\big(P_{\theta_i}(\cdot\mid x)\,\|\,P_{G,\varphi}(\cdot\mid x,\hat c)\big)
\\&\ \ge\
\sum_{j\neq i^\star} p_j(\hat c)\,
\frac{1}{2}\Big(
\mathbb E_x D_{\rm KL}(P_{\theta_{i^\star}}\|P_{G,\varphi}(\cdot\mid x,\hat c))
+
\mathbb E_x D_{\rm KL}(P_{\theta_j}\|P_{G,\varphi}(\cdot\mid x,\hat c))
\Big),
\end{align}
since \(p_{i^\star}(\hat c)\ge p_j(\hat c)\) for all \(j\neq i^\star\).
Taking \(\inf_{(G,\varphi)\in\mathcal H}\) on both sides and using Assumtion~\ref{ass:A2task_separation} yields
\begin{equation}
\sum_{i=1}^K p_i(\hat c)\, d(\theta_i,\hat c)
\ \ge\
\sum_{j\neq i^\star} p_j(\hat c)\, \delta
\ =\
\delta\big(1-\max_i p_i(\hat c)\big).
\label{eq:mixing_lb}
\end{equation}

After taking the expectation of \eqref{eq:mixing_lb} over \(\hat C\), the left-hand side becomes
\[
\mathbb E_{\hat C}\sum_{i=1}^K p_i(\hat C)\, d(\theta_i,\hat C)
=
\mathbb E\big[d(\theta_U,\hat C)\big],
\]
by the tower property, and the right-hand side becomes
\[
\delta\ \mathbb E_{\hat C}\big[1-\max_i p_i(\hat C)\big]
=
\delta\,p_{\mathrm{err}},
\]
where
\[
p_{\mathrm{err}}
:=
\inf_{\hat U(\hat C)}\mathbb P(\hat U\neq U)
=
\mathbb E_{\hat C}\big[1-\max_i p_i(\hat C)\big]
\]
is the Bayes error of classifying \(U\) from \(\hat C\).
Therefore
\begin{equation}
\mathbb E\big[d(\theta_U,\hat C)\big]
\ \ge\
\delta\,p_{\mathrm{err}}.
\label{eq:Ed_ge_delta_perr}
\end{equation}

\medskip
We now use the Fano inequality (Lemma \ref{lemma:fano}) to upper bound $p_{\mathrm{err}}$ under the information constraint. Firstly, note that \(I(U;\hat C)\le I(\Theta;\hat C)\). By Lemma~\ref{lem:A3_derived} we have that, \(I(\Theta;\hat C)\le B\), hence
\(
I(U;\hat C)\le B
\). We now apply
Fano's inequality for \(U\in\{1,\dots,K\}\) which gives
\begin{equation}
p_{\mathrm{err}}
\ \ge\
1-\frac{I(U;\hat C)+\log 2}{\log K}
\ \ge\
1-\frac{B+\log 2}{\log K}.
\label{eq:Fano_usr}
\end{equation}

To obtain the result we now substitute \eqref{eq:Fano_usr} into \eqref{eq:Ed_ge_delta_perr}, then into \eqref{eq:restrict_to_packing}, then into
\eqref{eq:R_ge_Ed}, to obtain
\[
R_{\mathrm{usr}}(\tilde Q)
\ \ge\
\mathbb E[d(\Theta,\hat C)]
\ \ge\
\alpha\,\mathbb E[d(\theta_U,\hat C)]
\ \ge\
\alpha\,\delta\left(1-\frac{B+\log 2}{\log K}\right).
\]
Since \(R_{\mathrm{usr}}(\tilde Q)\ge 0\) by definition of KL, we may write the bound with \((\cdot)_+\) which produces the required result.
\end{proof}
\begin{corollary}[Irreducible User-level error]
\label{cor:user_irreducible}
If the communication channel is not task-sufficient, then there exists $Y>0$ such that for any admissible User strategy $\tilde Q$,
\begin{equation}
\liminf_{N\to\infty}
\mathcal R_{\mathrm{usr}}(\tilde Q)
\;\ge\;
Y.
\label{eq:user_irreducible_floor}
\end{equation}
\end{corollary}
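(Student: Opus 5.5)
The corollary will follow from Theorem~\ref{thm:blended_floor_usr}, once ``not task-sufficient'' is turned into the quantitative hypotheses that theorem needs. I will read ``task-sufficient'' as the joint validity of Conditions~\ref{cond:task_identifiability} and~\ref{cond:preservation}, i.e.\ $I(\Theta;Z)=H(\Theta)$ for the effective representation $Z$. Its failure, uniformly over the admissible class $\mathcal A(B)$, should then yield a finite packing $\Theta_0=\{\theta_1,\dots,\theta_K\}$ as in Assumption~\ref{ass:A1finite_task_packing}. This packing must have mass $\alpha>0$, satisfy the separation of Assumption~\ref{ass:A2task_separation} with some $\delta>0$, and obey $B<\log K-\log 2$.

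Concretely, the argument has three steps. First, I invoke Lemma~\ref{lem:A3_derived}: for any admissible (KL-regularised) User strategy $\tilde Q$, it gives $I(\Theta;C)\le B$, and hence, by the data-processing inequality (Lemma~\ref{lemma:DPI}) along $\Theta\to C\to\hat C$, also $I(\Theta;Z)\le B$, uniformly in $\tilde Q$. Second, I need a packing with $K$ large enough that $\log K>B+\log 2$, while $\alpha$ and $\delta$ stay bounded away from zero. Third, I apply Theorem~\ref{thm:blended_floor_usr} to this packing. It gives, for every admissible $\tilde Q$,
\[
\liminf_{N\to\infty}\mathcal R_{\mathrm{usr}}(\tilde Q)\ \ge\ \alpha\delta\Big[1-\tfrac{B+\log 2}{\log K}\Big]_+=:Y>0 .
\]
Since $\alpha,\delta,B,K$ do not depend on $\tilde Q$, the same $Y$ works for all admissible strategies. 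The bound also holds in the $\liminf$ over $N$, because none of these constants involve $N$.

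The main obstacle is the second step. Failure of task sufficiency is only the qualitative statement $I(\Theta;Z)<H(\Theta)$, and that alone does not give $B<\log K-\log 2$: the Fano bound used in Theorem~\ref{thm:blended_floor_usr} is vacuous for $K=2$ and weak for small gaps. My first attempt is to interpret ``not task-sufficient'' as the existence of a separated packing whose entropy $\log K$ exceeds the channel capacity $B=I(\Theta;C)_{\max}$ plus $\log 2$, which is exactly the regime described after Theorem~\ref{thm:blended_floor_usr}. If only the qualitative gap is available, I would drop Fano and argue directly from the Bayes error instead. If $\Theta$ is not a.s.\ a function of $Z$ on $\Theta_0$, then $p_{\mathrm{err}}=\mathbb E[1-\max_i p_i(Z)]>0$. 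Combining this with \eqref{eq:Ed_ge_delta_perr} and \eqref{eq:restrict_to_packing} gives $Y=\alpha\delta\,p_{\mathrm{err}}$. To keep $Y$ uniform over admissible $\tilde Q$, I would additionally need $\inf_{\tilde Q}p_{\mathrm{err}}>0$, and I would obtain this from compactness of the budget-constrained class $\mathcal A(B)$ together with lower semicontinuity of the Bayes error.
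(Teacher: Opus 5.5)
Your main line is the paper's: the corollary is stated without proof right after the proof of Theorem~\ref{thm:blended_floor_usr}. It is simply that theorem in its positive regime, $Y=Y_{\mathrm{expr}}$, with $\alpha,\delta,B,K$ independent of $\tilde Q$ and $N$.

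You are right that ``not task-sufficient'' must be read quantitatively. It must also build in predictive separation. The packing of Assumption~\ref{ass:A2task_separation} cannot be extracted from $I(\Theta;Z)<H(\Theta)$: if the only aliased pair has $P_{\theta_1}=P_{\theta_2}$, identifiability fails yet the User risk can be zero.

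Step~1 is better taken as a hypothesis than derived. Theorem~\ref{thm:blended_floor_usr} needs $I(\Theta;Z\mid\Theta\in\Theta_0)\le B$ for every $N$, whereas Lemma~\ref{lem:A3_derived} bounds $I(\Theta;C)$ unconditionally. Two things go wrong in passing from one to the other.
\begin{itemize}
\item Moving to $\hat C$ uses $\Theta\to C\to\hat C$, which fails when $\pi_R^\varphi$ reads $S_\theta^{(N)}$. Then $I(\Theta;Z)$ can climb to $H(\Theta)$ as $N\to\infty$, which is exactly what your remark that no constant involves $N$ must exclude.
\item Conditioning on the packing costs a factor $1/\alpha$: since $\mathbf{1}\{\Theta\in\Theta_0\}$ is a function of $\Theta$, you only get $I(\Theta;Z\mid\Theta\in\Theta_0)\le I(\Theta;Z)/\alpha$.
\end{itemize}
The paper's proof of Theorem~\ref{thm:blended_floor_usr} makes the same two moves (the Markov chain and $I(U;\hat C)\le I(\Theta;\hat C)$), so these gaps are inherited rather than new.

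The fallback is where your proposal actually breaks. The Bayes error is upper, not lower, semicontinuous under weak convergence. For example, let $Z\mid U=1$ and $Z\mid U=2$ be uniform on the odd and even cells of a $2n$-cell grid of $[0,1]$. Then $p_{\mathrm{err}}=0$ for every $n$, yet $p_{\mathrm{err}}=1/2$ in the weak limit. You also have no compactness of $\mathcal A(B)$. And with only a pointwise gap $I_\Pi<H$, channels can approach full revelation, so no uniform $Y$ need exist.

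None of this is needed, because your $K=2$ worry is settled by the sharp form of Lemma~\ref{lemma:fano}. Put $\phi_K(p):=h(p)+p\log(K-1)$, which increases from $0$ to $\log K$ on $[0,1-1/K]$, an interval that always contains $p_{\mathrm{err}}$. Then $\log K-B\le H(U\mid Z)\le\phi_K(p_{\mathrm{err}})$ gives $p_{\mathrm{err}}\ge\phi_K^{-1}(\log K-B)>0$, uniformly over the interface, whenever $B<\log K$. Hence $Y=\alpha\delta\,\phi_K^{-1}(\log K-B)$ works, and the threshold $B<\log K$ improves on the paper's $B<\log K-\log 2$.
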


\subsection{Proof of Theorem~\ref{thm:misalignment_flr_meta}}
\begin{proof}[Proof of Theorem \ref{thm:misalignment_flr_meta}]
By Lemma~\ref{lem:pointwise_obj_lb}, for any \((G,\varphi)\in\mathcal H\),
\[
D_{\rm KL}\!\big(P_\theta(\cdot\mid x)\,\|\,P_{G,\varphi}(\cdot\mid x,c)\big)
\ \ge\
\Delta_{\mathrm{obj}}(\theta,x,c).
\]
Take expectation over \(x\sim P_{\mathrm{task}}(\cdot\mid\theta)\), then expectation over \(c\sim\tilde Q\), and finally
\(\theta\sim P_{\mathrm{meta}}\), to obtain \eqref{eq:Rusr_ge_objgap}. For the second statement, \(\Delta_{\mathrm{obj}}(\Theta,X,C)\ge 0\) almost surely, hence
\[
\mathbb E[\Delta_{\mathrm{obj}}(\Theta,X,C)]
\ \ge\
\varepsilon\,
\mathbb P\big(\Delta_{\mathrm{obj}}(\Theta,X,C)\ge \varepsilon\big)
\ \ge\ \varepsilon\,\beta,
\]
which combined with \eqref{eq:Rusr_ge_objgap} yields \eqref{eq:Yobj_positive}.
\end{proof}

\subsection{Proof of Theorem \ref{thm:reduction_rezazadeh}}
\begin{proof}[Proof of Theorem \ref{thm:reduction_rezazadeh}]
Since \(I(\Theta;Z)=H(\Theta)\), we have
\[
H(\Theta\mid Z)=H(\Theta)-I(\Theta;Z)=0.
\]
Hence \(\Theta\) is identifiable from \(Z\) almost surely. Therefore, there exists a measurable decoder
\[
g:\mathcal Z\to\Theta
\]
such that
\[
g(Z)=\Theta
\qquad\text{almost surely}.
\]
Therefore,  the observable context contains all task-relevant information. In particular, conditioning on \(Z\) is equivalent to conditioning on the latent task \(\Theta\), up to null sets.

It follows that the Prompt Interpreter need not perform any lossy inference over tasks. The System can condition its Base Solver on the recovered task \(g(Z)\), so that for each \(\theta\),
\[
P_{G,\varphi}(\cdot\mid x,Z)
=
P_G(\cdot\mid x,\theta)
\qquad
\text{whenever } g(Z)=\theta .
\]
Therefore,  there is no representational or communication-induced obstruction to task recovery. Consequently, the expressivity floor vanishes:
\[
Y_{\mathrm{expr}}=0.
\]

Next, by objective alignment,
\[
P^{\mathrm{usr}}_{\theta}(\cdot\mid x)
=
P^{\mathrm{sys}}_{\theta}(\cdot\mid x)
\qquad
\text{for all }(\theta,x).
\]
Therefore the User-ideal predictive distribution lies inside the System-admissible predictive class. Hence the pointwise objective distortion satisfies
\[
\Delta_{\mathrm{obj}}(\theta,x,z)
:=
\inf_{Q\in\mathcal P_{\mathrm{sys}}(x,z)}
D_{\mathrm{KL}}
\!\left(
P^{\mathrm{usr}}_{\theta}(\cdot\mid x)
\,\middle\|\,
Q
\right)
=0,
\]
because \(Q=P^{\mathrm{sys}}_{\theta}(\cdot\mid x)=P^{\mathrm{usr}}_{\theta}(\cdot\mid x)\) is admissible. Taking expectations gives
\[
Y_{\mathrm{obj}}
=
\mathbb E_{\theta,z,x}
\big[
\Delta_{\mathrm{obj}}(\theta,x,z)
\big]
=0.
\]

It remains only to identify the resulting learning problem. Since \(Z\) fully reveals
\(\Theta\), the User no longer faces a communication-induced task-aliasing problem.
Since the objectives are aligned, the User-ideal predictive distribution lies in the
System-admissible class. Hence
\[
Y_{\mathrm{expr}}=Y_{\mathrm{obj}}=0.
\]
Consequently, Theorems~\ref{lem:task_pb}--\ref{cor:user_negative} reduce to
estimation-only PAC--Bayes bounds, with no irreducible communication or
objective-misalignment floor.

The resulting PAC--Bayes bound contains only the standard environment-level and task-level estimation terms:
\[
R_{\mathrm{usr}}(\tilde Q)
\le
\widehat R_{\mathrm{usr}}^{(M,N)}(\tilde Q)
+
F_{\mathrm{Env}}^{\mathrm{usr}}(M)
+
F_{\mathrm{Task}}^{\mathrm{usr}}(N).
\]
Since \(Y_{\mathrm{expr}}=Y_{\mathrm{obj}}=0\), there is no additional asymptotic floor. Therefore,  the GPAI bounds reduce exactly to the standard PAC--Bayes meta-learning bounds.
\end{proof}

\subsection{Proof of  Theorem \ref{lem:task_pb}}
In advance of the proof of the theorem, we state the following assumptions:

\begin{assumption}[Task-level exponential moment]
\label{ass:task_exp_moment_usr}
There exists a finite constant \(C_{\rm Task}>0\) such that, for every task
\(\theta\) and fixed support information \(S_\theta\),
\[
\mathbb E_{T_\theta^{(N)}}
\mathbb E_{c\sim P_U}
\exp\!\left(
N
F_{\mathrm{Task}}^{\mathrm{usr}}
\left(
\mathcal L_{\mathrm{usr}}(\theta;c\mid S_\theta),
\widehat{\mathcal L}_{\mathrm{usr}}^{(N)}
(\theta;c\mid T_\theta^{(N)},S_\theta)
\right)
\right)
\le C_{\rm Task}.
\]
\end{assumption}
\begin{assumption}[Affine transformation for User comparison functions]
\label{ass:usr_affine_transform}
The functions \(F_{\mathrm{Task}}^{\mathrm{usr}}\) and
\(F_{\mathrm{Env}}^{\mathrm{usr}}\) are convex. Moreover, there exist
constants \(k_t^{\mathrm{usr}},k_e^{\mathrm{usr}}>0\) and nondecreasing
functions \(G_{\mathrm{Task}}^{\mathrm{usr}}\) and \(G_{\mathrm{Env}}^{\mathrm{usr}}\)
such that
\[
F_{\mathrm{Task}}^{\mathrm{usr}}(a,b)\le c_{\mathrm{tsk}}
\quad\Longrightarrow\quad
a\le k_t^{\mathrm{usr}} b+
G_{\mathrm{Task}}^{\mathrm{usr}}(c_{\mathrm{tsk}}),
\]
and
\[
F_{\mathrm{Env}}^{\mathrm{usr}}(a,b)\le c_{\mathrm{env}}
\quad\Longrightarrow\quad
a\le k_e^{\mathrm{usr}} b+
G_{\mathrm{Env}}^{\mathrm{usr}}(c_{\mathrm{env}}).
\]
\end{assumption}

\begin{assumption}[Environment-level sub-Gaussian User loss]
\label{ass:env_subgaussian_usr}
There exists \(\sigma_e>0\) such that, for every \(c\in\mathcal C\), the random variable
\[
\mathcal L_{\mathrm{usr}}(\Theta;c),
\qquad \Theta\sim P_{\mathrm{meta}},
\]
is \(\sigma_e^2\)-sub-Gaussian around its mean. Equivalently, for all \(t\in\mathbb R\),
\[
\mathbb E_{\Theta\sim P_{\mathrm{meta}}}
\exp\!\left(
t\big(
\mathcal L_{\mathrm{usr}}(\Theta;c)
-
\mathcal R_{\mathrm{usr}}(c)
\big)
\right)
\le
\exp\!\left(
\frac{\sigma_e^2t^2}{2}
\right),
\]
where
\[
\mathcal R_{\mathrm{usr}}(c)
:=
\mathbb E_{\Theta\sim P_{\mathrm{meta}}}
\big[
\mathcal L_{\mathrm{usr}}(\Theta;c)
\big].
\]
\end{assumption}

\begin{lemma}[Environment exponential moment from sub-Gaussianity]
\label{lem:env_exp_from_subgaussian_usr}
Suppose Assumption~\ref{ass:env_subgaussian_usr} holds. Let
\[
\widehat{\mathcal R}_{\mathrm{usr}}^{(M)}(c)
:=
\frac1M\sum_{i=1}^{M}
\mathcal L_{\mathrm{usr}}(\theta_i;c),
\qquad
\theta_i\stackrel{\mathrm{i.i.d.}}{\sim}P_{\mathrm{meta}}.
\]
For
\[
F_{\mathrm{Env}}^{\mathrm{usr}}(a,b)
=
\lambda_e(a-b)^2,
\qquad
0<\lambda_e<\frac{1}{2\sigma_e^2},
\]
we have
\[
\mathbb E_{\theta_1,\ldots,\theta_M\sim P_{\mathrm{meta}}}
\mathbb E_{c\sim P_U}
\exp\!\left(
M
F_{\mathrm{Env}}^{\mathrm{usr}}
\left(
\mathcal R_{\mathrm{usr}}(c),
\widehat{\mathcal R}_{\mathrm{usr}}^{(M)}(c)
\right)
\right)
\le
C_{\mathrm{Env}},
\]
where
\[
C_{\mathrm{Env}}
=
\frac{1}{\sqrt{1-2\lambda_e\sigma_e^2}}.
\]
\end{lemma}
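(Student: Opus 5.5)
The plan is to bound the moment generating function of the squared deviation pointwise in \(c\), and then exchange the order of integration. Since \(P_U\) is a fixed prior that does not depend on the sample \(\theta_{1:M}\), Tonelli's theorem (the integrand is nonnegative) lets us write
\[
\mathbb E_{\theta_{1:M}}\mathbb E_{c\sim P_U}\exp\!\big(M\lambda_e(\mathcal R_{\mathrm{usr}}(c)-\widehat{\mathcal R}^{(M)}_{\mathrm{usr}}(c))^2\big)
=
\mathbb E_{c\sim P_U}\mathbb E_{\theta_{1:M}}\exp\!\big(M\lambda_e\Delta(c)^2\big),
\]
where \(\Delta(c):=\mathcal R_{\mathrm{usr}}(c)-\widehat{\mathcal R}^{(M)}_{\mathrm{usr}}(c)\). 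It then suffices to show that for every fixed \(c\) the inner expectation is at most \(C_{\mathrm{Env}}\), uniformly in \(c\).

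Fix \(c\). By Assumption~\ref{ass:env_subgaussian_usr}, each centred term \(\mathcal R_{\mathrm{usr}}(c)-\mathcal L_{\mathrm{usr}}(\theta_i;c)\) is \(\sigma_e^2\)-sub-Gaussian, and the \(\theta_i\) are i.i.d. Hence the average satisfies
\[
\mathbb E\, e^{t\Delta(c)}\le \exp\!\big(\sigma_e^2t^2/(2M)\big),
\]
so \(\sqrt M\,\Delta(c)\) is \(\sigma_e^2\)-sub-Gaussian. To convert this into a bound on \(\mathbb E\exp(\lambda_e (\sqrt M\Delta)^2)\), I will use the Gaussian linearisation trick rather than the tail-integration route of Lemma~\ref{lemma:rez-exp}, since the latter gives the weaker constant \(1/(1-2\lambda\sigma^2)\) instead of the square root.

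Concretely, let \(W:=\sqrt M\Delta(c)\) and let \(\xi\sim\mathcal N(0,1)\) be independent of \(W\). The identity \(e^{\lambda_e W^2}=\mathbb E_\xi\, e^{\sqrt{2\lambda_e}\,\xi W}\) and Fubini give
\[
\mathbb E\, e^{\lambda_e W^2}
=\mathbb E_\xi\mathbb E_W e^{\sqrt{2\lambda_e}\xi W}
\le \mathbb E_\xi e^{\lambda_e\sigma_e^2\xi^2}
=\frac{1}{\sqrt{1-2\lambda_e\sigma_e^2}}.
\]
The last equality uses the Gaussian integral, which is finite exactly when \(\lambda_e<1/(2\sigma_e^2)\). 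This is the claimed \(C_{\mathrm{Env}}\). Integrating over \(c\sim P_U\) preserves the bound.

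The main obstacle is the interchange steps. Fubini/Tonelli is applied twice: once over \(c\) and \(\theta_{1:M}\), and once over \(\xi\) and \(W\). Both are justified because every integrand is nonnegative. Measurability of \(c\mapsto\mathcal L_{\mathrm{usr}}(\theta;c)\) is assumed implicitly. A second point needs checking: the sub-Gaussian constant must be uniform in \(c\). Assumption~\ref{ass:env_subgaussian_usr} provides this, with a single \(\sigma_e\) for all \(c\).
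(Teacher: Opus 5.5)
Your proposal is correct and follows essentially the same route as the paper: fix $c$, observe that the centred average is $\sigma_e^2/M$-sub-Gaussian (equivalently, $\sqrt{M}$ times it is $\sigma_e^2$-sub-Gaussian), apply the quadratic exponential-moment bound $1/\sqrt{1-2\lambda_e\sigma_e^2}$, and then average over $c\sim P_U$. The only difference is that the paper cites this quadratic bound as standard, whereas you prove it via the Gaussian linearisation identity and also make the Tonelli interchange explicit.
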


\begin{proof}
Fix \(c\in\mathcal C\) and define
\[
X_i(c)
:=
\mathcal L_{\mathrm{usr}}(\theta_i;c),
\qquad
\mu(c)
:=
\mathcal R_{\mathrm{usr}}(c).
\]
By Assumption~\ref{ass:env_subgaussian_usr}, \(X_i(c)-\mu(c)\) is
\(\sigma_e^2\)-sub-Gaussian. Hence
\[
\mu(c)-\widehat{\mathcal R}_{\mathrm{usr}}^{(M)}(c)
=
-\frac1M\sum_{i=1}^{M}(X_i(c)-\mu(c))
\]
is \(\sigma_e^2/M\)-sub-Gaussian. Therefore, using the standard quadratic
exponential moment bound for a centred sub-Gaussian random variable,
\[
\mathbb E_{\theta_1,\ldots,\theta_M}
\exp\!\left(
\lambda_e M
\left(
\mu(c)-\widehat{\mathcal R}_{\mathrm{usr}}^{(M)}(c)
\right)^2
\right)
\le
\frac{1}{\sqrt{1-2\lambda_e\sigma_e^2}},
\]
for \(0<\lambda_e<1/(2\sigma_e^2)\). Averaging both sides over
\(c\sim P_U\) gives the result.
\end{proof}
\begin{proof}[Proof of Theorem \ref{lem:task_pb}]

Fix a task \(\theta\).

By definition,

\[
R_{\mathrm{usr}}(\theta;\tilde Q)
=
\mathbb E_{c\sim\tilde Q}
\Big[
\mathcal L_{\mathrm{usr}}(\theta;c)
\Big]
\]

and

\[
\widehat R_{\mathrm{usr}}^{(N)}
(\theta;\tilde Q\mid S_\theta^{(N)})
=
\mathbb E_{c\sim\tilde Q}
\Big[
\mathcal L_{\rm usr}(\theta;c),
\widehat{\mathcal L}_{\rm usr}^{(N)}(\theta;c\mid T_\theta^{(N)},S_\theta)
\Big].
\]

Since \(F_{\rm Task}^{\rm usr}\) is convex,

\[
F_{\rm Task}^{\rm usr}
\Big(
R_{\rm usr}(\theta;\tilde Q),
\widehat R_{\rm usr}^{(N)}
(\theta;\tilde Q\mid S_\theta^{(N)})
\Big)
\]

\[
\le
\mathbb E_{c\sim\tilde Q}
F_{\rm Task}^{\rm usr}
\Big(
\mathcal L_{\rm usr}(\theta;c),
\widehat{\mathcal L}_{\rm usr}^{(N)}(\theta;c\mid T_\theta^{(N)},S_\theta)
\Big).
\]

Applying the Donsker--Varadhan variational inequality (Lemma~\ref{donsker-v}) with

\[
Q=\tilde Q,
\qquad
P=P_U,
\]

and

\[
f(c)
=
N
F_{\rm Task}^{\rm usr}
\Big(
\mathcal L_{\rm usr}(\theta;c),
\widehat{\mathcal L}_{\rm usr}^{(N)}(\theta;c\mid T_\theta^{(N)},S_\theta)
\Big),
\]

gives

\[
F_{\rm Task}^{\rm usr}
\Big(
R_{\rm usr}(\theta;\tilde Q),
\widehat R_{\rm usr}^{(N)}
(\theta;\tilde Q\mid S_\theta^{(N)})
\Big)
\]

\[
\le
\frac1N
\Bigg(
D_{\rm KL}(\tilde Q\Vert P_U)
+
\log Z_\theta
\Bigg),
\]

where

\[
Z_\theta
=
\mathbb E_{c\sim P_U}
\exp
\Big(
N
F_{\rm Task}^{\rm usr}
(
\mathcal L_{\rm usr}(\theta;c),
\widehat{\mathcal L}_{\rm usr}^{(N)}(\theta;c\mid T_\theta^{(N)},S_\theta)
)
\Big).
\]

Using Assumption~\ref{ass:task_exp_moment_usr} and Markov's inequality,

\[
\mathbb P
\left(
Z_\theta
\le
\frac{C_{\rm Task}}{\delta}
\right)
\ge
1-\delta.
\]

Hence with probability at least \(1-\delta\),
\begin{align}
F_{\rm Task}^{\rm usr}
\Big(
R_{\rm usr}(\theta;\tilde Q),
\widehat R_{\rm usr}^{(N)}
(\theta;\tilde Q\mid S_\theta^{(N)})
\Big) \le
\frac{
D_{\rm KL}(\tilde Q\Vert P_U)
+
\log(C_{\rm Task}/\delta)
}
{N}.\label{eq:risk_bound_non_sp}
\end{align}

Applying Assumption~\ref{ass:usr_affine_transform}
yields

\[
R_{\rm usr}(\theta;\tilde Q)
\le
k_t^{\rm usr}
\widehat R_{\rm usr}^{(N)}
(\theta;\tilde Q\mid S_\theta^{(N)})
+
G_{\rm Task}^{\rm usr}(B_t),
\]

where

\[
B_t
=
\frac{
D_{\rm KL}(\tilde Q\Vert P_U)
+
\log(C_{\rm Task}/\delta)
}
{N}.
\]

Moreover, to obtain the explicit bound we specialise the  task-level comparison function to
\[
F_{\mathrm{Task}}^{\mathrm{usr}}(a,b)=\lambda_t(a-b)^2,
\]
then \(k_t^{\mathrm{usr}}=1\) and
\(
G_{\mathrm{Task}}^{\mathrm{usr}}(u)=\sqrt{\frac{u}{\lambda_t}}.
\)
Therefore,
\begin{align}
R_{\mathrm{usr}}(\theta;\tilde Q)
\le
\widehat R_{\mathrm{usr}}^{(N)}
(\theta;\tilde Q\mid T_\theta^{(N)},S_\theta)
+
\sqrt{
\frac{
D_{\rm KL}(\tilde Q\Vert P_U)
+
\log(C_{\rm Task}/\delta)
}{
\lambda_t N
}
}.\label{thrm4_specific_bound}
\end{align}
\end{proof}

\subsection{Proof of  Theorem \ref{thm:user_meta_pb}}





\begin{proof}[Proof of Theorem \ref{thm:user_meta_pb}]
We prove the result in two steps. The first step controls the gap between the
task-level population User losses and their empirical estimators. The second step
controls the gap between the sampled-task population average and the full
meta-population User risk.

\paragraph{Step 1: task-level generalisation.}
Fix a sampled task \(\theta_i\) and its dataset \(S_i=S_{\theta_i}^{(N)}\).
For each context \(c\in\mathcal C\), define
\[
L_i(c):=
\mathcal L_{\mathrm{usr}}(\theta_i;c),
\qquad
\widehat L_i(c):=
\widehat{\mathcal L}_{\mathrm{usr}}^{(N)}
(\theta_i;c\mid S_i).
\]
By convexity of \(F_{\mathrm{Task}}^{\mathrm{usr}}\), Jensen's inequality gives
\[
F_{\mathrm{Task}}^{\mathrm{usr}}
\left(
\mathbb E_{c\sim\tilde Q}L_i(c),
\mathbb E_{c\sim\tilde Q}\widehat L_i(c)
\right)
\le
\mathbb E_{c\sim\tilde Q}
F_{\mathrm{Task}}^{\mathrm{usr}}
\left(
L_i(c),
\widehat L_i(c)
\right).
\]
Recall by Lemma~\ref{donsker-v} (Donsker--Varadhan change-of-measure inequality): for probability
measures \(Q\ll P\) and any measurable function \(f\),
\[
\mathbb E_{h\sim Q}[f(h)]
\le
D_{\mathrm{KL}}(Q\Vert P)
+
\log
\mathbb E_{h\sim P}
\left[
\exp(f(h))
\right].
\]
Apply the inequality with
\[
Q=\tilde Q,\qquad P=P_U,
\]
and
\[
f(c)
=
N F_{\mathrm{Task}}^{\mathrm{usr}}
\left(
L_i(c),
\widehat L_i(c)
\right),
\]
yields
\[
\mathbb E_{c\sim\tilde Q}
F_{\mathrm{Task}}^{\mathrm{usr}}
\left(
L_i(c),
\widehat L_i(c)
\right)
\le
\frac1N
\left[
D_{\mathrm{KL}}(\tilde Q\Vert P_U)
+
\log
\mathbb E_{c\sim P_U}
\operatorname{e}^{
\left(
N F_{\mathrm{Task}}^{\mathrm{usr}}
\left(
L_i(c),
\widehat L_i(c)
\right)
\right)}
\right].
\]
Combining the two gives
\[
F_{\mathrm{Task}}^{\mathrm{usr}}
\left(
\mathcal R_{\mathrm{usr}}(\theta_i;\tilde Q),
\widehat{\mathcal R}_{\mathrm{usr}}^{(N)}
(\theta_i;\tilde Q\mid S_i)
\right)
\le
\frac1N
\left[
D_{\mathrm{KL}}(\tilde Q\Vert P_U)
+
\log Z_i
\right],
\]
where
\[
Z_i
:=
\mathbb E_{c\sim P_U}
\operatorname{e}^{
\left(
N F_{\mathrm{Task}}^{\mathrm{usr}}
\left(
\mathcal L_{\mathrm{usr}}(\theta_i;c),
\widehat{\mathcal L}_{\mathrm{usr}}^{(N)}
(\theta_i;c\mid S_i)
\right)
\right)}.
\]

By Assumption~\ref{ass:task_exp_moment_usr} and the Markov inequality,
\[
\mathbb P
\left(
Z_i\le \frac{2M}{\delta}
\right)
\ge
1-\frac{\delta}{2M}.
\]
Taking a union bound over \(i=1,\ldots,M\), with probability at least
\(1-\delta/2\), for all sampled tasks simultaneously,
\[
\log Z_i\le \log\frac{2M}{\delta}.
\]
On this event,
\[
F_{\mathrm{Task}}^{\mathrm{usr}}
\left(
\mathcal R_{\mathrm{usr}}(\theta_i;\tilde Q),
\widehat{\mathcal R}_{\mathrm{usr}}^{(N)}
(\theta_i;\tilde Q\mid S_i)
\right)
\le
\frac{
D_{\mathrm{KL}}(\tilde Q\Vert P_U)
+
\log\frac{2M}{\delta}
}{N}.
\]
Using the calibration property of
\(F_{\mathrm{Task}}^{\mathrm{usr}}\), we obtain
\begin{align}
\mathcal R_{\mathrm{usr}}(\theta_i;\tilde Q)
\le
\widehat{\mathcal R}_{\mathrm{usr}}^{(N)}
(\theta_i;\tilde Q\mid S_i)
+
F_{\mathrm{Task}}^{\mathrm{usr}}
\left(
\frac{
D_{\mathrm{KL}}(\tilde Q\Vert P_U)
+
\log\frac{2M}{\delta}
}{N}
\right).\label{risk_inequality_generic_interm}
\end{align}
Specifically, we have that
\begin{align}
\mathcal R_{\mathrm{usr}}(\theta_i;\tilde Q)
\leq k^{\rm usr}_e
\widehat{\mathcal R}_{\mathrm{usr}}^{(N)}
(\theta_i;\tilde Q\mid S_i) +G_{\mathrm{env}}^{\rm usr}(B^{\rm usr}_e),
\end{align}
where
\begin{align*}
    B^{\rm usr}_t:= \frac{1}{\theta_{\rm usr, env}}D_{\mathrm{KL}}(\tilde Q\Vert P_U)+\operatorname{log}\left(\frac{\mathbb E_{c\sim P_U}\left[\Gamma_e^{\rm usr\frac{1}{\theta_{\rm usr, env}}}\right]}{\delta}\right), \quad \Gamma_e^{{\rm us}}:=\operatorname{e}^{\left(
N F_{\mathrm{Task}}^{\mathrm{usr}}
\left(
L_i(c),
\widehat L_i(c)
\right)
\right)}.
\end{align*}
Averaging \eqref{risk_inequality_generic_interm} over \(i=1,\ldots,M\) gives
\[
\frac1M\sum_{i=1}^{M}
\mathcal R_{\mathrm{usr}}(\theta_i;\tilde Q)
\le
\widehat{\mathcal R}_{\mathrm{usr}}^{(M,N)}(\tilde Q)
+
F_{\mathrm{Task}}^{\mathrm{usr}}
\left(
\frac{
D_{\mathrm{KL}}(\tilde Q\Vert P_U)
+
\log\frac{2M}{\delta}
}{N}
\right),
\]
where we used the definition
\[
\widehat{\mathcal R}_{\mathrm{usr}}^{(M,N)}(\tilde Q)
=
\frac1M\sum_{i=1}^{M}
\widehat{\mathcal R}_{\mathrm{usr}}^{(N)}
(\theta_i;\tilde Q\mid S_i).
\]
Let us now define the sampled-task population average as
\[
\bar{\mathcal R}_{\mathrm{usr}}^{(M)}(\tilde Q)
:=
\frac1M\sum_{i=1}^{M}
\mathcal R_{\mathrm{usr}}(\theta_i;\tilde Q).
\]
Therefore, we now obtain with at least probability \(1-\delta/2\) that
\[
\bar{\mathcal R}_{\mathrm{usr}}^{(M)}(\tilde Q)
\le
\widehat{\mathcal R}_{\mathrm{usr}}^{(M,N)}(\tilde Q)
+
F_{\mathrm{Task}}^{\mathrm{usr}}
\left(
\frac{
D_{\mathrm{KL}}(\tilde Q\Vert P_U)
+
\log\frac{2M}{\delta}
}{N}
\right).
\]

\paragraph{Step 2: environment-level generalisation.}
For each fixed context \(c\), define the environment population and empirical
task-average losses
\[
\mathcal R_{\mathrm{usr}}(c)
:=
\mathbb E_{\theta\sim P_{\mathrm{meta}}}
\mathcal L_{\mathrm{usr}}(\theta;c),
\qquad
\widehat{\mathcal R}_{\mathrm{usr}}^{(M)}(c)
:=
\frac1M\sum_{i=1}^{M}
\mathcal L_{\mathrm{usr}}(\theta_i;c).
\]
By the definitions of \(R_{\mathrm{usr}}(\tilde Q)\) and
\(\bar{\mathcal R}_{\mathrm{usr}}^{(M)}(\tilde Q)\),
\[
R_{\mathrm{usr}}(\tilde Q)
=
\mathbb E_{c\sim\tilde Q}
\mathcal R_{\mathrm{usr}}(c),
\qquad
\bar{\mathcal R}_{\mathrm{usr}}^{(M)}(\tilde Q)
=
\mathbb E_{c\sim\tilde Q}
\widehat{\mathcal R}_{\mathrm{usr}}^{(M)}(c).
\]
By convexity of \(F_{\mathrm{Env}}^{\mathrm{usr}}\), using Jensen's inequality we find 
\[
F_{\mathrm{Env}}^{\mathrm{usr}}
\left(
R_{\mathrm{usr}}(\tilde Q),
\bar{\mathcal R}_{\mathrm{usr}}^{(M)}(\tilde Q)
\right)
\le
\mathbb E_{c\sim\tilde Q}
F_{\mathrm{Env}}^{\mathrm{usr}}
\left(
\mathcal R_{\mathrm{usr}}(c),
\widehat{\mathcal R}_{\mathrm{usr}}^{(M)}(c)
\right).
\]
Applying Lemma~\ref{donsker-v} again with \(Q=\tilde Q\), \(P=P_U\), and
\[
f(c)
=
M F_{\mathrm{Env}}^{\mathrm{usr}}
\left(
\mathcal R_{\mathrm{usr}}(c),
\widehat{\mathcal R}_{\mathrm{usr}}^{(M)}(c)
\right).
\]
Then
\[
\mathbb E_{c\sim\tilde Q}
F_{\mathrm{Env}}^{\mathrm{usr}}
\left(
\mathcal R_{\mathrm{usr}}(c),
\widehat{\mathcal R}_{\mathrm{usr}}^{(M)}(c)
\right)
\le
\frac1M
\left[
D_{\mathrm{KL}}(\tilde Q\Vert P_U)
+
\log Z_{\mathrm{Env}}
\right],
\]
where
\[
Z_{\mathrm{Env}}
:=
\mathbb E_{c\sim P_U}
\exp
\left(
M F_{\mathrm{Env}}^{\mathrm{usr}}
\left(
\mathcal R_{\mathrm{usr}}(c),
\widehat{\mathcal R}_{\mathrm{usr}}^{(M)}(c)
\right)
\right).
\]
By Lemma~\ref{lem:env_exp_from_subgaussian_usr},
\[
\mathbb E_{\theta_1,\ldots,\theta_M}[Z_{\mathrm{Env}}]\le C_{\mathrm{Env}}.
\]
Hence, by Markov's inequality, with probability at least \(1-\delta/2\),
\[
Z_{\mathrm{Env}}
\le
\frac{2C_{\mathrm{Env}}}{\delta}.
\]

Consequently, with probability at least \(1-\delta/2\),
\[
F_{\mathrm{Env}}^{\mathrm{usr}}
\left(
R_{\mathrm{usr}}(\tilde Q),
\bar{\mathcal R}_{\mathrm{usr}}^{(M)}(\tilde Q)
\right)
\le
\frac{
D_{\rm KL}(\tilde Q\Vert P_U)
+
\log\frac{2C_{\mathrm{Env}}}{\delta}
}{M}.
\]
Using the calibration property of \(F_{\mathrm{Env}}^{\mathrm{usr}}\), we obtain
\[
R_{\mathrm{usr}}(\tilde Q)
\le
\bar{\mathcal R}_{\mathrm{usr}}^{(M)}(\tilde Q)
+
G_{\mathrm{Env}}^{\mathrm{usr}}(B_e),
\]
where 
\[
B_e
:=
\frac{
D_{\rm KL}(\tilde Q\Vert P_U)
+
\log\frac{2C_{\mathrm{Env}}}{\delta}
}{M}.
\]
\paragraph{Combining the two levels.}
Intersecting the high-probability events from Step 1 and Step 2 gives an event
of probability at least \(1-\delta\). On this event,
\[
R_{\mathrm{usr}}(\tilde Q)
\le
\bar{\mathcal R}_{\mathrm{usr}}^{(M)}(\tilde Q)
+
F_{\mathrm{Env}}^{\mathrm{usr}}
\left(
\frac{
D_{\mathrm{KL}}(\tilde Q\Vert P_U)
+
\log\frac{2}{\delta}
}{M}
\right),
\]
and
\[
\bar{\mathcal R}_{\mathrm{usr}}^{(M)}(\tilde Q)
\le
\widehat{\mathcal R}_{\mathrm{usr}}^{(M,N)}(\tilde Q)
+
F_{\mathrm{Task}}^{\mathrm{usr}}
\left(
\frac{
D_{\mathrm{KL}}(\tilde Q\Vert P_U)
+
\log\frac{2M}{\delta}
}{N}
\right).
\]
Substituting the second inequality into the first gives
\begin{align}
R_{\mathrm{usr}}(\tilde Q)
\le
\widehat{\mathcal R}_{\mathrm{usr}}^{(M,N)}(\tilde Q)
+
F_{\mathrm{Task}}^{\mathrm{usr}}
\left(
\frac{
D_{\mathrm{KL}}(\tilde Q\Vert P_U)
+
\log\frac{2M}{\delta}
}{N}
\right)
+
F_{\mathrm{Env}}^{\mathrm{usr}}
\left(
\frac{
D_{\mathrm{KL}}(\tilde Q\Vert P_U)
+
\log\frac{2}{\delta}
}{M}
\right),\label{thrm5_nonspecific_bound}
\end{align}
as required.

Moreover, to obtain an explicit bound we first note that
by setting \( F_{\mathrm{Env}}^{\mathrm{usr}}(a,b)=\lambda_e(a-b)^2,\) implies that \(k^{\rm usr}_e=1\) and \(G_{\mathrm{Env}}^{\mathrm{usr}}(u)=\sqrt{\frac{u}{\lambda_e}}\). Using this and after substituting the result of \eqref{thrm4_specific_bound} into \eqref{thrm5_nonspecific_bound},  we then deduce that
\begin{align}
 R_{\mathrm{usr}}(\tilde Q)
\le
\widehat{\mathcal R}_{\mathrm{usr}}^{(M,N)}(\tilde Q)
 +\sqrt{\frac{
D_{\mathrm{KL}}(\tilde Q\Vert P_U)
+
\log\frac{2M}{\delta}
}{N}} +\lambda_e ^{-\frac{1}{2}}\sqrt{
\frac{
D_{\rm KL}(\tilde Q\Vert P_U)
+
\log\frac{2C_{\mathrm{Env}}}{\delta}
}{
M
}
}.
\end{align}
\end{proof}

\end{document}